\documentclass{article} % For LaTeX2e
\usepackage{iclr2027_conference,times}
\usepackage{graphicx}
\usepackage{amsthm} 
\usepackage{amsmath,amssymb,thmtools}

\newtheorem{corollary}{Corollary}

\newtheorem{proposition}{Proposition}

\usepackage{amsmath,amsfonts,bm}

\def\eqref#1{equation~\ref{#1}}
\def\1{\bm{1}}

\DeclareMathAlphabet{\mathsfit}{\encodingdefault}{\sfdefault}{m}{sl}
\SetMathAlphabet{\mathsfit}{bold}{\encodingdefault}{\sfdefault}{bx}{n}

\usepackage{hyperref}
\usepackage{url}

\title{Isotropic Gaussian Processes Improve Vanilla Bayesian Optimization in High Dimensions}

\author{
Wei-Ting Tang, Madhav Muthyala \& Joel A. Paulson \\
Department of Chemical and Biological Engineering\\
University of Wisconsin--Madison\\
Madison, WI 53706, USA\\
\texttt{\{wtang98,mmuthyala,joel.paulson\}@wisc.edu}
}

\let\bs\boldsymbol
\iclrfinalcopy % Uncomment for camera-ready version, but NOT for submission.
\begin{document}

\maketitle
\lhead{Preprint}

\begin{abstract}
High-dimensional Bayesian optimization (BO) often fits Gaussian process (GP) surrogates from far fewer observations than input dimensions. Modern Vanilla BO can perform well in this regime with dimension-aware priors, initialization, and acquisition optimization, but it typically retains automatic relevance determination (ARD), fitting one lengthscale per input coordinate. We study this modeling choice and propose Iso-BO, a controlled modification that replaces the ARD GP with an isotropic GP using one shared lengthscale while keeping the surrounding BO pipeline matched. For radial kernels, we show that the marginal log likelihood (MLL) depends on the inverse-squared ARD lengthscales only through weighted pairwise distances among the observed inputs. The current design can therefore leave some ARD directions exactly invisible or only weakly constrained by the MLL. Iso-BO removes coordinatewise reweighting and fits a single shared scale instead. Lengthscale-fitting and predictive-density diagnostics show that this finite-data effect appears in practice, including when the data-generating process is anisotropic. Across GP-prior, synthetic, and real-world benchmarks, Iso-BO often improves over matched modern Vanilla BO and remains competitive with the included high-dimensional BO baselines under the tested budgets. Stress tests also show the expected boundary wherein sufficiently strong, learnable anisotropy can favor the more flexible ARD model.
\end{abstract}

\section{Introduction}
\label{sec:introduction}

High-dimensional Bayesian optimization (HDBO) is often framed as a search problem: how to select useful evaluations in a large (high-dimensional) input space. An equally important challenge is learning a useful surrogate from the limited observations available. In BO based on Gaussian process (GP) surrogates, this means estimating a covariance structure that is informative enough to guide the search without asking the data to support more flexibility than they contain. This tension becomes especially acute when the number of observations is small relative to the input dimension.

Many HDBO methods address this challenge by imposing structure through embeddings, sparsity, additive decompositions, trust regions, or other geometric restrictions \citep{binois2022survey,wang2023recent}. At the same time, recent work has shown that exact GP Vanilla BO can be a surprisingly strong high-dimensional baseline when its lengthscale priors, initialization, and acquisition optimization are adapted to dimension \citep{hvarfner2024vanilla,xu2024standard,papenmeier2025understanding}. These results prompted us to revisit a more basic part of the GP fitting procedure. If the surrounding modern Vanilla BO pipeline is already designed for high dimensions, should the GP still fit a separate lengthscale for every input coordinate?

Automatic relevance determination (ARD) does exactly this. With common radial kernels such as the squared exponential (SE) or Mat\'ern, a separate lengthscale for each coordinate allows the GP to scale distance differently along different input directions. This flexibility is useful when coordinatewise smoothness differs and the available data are sufficient to learn those differences. With limited data, however, the same model must estimate $D$ relative coordinate scales from one finite training design. We show that this difficulty has a specific structure: for radial kernels, the training covariance and marginal log likelihood (MLL) depend on the inverse-squared ARD lengthscales only through weighted pairwise distances among the observed inputs. The current design can therefore leave some ARD directions exactly invisible to the MLL and constrain others only weakly. A MAP prior can regularize such directions, but that preference comes from the prior rather than additional likelihood information in the observed data. Importantly for BO, lengthscale settings that are indistinguishable on the training points can still imply different covariances between a new candidate and the observed data, leading to different posterior predictions and acquisition values.

Motivated by this observation, we propose and study \textbf{Isotropic Bayesian Optimization (Iso-BO)}. Iso-BO makes one controlled change to modern Vanilla BO: instead of fitting $D$ ARD lengthscales, it fits a single shared lengthscale (without changing any other component). In geometric terms, ARD learns how strongly each coordinate contributes to kernel distance, whereas Iso-BO fixes those relative weights and learns only the overall radial scale. The point is not that high-dimensional objectives are inherently isotropic. Rather, Iso-BO tests whether it can be better, at finite BO budgets, to avoid estimating coordinatewise structure that current data may not reliably determine.

We test this idea first at the surrogate level and then inside the BO loop. Controlled fitting experiments examine how the estimated lengthscales behave as data are added, while held-out predictive-density experiments test whether the difference affects posterior quality. Our main predictive diagnostic and GP-prior BO experiments deliberately use anisotropic data-generating processes, so the isotropic GP is misspecified by construction. Even in this setting, Iso-BO often improves over its matched ARD counterpart under the tested budgets, showing that the benefit of a simpler fit can outweigh the loss of coordinatewise flexibility. We then extend the comparison to synthetic and real-world HDBO benchmarks. The tradeoff is also visible in the opposite direction: under the strongest anisotropy tested, the matched ARD model reaches lower final median values, providing a useful boundary for the isotropic approximation. Furthermore, Iso-BO substantially reduces repeated hyperparameter-fitting time in our implementation because only one lengthscale is optimized.

More broadly, the results suggest that surrogate flexibility may need to track how much structure the current data can support learning; Appendix~\ref{app:BO-FIM} explores this idea with BO-FIM, an exploratory extension that uses local Fisher information about the ARD parameters to adapt between isotropic and ARD models as data accumulate.

In summary, our main contributions are:
\begin{enumerate}
    \item We characterize ARD lengthscale fitting through the finite-data geometry of radial kernels. The observed pairwise-distance map can leave ARD directions exactly invisible or weakly informed; a Fisher information bound shows that weakly informed directions can persist even when this map has full column rank. We also show that unresolved ARD directions can change new-point predictions and, in the exact-null case, that isotropic prediction is locally as robust as the best predictor under this ambiguity.

    \item We introduce Iso-BO as a controlled intervention inside modern Vanilla BO: it replaces $D$ coordinatewise ARD lengthscales with one shared lengthscale while otherwise matching the primary BO pipeline.

    \item We evaluate this tradeoff through several lengthscale-fitting and predictive-density diagnostics and BO experiments on GP-prior, synthetic, and real-world benchmarks. Iso-BO often improves over matched Vanilla BO under the tested budgets, including on deliberately anisotropic objectives, while strong-anisotropy stress tests identify a regime where the added ARD flexibility is useful.
\end{enumerate}

\section{Related work}
\label{sec:related-work}

Most HDBO methods reduce the difficulty of the original problem by imposing structure on either the objective or the search. Random-embedding methods optimize in a lower-dimensional representation of the input space \citep{wang2016bayesian,letham2020re,nayebi2019framework,papenmeier2022increasing}, while sparse and variable-selection approaches seek a smaller set of active coordinates or subspaces \citep{eriksson2021high,hoang2025high,hellsten2025leveraging}. Additive methods instead model the objective through lower-dimensional components \citep{kandasamy2015high,gardner2017discovering,rolland2018high,ziomek2023random}. A complementary line of work uses locality, most notably trust-region methods such as TuRBO \citep{eriksson2019scalable} and related local BO approaches \citep{muller2021local,wu2023behavior,nguyen2022local,tang2026rethinking}. These methods can be effective when their structural assumptions are appropriate, but they alter the model structure, search geometry, or region over which the surrogate is used.

The most direct context for our work is the recent re-evaluation of Vanilla BO in high dimensions. \citet{hvarfner2024vanilla} showed that dimension-scaled lengthscale priors make exact-GP Vanilla BO substantially stronger than earlier baselines. \citet{xu2024standard} emphasized dimension-aware lengthscale initialization, while \citet{papenmeier2025understanding} highlighted the importance of acquisition optimization, including random axis-aligned subspace perturbations (RAASP). Together, these results show that much of the apparent failure of Vanilla BO in high dimensions can come from how the standard GP pipeline is configured. We start from this stronger baseline and revisit a remaining modeling choice: whether the GP should fit one lengthscale per input when the BO dataset is small.

There are several related ways to regularize or simplify the surrogate. ARD is standard in GP modeling \citep{williams2006gaussian}, and its lengthscales can be regularized through sparsity-inducing priors \citep{eriksson2021high}, dimension-scaled priors \citep{hvarfner2024vanilla}, or dimension-aware initialization \citep{xu2024standard}. Other methods make larger changes to the surrogate geometry: Linear BO \citep{doumont2025we} uses a linear kernel together with geometric transformations, while BOCK \citep{oh2018bock} and Elastic GP \citep{rana2017high} modify the kernel or input geometry. Iso-BO takes a narrower approach in that it retains the nonlinear radial GP family and asks whether the current finite dataset supports fitting coordinatewise ARD flexibility.

\section{Problem setup and Iso-BO}
\label{sec:setup-method}

\paragraph{BO with GP surrogates.} We consider minimization of an expensive black-box function $f:\mathcal X\rightarrow\mathbb R$ with evaluation budget $N$, using normalized inputs $\mathcal X=[0,1]^D$. After $n$ evaluations, the data are $\mathcal D_n=\{(\bs x_i,y_i)\}_{i=1}^n$, with (noisy) observation model $y_i=f(\bs x_i)+\epsilon_i$ and $\epsilon_i\sim\mathcal N(0,\sigma_\epsilon^2)$. BO alternates between fitting a surrogate to these observations and optimizing an acquisition function that scores candidate points using the surrogate posterior. Its decisions therefore depend directly on the posterior mean and uncertainty supplied by the GP.

Let $f\sim\mathcal{GP}(m,k_\theta)$ denote a GP model, with training inputs $X=(\bs x_1,\ldots,\bs x_n)^\top$, observations $y=(y_1,\ldots,y_n)^\top$, and $z=y-m_X$, where $m_X=(m(\bs x_1),\ldots,m(\bs x_n))^\top$. Writing $C_\theta=K_\theta(X,X)+\sigma_\epsilon^2I_n$, standard GP conditioning gives \citep{williams2006gaussian}
\begin{align*}
    \mu_n(\bs x)
    =m(\bs x)+k_\theta(\bs x,X)C_\theta^{-1}z, \qquad s_n^2(\bs x)
    =k_\theta(\bs x,\bs x)
      -k_\theta(\bs x,X)C_\theta^{-1}k_\theta(X,\bs x).
\end{align*}
A poor estimate of the covariance geometry can therefore affect both which locations appear promising and how uncertain the GP is away from the observed data.

The covariance hyperparameters must be learned from the same finite dataset. For a fixed mean function, the MLL is
\[
    \mathcal L(\theta)
    =
    -\frac{1}{2}z^\top C_\theta^{-1}z
    -\frac{1}{2}\log|C_\theta|
    -\frac{n}{2}\log(2\pi).
\]
Maximum likelihood estimation (MLE) maximizes $\mathcal L(\theta)$, while maximum a posteriori (MAP) estimation maximizes $\mathcal J(\theta)=\mathcal L(\theta)+\log\pi(\theta)$ for prior $\pi(\theta)$. The MLL is central to our analysis because it is the part of the fitting objective through which the observed data distinguish different lengthscale settings; the prior can regularize the fit, but its contribution is separate from the likelihood information in the observed data. In all GP fits, we standardize the observed responses using the mean and standard deviation of the current training set. On this scale, Vanilla-MAP and Iso-MAP fix the kernel signal variance at one; we retain $\sigma_f^2$ in the kernel definitions below for generality.

\paragraph{ARD and isotropic radial kernels.} We focus on radial kernels, which measure similarity through a scaled distance between two inputs. The SE and Mat\'ern~$5/2$ kernels used in this work are standard examples. With ARD, this distance has a separate scale in each coordinate:
\begin{align}
    k_{\mathrm{ARD}}(\bs x,\bs x')
    &=
    \sigma_f^2
    \psi\!\left(\rho_\eta(\bs x,\bs x')\right),
    \qquad
    \rho_\eta(\bs x,\bs x')
    =
    \textstyle\sum_{k=1}^D \eta_k(x_k-x_k')^2,
    \qquad
    \eta_k=\ell_k^{-2}.
    \label{eq:ard-radial-kernel}
\end{align}
Here, $\ell_k$ controls how quickly correlation decays along dimension/coordinate $k$. A short lengthscale makes even a modest change in that coordinate appear far apart to the GP, whereas a long lengthscale makes the GP relatively insensitive to movement in that direction. ARD therefore learns an axis-aligned metric on the input space: the contours of $\rho_\eta$ are ellipsoids whose shape is determined by the $D$ fitted lengthscales. This flexibility is useful when different coordinates genuinely vary on different scales, but it also requires the finite BO dataset to determine that $D$-dimensional geometry.

The isotropic model asks the data to learn less. It uses one shared lengthscale $\ell$, or $\lambda=\ell^{-2}$,
\begin{align}
    k_{\mathrm{iso}}(\bs x,\bs x')
    &=
    \sigma_f^2
    \psi\!\left(
        \lambda\|\bs x-\bs x'\|_2^2
    \right),
    \qquad
    \lambda\in\mathbb R_{++}.
    \label{eq:iso-radial-kernel}
\end{align}
Equivalently, Eq.~\ref{eq:iso-radial-kernel} restricts Eq.~\ref{eq:ard-radial-kernel} to $\eta=\lambda\mathbf 1_D$. The kernel family itself has not changed. What changes is the geometry that must be inferred from the data; ARD learns both the overall scale and the relative scaling of the coordinate axes, whereas the isotropic model fixes the latter and learns only one radial scale. This is the modeling choice we isolate in Iso-BO.

\paragraph{Iso-BO.} Iso-BO makes this single change inside an otherwise matched exact-GP BO pipeline. Our primary comparison is between \textbf{Vanilla-MAP}, which fits the $D$ ARD lengthscales, and \textbf{Iso-MAP}, which fits one shared lengthscale. Both methods otherwise use the same normalized inputs, response standardization, radial kernel family, MAP fitting procedure, initialization, acquisition function, and acquisition optimization procedure.

We retain the dimension-aware lengthscale-prior construction that has been important to the success of modern Vanilla BO \citep{hvarfner2024vanilla}. Vanilla-MAP applies the ambient-$D$-scaled univariate prior to each of its $D$ lengthscales, while Iso-MAP applies the same construction once to its shared lengthscale. This keeps the established prior scaling rather than introducing a separate regularizer for Iso-BO. It also means that the two MAP objectives are not exactly the same objective evaluated in different dimensions: at the likelihood level the isotropic model is the restriction $\eta=\lambda\mathbf 1_D$, but Vanilla-MAP contains $D$ lengthscale-prior contributions whereas Iso-MAP contains one. Section~\ref{sec:theory-package-b} and Appendix~\ref{app:isotropic-restriction-proof} make this distinction precise.

We use the name \emph{Iso-BO} for the shared-lengthscale modification and \emph{Iso-MAP} for its primary MAP implementation. We also report an MLE-based version, Iso-MLE, as a secondary variant. Full implementation details are given in Appendix~\ref{app:experimental-details}.

\section{Finite-data information about ARD lengthscales}
\label{sec:theory-package-b}

The issue is not merely that ARD fits $D$ lengthscales, but which coordinatewise changes the MLL can distinguish from the current training design. We analyze one GP fit with $X$ fixed and use the inverse-squared lengthscales $\eta$ from Eq.~\ref{eq:ard-radial-kernel}. To isolate the lengthscale geometry, we hold the mean, signal variance, noise variance, kernel family, and other non-lengthscale quantities fixed. This does not require those quantities to be fixed in the experiments: for any values they take, the exact lengthscale invariance below remains.

\paragraph{What the MLL sees.}
For each unordered pair of training inputs $i<j$, define $a_{ij}=(\bs x_i-\bs x_j)^{\odot 2}\in\mathbb R_+^D$. Let $P_n=n(n-1)/2$ and let $A_X\in\mathbb R^{P_n\times D}$ have rows $a_{ij}^\top$. Then $\rho_\eta(\bs x_i,\bs x_j)=a_{ij}^\top\eta$, so $A_X\eta$ collects the weighted squared distances among all observed input pairs. Because the diagonal kernel entries do not depend on the lengthscales, the entire training covariance, and hence the MLL, depends on $\eta$ only through $A_X\eta$.

\noindent\textbf{Proposition~\ref{prop:pairwise-design-main} (informal).}
\textit{For fixed $X$ and non-lengthscale quantities, the ARD MLL can be written as $\mathcal L_{\mathrm{ARD}}(\eta)=G(A_X\eta)$ for some function $G$. Any feasible direction $h$ with $A_Xh=0$ therefore leaves the training covariance and MLL unchanged. A nonzero exact MLL-null direction is guaranteed whenever $D>\operatorname{rank}(A_X)$, and in particular whenever $D>n(n-1)/2$. More generally, under the local derivative conditions, normalized directions with small $\|A_Xh\|_2$ have small local first- and second-order MLL sensitivity.}

The full statement and proof are given in Appendix~\ref{app:pairwise-design-proof}. With the 10-point initialization used in our BO experiments, $P_n=45$, so an exact MLL-null direction is guaranteed for every problem with $D>45$. Once $P_n\geq D$, this counting argument no longer guarantees such a direction. Appendix~\ref{app:weak-information} makes this weak-information regime quantitative: under mild local conditions, the Fisher information in direction $h$ is bounded by a constant times $\|A_Xh\|_2^2$. Thus, full rank does not imply that all ARD directions are well informed; directions that barely change the observed pairwise distances can still contain little local likelihood information.

\paragraph{Why MLL-null directions matter for BO.}
The main experiments use MAP fitting, so the prior must also be considered. If $h\in\ker(A_X)$, the MLL is constant along any feasible perturbation $\eta+th$; any change in the MAP objective along that direction therefore comes from the prior. A prior can regularize such a direction and select a stable fit, but it does not add likelihood information from the observed data (see Appendix~\ref{app:isotropic-restriction-proof}).

More importantly, two ARD settings that are indistinguishable on the training data need not make the same predictions. We call two settings \emph{training-equivalent} when they give the same regularized training covariance. An exact MLL-null perturbation preserves this covariance, but generally changes the covariance between a new candidate and the training points. For the SE and Mat\'ern~$5/2$ kernels considered here, Appendix~\ref{app:posterior-sensitivity} shows that, for any fixed nonzero exact-null perturbation, the train-test covariance changes for almost every candidate in a full-dimensional region. The posterior mean or variance can therefore change even though the training MLL is identical. Since the acquisition function is evaluated from these posterior quantities, such changes can alter the preferred candidate and, through sequential sampling, the subsequent BO trajectory. Whether this effect is practically important is an empirical question addressed in Sections~\ref{sec:gp-fitting-diagnostics} and~\ref{sec:experiments}.

\paragraph{What Iso-BO changes.}
Iso-BO restricts the lengthscale parameterization to $\eta=\lambda\mathbf 1_D$. At the MLL level,
$\mathcal L_{\mathrm{iso}}(\lambda)=\mathcal L_{\mathrm{ARD}}(\lambda\mathbf 1_D)=G(\lambda A_X\mathbf 1_D)$,
where $(A_X\mathbf 1_D)_{ij}=\|\bs x_i-\bs x_j\|_2^2$. The fit therefore retains one shared radial scale for the ordinary Euclidean pairwise distances while removing coordinatewise reweighting.

This restriction does not create additional information. Instead, it removes coordinatewise reweighting that the current data may not support. Appendix~\ref{app:weak-information} gives a complementary local result for the exact-null case. When two nearby ARD perturbations around a shared scale are indistinguishable from the training data, the isotropic posterior mean is locally as robust as any predictor that must choose without knowing which perturbation is correct. The tradeoff is that strong and learnable anisotropy can make the isotropic restriction too rigid.

\section{Diagnostics of finite-data GP fitting}
\label{sec:gp-fitting-diagnostics}

Section~\ref{sec:theory-package-b} characterizes what the current training design can tell the MLL about the ARD lengthscales. Before studying BO performance, we isolate the GP fit and ask two simpler questions. First, does the finite-data sensitivity appear in the fitted lengthscales? Second, does tying the lengthscales improve prediction when the underlying function is not isotropic? These diagnostics use training sizes beyond the regime where Proposition~\ref{prop:pairwise-design-main} guarantees an exact null direction, so they also test whether the issue remains visible when only weakly constrained ARD directions may be present.

\paragraph{Controlled lengthscale fitting.} We first use a setting where the target scale is known. Objective functions are sampled from an isotropic SE GP prior with shared lengthscale $\ell=1$, and the training inputs are Sobol points. We fit isotropic and ARD SE GPs by MAP using either gradient descent (GD) or L-BFGS-B~\citep{zhu1997algorithm}. For the ARD model, Figure~\ref{fig:Fitted lscale} reports the arithmetic mean of the fitted coordinatewise lengthscales as a one-dimensional summary.

The isotropic fits remain close to the known shared scale, while the ARD fit is substantially more sensitive to the hyperparameter optimizer, particularly in 50D and 100D. L-BFGS-B moves the ARD average well above the ground truth, whereas GD remains closer to the prior-mode region. Appendix~\ref{app:optimizer-sensitivity-ard-map} shows that L-BFGS-B generally reaches a higher MAP objective than the tested GD fits, so the optimizer dependence is not simply a comparison between equivalent fitted solutions. We use this result as a fitting diagnostic rather than as evidence that optimizer choice causes the underlying problem: the structural issue in Section~\ref{sec:theory-package-b} comes from what the training design constrains.

\begin{figure}[t]
\centering
\includegraphics[width=0.85\textwidth]{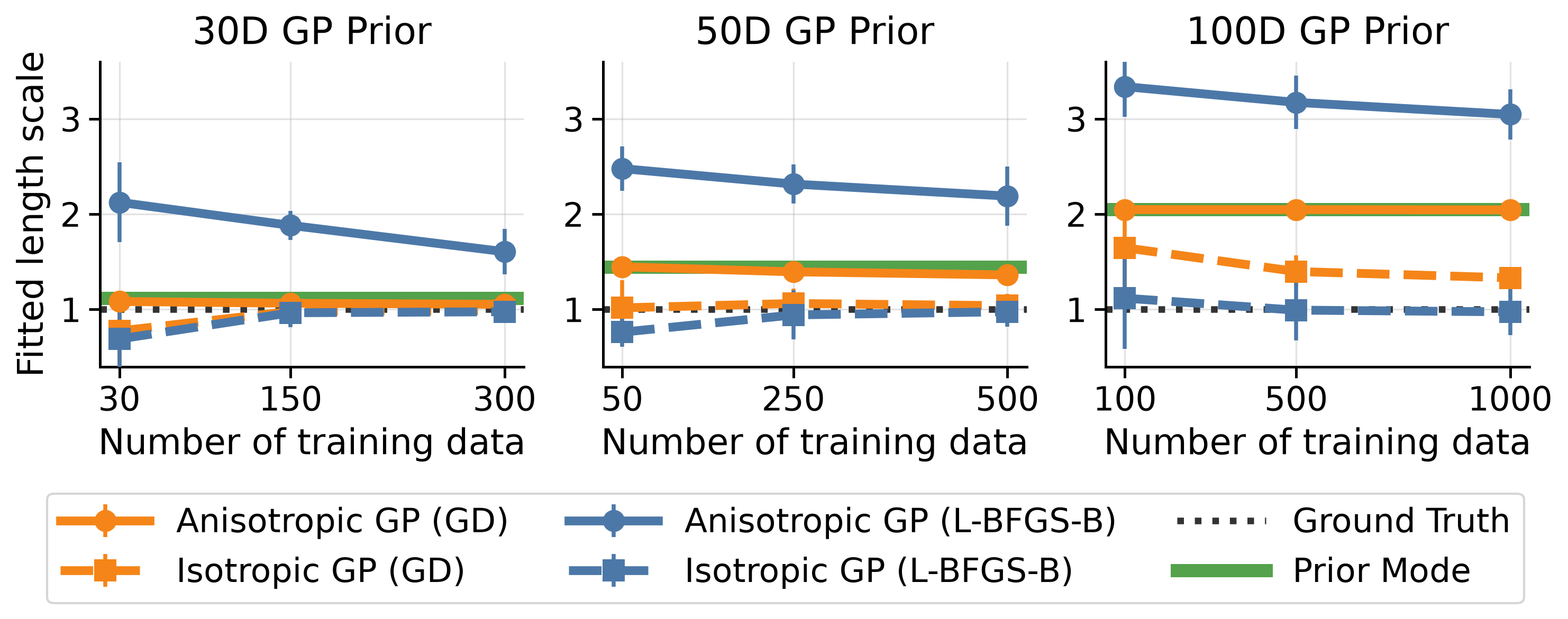}
\caption{Controlled lengthscale fitting on isotropic SE GP-prior realizations with ground-truth lengthscale $\ell=1$. Isotropic and ARD GPs are fit by MAP using L-BFGS-B or GD, with Sobol training inputs. Curves show the mean and $95\%$ confidence interval over 10 independent realizations; the dotted line is the ground truth and the green line is the prior mode. For ARD fits, the plotted value is the arithmetic mean across coordinatewise lengthscales and is used only as a scalar summary of the fitted ARD vector.}
\label{fig:Fitted lscale}
\end{figure}

\paragraph{Held-out predictive density.} The more important question is whether the parameterization changes the posterior predictions used by BO. We therefore remove the favorable assumption of isotropic ground truth. Objective functions are sampled from anisotropic SE GP priors, with coordinatewise lengthscales drawn as described in Appendix~\ref{app:gp-prior-objectives} using $\gamma=0.3$. For each realization, isotropic and ARD SE GPs are fit by MAP with L-BFGS-B to $n$ Sobol training points. We then evaluate average negative log predictive density (NLPD) on the remaining points from a 5000-point evaluation set, using only the training-set statistics for response standardization. Lower NLPD means that the fitted predictive distribution assigns higher predictive density to the held-out responses.

Figure~\ref{fig:NLPD_GPprior} shows lower NLPD for the isotropic fits at every reported dimension and training size. This is the more informative diagnostic because the generating GP has genuinely different coordinatewise lengthscales, so the isotropic model is misspecified by construction. Under these conditions, the benefit of estimating one shared scale outweighs the loss of ARD flexibility for predictive density. The gap generally narrows as more observations are added, which is also consistent with the finite-data interpretation. Appendix~\ref{app:BO trace NLPD} repeats the comparison using chronological splits of completed BO trajectories, and Appendix~\ref{app:synthetic-nlpd-diagnostic} gives complementary results on synthetic objectives.

\begin{figure}[t]
\centering
\includegraphics[width=0.85\textwidth]{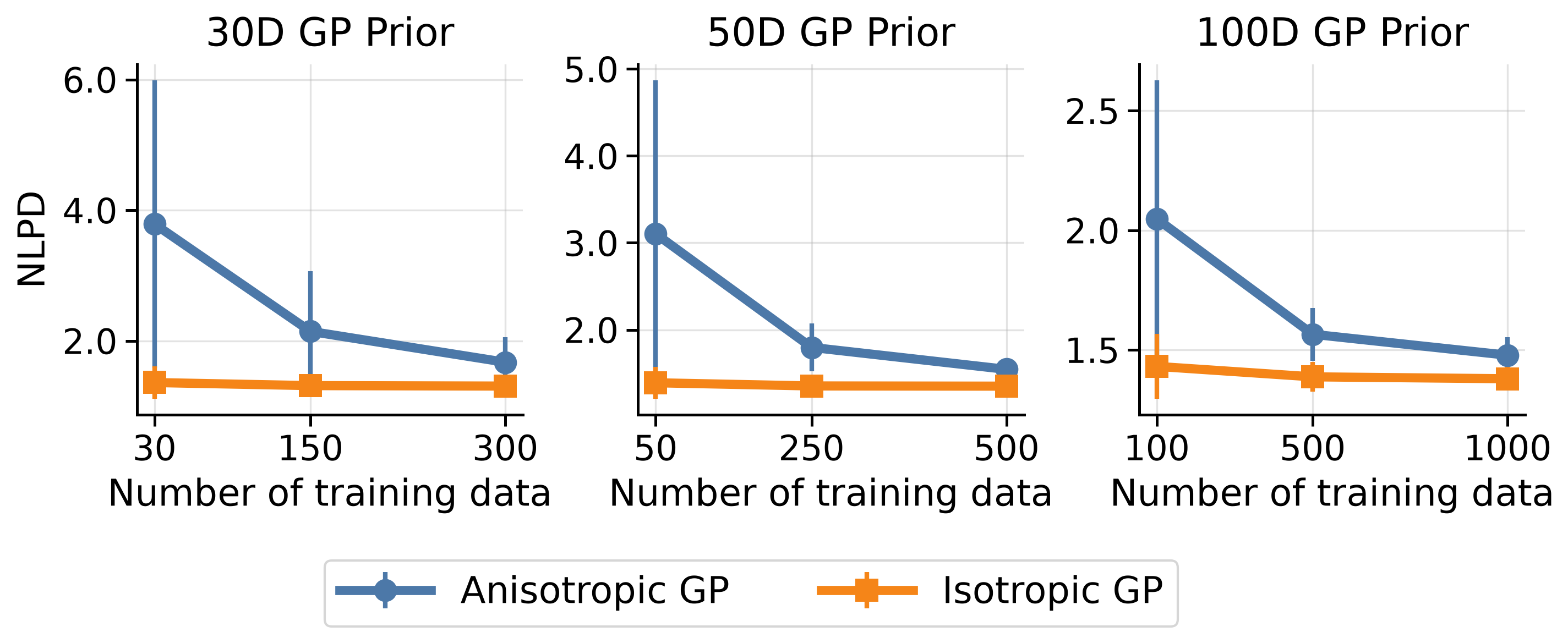}
\caption{Held-out NLPD for isotropic and ARD SE GP fits on functions sampled from anisotropic SE GP priors with $\gamma=0.3$. Both models are fit by MAP with L-BFGS-B using the same Sobol training inputs; held-out responses are standardized using training-set statistics. Curves show the mean and $95\%$ confidence interval over 20 independent GP-prior realizations. Lower NLPD is better, and the isotropic fit has lower NLPD throughout the reported finite-data regimes.}
\label{fig:NLPD_GPprior}
\end{figure}

% These diagnostics show that the lengthscale parameterization changes both the fitted model and its predictive distribution. We next test whether those differences are useful once the surrogate is embedded in the sequential BO loop.

\section{Bayesian optimization experiments}
\label{sec:experiments}

The surrogate diagnostics show that tying the lengthscales can change the fitted GP and its predictive distribution. We now ask whether that difference translates into better sequential optimization. The Vanilla-MAP--Iso-MAP comparison is our main test because the surrounding pipeline is held fixed; the additional HDBO methods provide broader context for how this surrogate change performs relative to methods that modify the search strategy or impose stronger structural assumptions.

\subsection{BO setup}
\label{sec:bo-setup}

The main experiments use an SE kernel for both Vanilla-MAP and Iso-MAP. As defined in Section~\ref{sec:setup-method}, the two methods differ in whether the GP fits $D$ ARD lengthscales or one shared lengthscale. Both begin with 10 Sobol points and use LogEI~\citep{ament2023unexpected}, MAP hyperparameter fitting, and the same RAASP-style acquisition initialization and optimization machinery~\citep{papenmeier2025understanding}. Appendix~\ref{app:Matern} repeats the matched comparison with Mat\'ern~$5/2$ kernels.

For broader HDBO context, we include Linear BO~\citep{doumont2025we}, BAxUS~\citep{papenmeier2022increasing}, TuRBO~\citep{eriksson2019scalable}, SAASBO~\citep{eriksson2021high}, CMA-ES~\citep{hansen2006cma}, and Sobol sampling. We also report the MLE-based Iso-MLE variant alongside MSR~\citep{papenmeier2025understanding}. The current Iso-MLE and MSR main runs use different kernel families, so we treat them as contextual baselines rather than as the controlled test of isotropic versus ARD fitting. Full implementation details are given in Appendix~\ref{app:experimental-details}.

Unless stated otherwise, all methods use a budget of 1000 evaluations; SAASBO is limited to 500 (or less) because of its substantially higher computational cost. The figures report the median best-observed objective value over 10 independent runs, with shaded regions showing the interquartile range (IQR). All objectives use a minimization convention, so lower values are better.

\subsection{Out-of-model anisotropic GP-prior objectives}
\label{sec:out-of-model-experiments}

We begin with the cleanest BO test of the finite-data hypothesis. Objective functions are sampled from anisotropic SE GP priors in 100D, with a separate generating lengthscale in every coordinate. These functions are therefore deliberately outside the isotropic model class. The coordinatewise lengthscales follow the dimension-scaled construction in Appendix~\ref{app:gp-prior-objectives}, with $\gamma\in\{0.3,0.5,0.7\}$ controlling their spread. Each realization is approximated using $M=4096$ random Fourier features.

This setting gives ARD a clear modeling advantage: it contains the covariance family used to generate the objectives, whereas Iso-MAP does not. Nevertheless, Figure~\ref{fig:GPprior} shows that Iso-MAP reaches lower median best-observed values than matched Vanilla-MAP for all three levels of anisotropy. The advantage appears early and persists through the evaluation budget. The paired final-value analysis in Appendix~\ref{app:paired-analysis} gives the strongest evidence at $\gamma=0.3$ and $0.5$, while the median advantage remains in the same direction at $\gamma=0.7$. These results provide direct evidence for the tradeoff motivating Iso-BO: at finite sample sizes, reliably estimating one shared scale can be more useful for optimization than fitting the more expressive but harder-to-determine ARD geometry.

Appendix~\ref{app:stress-tests} pushes the same construction to $\gamma=0.9$ and $0.99$ to identify where this tradeoff reverses. The latter is an intentionally severe anisotropy test: at $\gamma=0.99$, the support of the generating-lengthscale distribution spans a factor of 199 between its endpoints. Vanilla-MAP reaches lower final median values in this regime, showing that sufficiently strong coordinatewise variation can make the additional ARD flexibility worth estimating. Together, the main and stress-test results support the intended interpretation of Iso-BO as a finite-data modeling choice rather than an assumption that the objective is truly isotropic.

\begin{figure}[t]
\centering
\includegraphics[width=\textwidth]{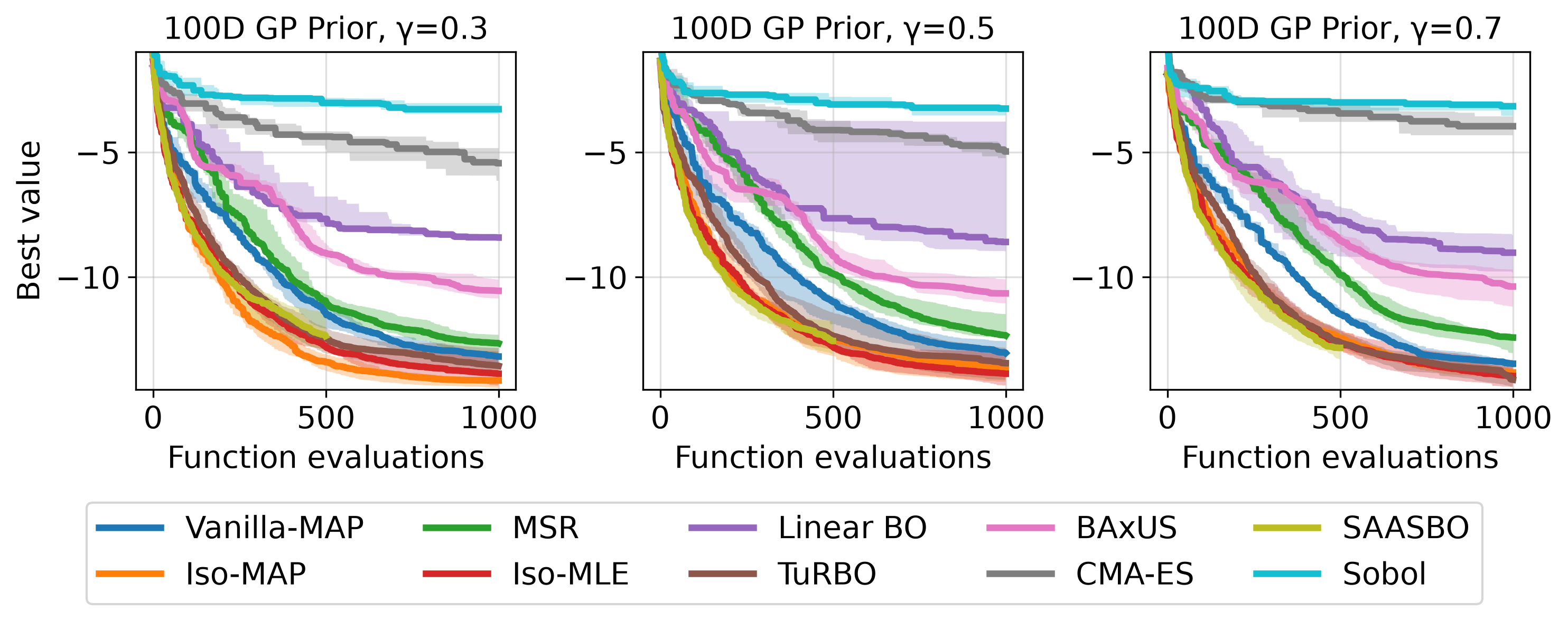}
\caption{Best-observed values on 100D anisotropic SE GP-prior objectives with $\gamma\in\{0.3,0.5,0.7\}$. Curves show the median and IQR over 10 runs; lower is better. The generating GP has coordinatewise lengthscales, so these objectives are deliberately misspecified for Iso-MAP. Despite this, Iso-MAP reaches lower median values than Vanilla-MAP in all settings.}
\label{fig:GPprior}
\end{figure}

\subsection{Synthetic non-GP benchmarks}
\label{sec:synthetic-experiments}

We next test whether the same benefit extends beyond GP-generated objectives. Figure~\ref{fig:synthetic} reports three standard 100D synthetic benchmarks with different objective geometries: Schwefel, Styblinski--Tang, and Powell. Their domains and definitions are given in Appendix~\ref{app:benchmark-definitions}.

Iso-MAP reaches lower final median values than matched Vanilla-MAP on all three functions, with the paired final-value analysis in Appendix~\ref{app:paired-analysis} further supporting this. The separation is substantial on Schwefel and Styblinski--Tang, while Powell provides a useful contrast in the broader HDBO comparison: TuRBO reaches the lowest median values there. These experiments show that the benefit is not confined to functions sampled from a GP prior and can extend to standard nonlinear black-box benchmarks. Schwefel and Styblinski--Tang are coordinate-symmetric, however, so we treat them as complementary non-GP evidence; the anisotropic GP-prior experiments remain the cleaner mechanistic test of the estimation-misspecification tradeoff.\footnote{BAxUS is omitted from the Styblinski--Tang and Powell panels because its standard subspace initialization produced evaluations very close to the global optimum in our implementation, causing the resulting traces to be dominated by initialization rather than subsequent optimization.}

\begin{figure}[t]
\centering
\includegraphics[width=\textwidth]{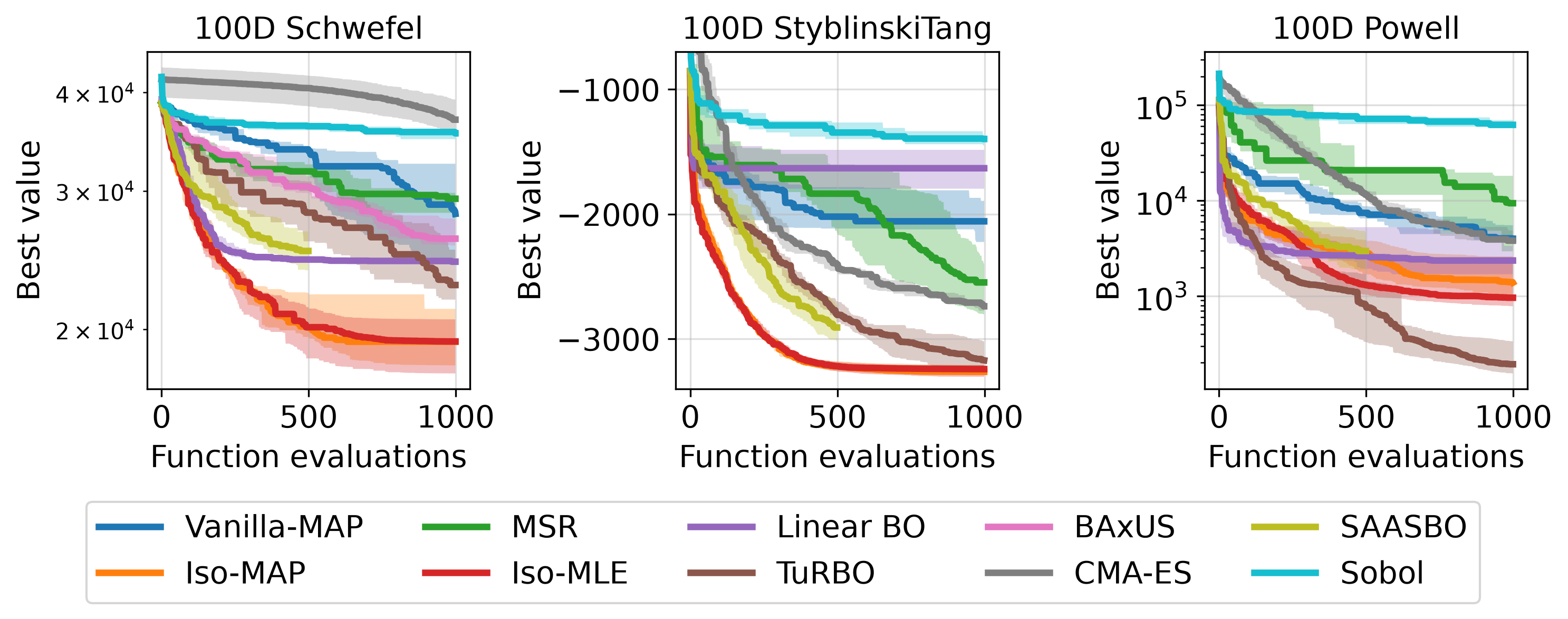}
\caption{Best-observed values on 100D Schwefel, Styblinski--Tang, and Powell. Curves show the median and IQR over 10 runs; lower is better. Iso-MAP reaches lower final median values than matched Vanilla-MAP on all three benchmarks. TuRBO attains the lowest median values on Powell.}
\label{fig:synthetic}
\end{figure}

\subsection{Real-world benchmarks}
\label{sec:trajectory-experiments}

We finally evaluate six high-dimensional benchmarks spanning path planning, controller tuning, action-sequence optimization, and hyperparameter optimization: 100D Rover~\citep{ngo2024high}, 102D HalfCheetah-v2~\citep{wang2020learning}, 300D Walker2d-v5, 388D SVM~\citep{eriksson2021high}, 1003D HumanoidStandup-v5~\citep{xu2024standard}, and 6392D Humanoid-v2~\citep{wang2020learning}. This range of dimensions and problem types provides a substantially less controlled test of whether the same simple surrogate modification remains useful outside synthetic settings.

Figure~\ref{fig:real-world} shows that the shared-lengthscale modification remains competitive across these less controlled settings. Iso-MAP is particularly strong on HalfCheetah-v2 and HumanoidStandup-v5, where the paired final-value analysis in Appendix~\ref{app:paired-analysis} supports lower final values than Vanilla-MAP. Rover also favors Iso-MAP, although the paired evidence is weaker. Iso-MAP reaches a lower final median on Walker2d-v5; on SVM the paired differences lean toward Vanilla-MAP, while Humanoid-v2 is mixed. The broader HDBO comparison varies across problems, as expected, but Iso-MAP remains among the strongest methods on several tasks. These results are notable because the primary comparison changes only the GP lengthscale parameterization, without introducing a trust region, embedding, variable-selection scheme, or new acquisition strategy.

Note that Appendices~\ref{app:extended-bo-evidence} and~\ref{app:batch BO} provide several additional results including paired final-value analyses, stress tests, kernel and computational checks, and extensions to parallel/batch BO and latent-space molecular design.

\begin{figure}[t]
\centering
\includegraphics[width=\textwidth]{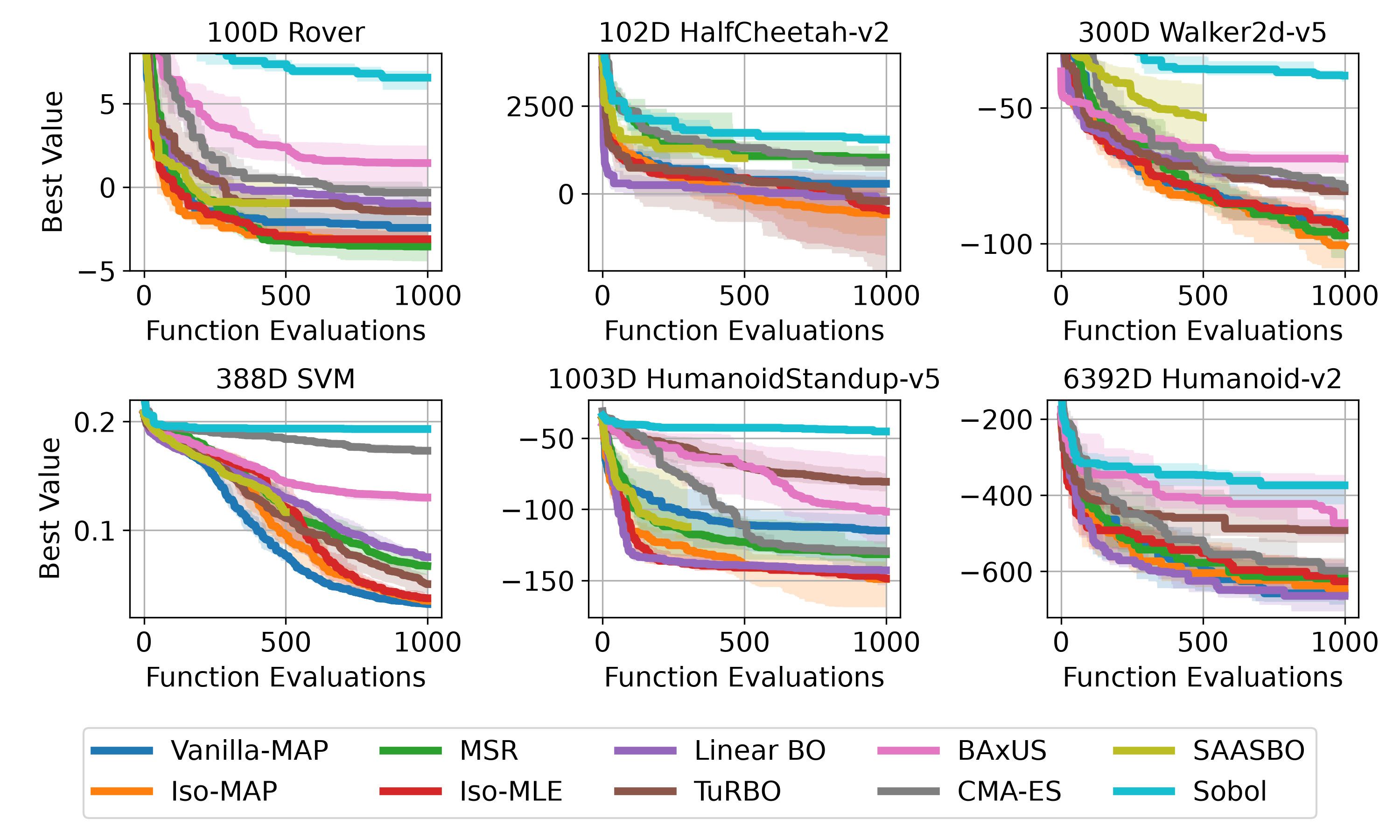}
\caption{Best-observed values on six high-dimensional non-synthetic benchmarks. Curves show the median and IQR over 10 runs; lower is better. Iso-MAP shows its clearest gains over matched Vanilla-MAP on Rover, HalfCheetah-v2, and HumanoidStandup-v5, while remaining competitive across the broader benchmark suite. SAASBO uses 300 evaluations on HumanoidStandup-v5 and is not run on Humanoid-v2. BAxUS is omitted from HalfCheetah-v2 because its standard subspace initialization produce substantially lower initial function value compared with other baseline.}
\label{fig:real-world}
\end{figure}

\section{Conclusion and future work}
\label{sec:conclusion}

In this work, we study a simple question in high-dimensional BO: how much coordinatewise GP structure can a finite dataset support learning? For radial kernels, the MLL depends on the inverse-squared ARD lengthscales only through weighted pairwise distances among the observed inputs, so some ARD directions can be exactly invisible to the MLL or only weakly constrained. Training-equivalent settings can nevertheless differ in their predictions at new candidate points. Iso-BO responds by fitting one shared lengthscale instead of $D$ coordinatewise values. Across surrogate diagnostics and BO experiments, this simpler parameterization is often beneficial under the tested budgets, including for deliberately anisotropic data, while also reducing hyperparameter fitting cost.

The results support an estimation-misspecification tradeoff rather than a universal preference for isotropy. Strong-anisotropy stress tests identify a regime where ARD flexibility becomes valuable, and our analysis does not establish a BO regret improvement. More broadly, the results suggest that surrogate complexity should reflect how much structure the current data can support learning. BO-FIM in Appendix~\ref{app:BO-FIM} provides an exploratory proof of concept for this idea and motivates more principled ways to increase surrogate flexibility as data accumulate.

\subsection*{AI use statement}

We used generative AI tools, primarily ChatGPT (GPT-5.6 Sol), to assist with manuscript editing, identifying potentially relevant literature, and critically checking and refining mathematical statements and proofs. All cited literature was verified against the original sources, and all AI-assisted technical content was independently reviewed by the authors. The authors take responsibility for the final content of this work.

\subsection*{Reproducibility statement}

Appendix~\ref{app:experimental-details} provides the benchmark definitions, GP and BO settings, baseline implementations, and compute details used in the experiments. Appendix~\ref{app:finite-data-fitting-proofs} gives the assumptions and proofs for the main theoretical results. Code for reproducing the reported experiments and figures is included with the supplementary material.

% \subsubsection*{Acknowledgments}
% The authors gratefully acknowledge support from the National Science Foundation (NSF) under CAREER Award No.~2621493.

\bibliographystyle{iclr2027_conference}
\bibliography{references}

\appendix

\section*{Appendix roadmap}

The appendix is organized to follow the logic of the paper. Appendix~\ref{app:finite-data-fitting-proofs} gives the full finite-data theory underlying the ARD lengthscale argument, including the role of MAP regularization, the bridge from ``training-equivalent fits'' to predictions at new candidate points, and the connection between weak ARD information and robust isotropic prediction. Appendix~\ref{app:additional-gp-diagnostics} provides additional surrogate-fitting and predictive-density diagnostics. Appendix~\ref{app:experimental-details} collects the benchmark definitions, GP and BO settings, baseline implementations, and other details needed to reproduce the experiments. Appendix~\ref{app:extended-bo-evidence} provides additional BO evidence and analysis, including final-value statistics, stress tests, robustness checks, query-spread behavior, and computational cost. Appendix~\ref{app:batch BO} extends the comparison to batch BO and latent space optimization for molecular design. Finally, Appendix~\ref{app:BO-FIM} presents BO-FIM as an exploratory information-adaptive extension that uses the Fisher information matrix (FIM) perspective developed in Appendix~\ref{app:finite-data-fitting-proofs} to adapt between isotropic and ARD models.

\section{Finite-data theory for ARD lengthscales}
\label{app:finite-data-fitting-proofs}

We use the notation from Section~\ref{sec:theory-package-b} and consider one fixed training set $\mathcal D_n=(X,y)$, with $X=(x_1,\ldots,x_n)^\top$, $m_X=(m(x_1),\ldots,m(x_n))^\top$, and $z=y-m_X$. We parameterize the ARD lengthscales by $\eta=(\ell_1^{-2},\ldots,\ell_D^{-2})^\top$. Throughout this appendix, the training inputs, mean function, signal variance, noise variance, kernel profile, and other non-lengthscale quantities are held fixed, and we consider feasible parameter values for which the regularized training covariance is positive definite. This isolates what the current training design can tell the likelihood about the ARD lengthscales. Allowing the other hyperparameters to be optimized does not remove the exact invariance below: for any fixed values of those quantities, the same lengthscale null directions leave the training kernel matrix unchanged.

\subsection{Pairwise-distance dependence of the ARD MLL}
\label{app:pairwise-design-proof}

We begin with the formal version of the result summarized in Section~\ref{sec:theory-package-b}. For each unordered training pair $i<j$, define $a_{ij}=(x_i-x_j)^{\odot 2}\in\mathbb R_+^D$. Let $P_n=n(n-1)/2$ and let $A_X\in\mathbb R^{P_n\times D}$ contain the vectors $a_{ij}^\top$ as its rows. Since $\rho_\eta(x_i,x_j)=a_{ij}^\top\eta$, the vector $A_X\eta$ collects all weighted squared distances between the observed input pairs.

\begin{proposition}[Pairwise-distance dependence of the ARD training covariance and MLL]
\label{prop:pairwise-design-main}
\label{prop:pairwise-design}
Fix $X$, $y$, $m_X$, $\sigma_f^2$, $\sigma_\epsilon^2$, and the radial kernel profile $\psi$. For the ARD kernel in Eq.~\ref{eq:ard-radial-kernel}, there exists a function $G$, defined on the relevant pairwise-distance values, such that $\mathcal L_{\mathrm{ARD}}(\eta)=G(A_X\eta)$. If $h\in\ker(A_X)$, $t\in\mathbb R$, and $\eta+th\in\mathbb R_{++}^D$, then $K_{\eta+th}=K_\eta$, $C_{\eta+th}=C_\eta$, and $\mathcal L_{\mathrm{ARD}}(\eta+th)=\mathcal L_{\mathrm{ARD}}(\eta)$. Hence, a nonzero exact MLL-null direction exists whenever $D>\operatorname{rank}(A_X)$, and therefore whenever $D>n(n-1)/2$.

If $G$ is twice differentiable at $A_X\eta$ and $\|\nabla G(A_X\eta)\|_2\le B_1$ and $\|\nabla^2G(A_X\eta)\|_2\le B_2$, then, for any direction $h$,
\[
|D_h\mathcal L_{\mathrm{ARD}}(\eta)|\le B_1\|A_Xh\|_2,
\qquad
|D_h^2\mathcal L_{\mathrm{ARD}}(\eta)|\le B_2\|A_Xh\|_2^2.
\]
Thus, when directions are compared at a fixed scale (which we take as $\|h\|_2=1$ in the inverse-squared lengthscale coordinates), small $\|A_Xh\|_2$ implies small local first- and second-order MLL sensitivity relative to the corresponding derivative scale of $G$.
\end{proposition}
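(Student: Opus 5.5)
The plan is to first show that the training covariance is a fixed function of the vector $A_X\eta$, and then read off every claim in Proposition~\ref{prop:pairwise-design-main} from this factorization and the chain rule.

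\textbf{Factorization.} For $i\neq j$, Eq.~\ref{eq:ard-radial-kernel} gives
\[
(K_\eta)_{ij}=\sigma_f^2\psi(a_{ij}^\top\eta)=\sigma_f^2\psi\bigl((A_X\eta)_{ij}\bigr).
\]
The diagonal entries are $\sigma_f^2\psi(0)$, independent of $\eta$. So there is a map $u\mapsto K(u)$ from $\mathbb R^{P_n}$ (on the relevant domain of nonnegative pairwise values) to symmetric matrices, with $K_\eta=K(A_X\eta)$. Set
\[
C(u)=K(u)+\sigma_\epsilon^2 I_n,\qquad
G(u)=-\tfrac12 z^\top C(u)^{-1}z-\tfrac12\log|C(u)|-\tfrac n2\log(2\pi).
\]
We restrict $G$ to those $u$ for which $C(u)$ is positive definite; this includes every $u=A_X\eta$ with $\eta$ feasible. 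Then $\mathcal L_{\mathrm{ARD}}(\eta)=G(A_X\eta)$ holds by construction.

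\textbf{Invariance and existence of null directions.} Let $h\in\ker(A_X)$ and suppose $\eta+th\in\mathbb R_{++}^D$. Then $A_X(\eta+th)=A_X\eta$, so $K_{\eta+th}=K_\eta$, $C_{\eta+th}=C_\eta$, and the MLL is unchanged.

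By rank--nullity, $\dim\ker(A_X)=D-\operatorname{rank}(A_X)$, which is positive whenever $D>\operatorname{rank}(A_X)$. Since $\operatorname{rank}(A_X)\le P_n=n(n-1)/2$, this holds in particular whenever $D>n(n-1)/2$.

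One point needs care for "nonzero exact MLL-null direction" to be meaningful: every $\eta\in\mathbb R_{++}^D$ is interior. So for any nonzero $h$ there is $\epsilon>0$ with $\eta+th$ feasible for $|t|<\epsilon$, and the invariance is non-vacuous.

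\textbf{Derivative bounds.} Assume $G$ is twice differentiable at $u_0=A_X\eta$. The map $t\mapsto A_X(\eta+th)$ is affine with derivative $A_Xh$, so the chain rule gives
\[
D_h\mathcal L_{\mathrm{ARD}}(\eta)=\nabla G(u_0)^\top A_Xh,\qquad
D_h^2\mathcal L_{\mathrm{ARD}}(\eta)=(A_Xh)^\top\nabla^2G(u_0)(A_Xh).
\]
There is no second-derivative term from the inner map, because it is affine. Cauchy--Schwarz then yields $|D_h\mathcal L_{\mathrm{ARD}}(\eta)|\le B_1\|A_Xh\|_2$. The operator-norm bound yields $|D_h^2\mathcal L_{\mathrm{ARD}}(\eta)|\le B_2\|A_Xh\|_2^2$. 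The final interpretive sentence follows by taking $\|h\|_2=1$.

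\textbf{Main obstacle.} The only delicate step is making the domain of $G$ precise, so that it is well defined and the differentiability hypothesis makes sense. I would define $G$ on the open set $U=\{u: C(u)\succ 0\}$, or simply on a neighborhood of $A_X(\mathbb R^D_{++})$.

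Where $\psi$ is smooth, matrix calculus shows $G$ is smooth there:
\[
\partial_u C^{-1}=-C^{-1}(\partial_u C)C^{-1},\qquad \partial_u\log|C|=\operatorname{tr}\bigl(C^{-1}\partial_u C\bigr).
\]
Smoothness is not strictly needed, since twice differentiability is assumed in the statement. Note also that the bounds only evaluate $G$ along the image of $A_X$. It would therefore suffice to define $G$ on $A_X(\text{feasible set})$ together with differentiability in an ambient neighborhood, and I will state the assumption that way.
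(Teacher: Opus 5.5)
Your proposal is correct and follows essentially the same route as the paper. Both build $K(v)$ and $C(v)$ from the pairwise-distance vector (the diagonal $\sigma_f^2\psi(0)$ does not depend on $\eta$), define $G$ on $\{v: C(v)\succ0\}$, obtain invariance along $\ker(A_X)$ with rank--nullity and the interior-feasibility remark, and finish with the chain rule through the affine map $\eta\mapsto A_X\eta$, Cauchy--Schwarz, and the induced $2$-norm.
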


\begin{proof}
Fix an ordering of the $P_n$ unordered training pairs. For a vector $v\in\mathbb R_+^{P_n}$ indexed by these pairs, define $K(v)$ to have diagonal entries $[K(v)]_{ii}=\sigma_f^2\psi(0)$ and off-diagonal entries $[K(v)]_{ij}=[K(v)]_{ji}=\sigma_f^2\psi(v_{ij})$ for $i<j$. Let $C(v)=K(v)+\sigma_\epsilon^2I_n$ and, wherever $C(v)\succ0$ (i.e., is positive definite), define
\[
G(v)=-\frac12z^\top C(v)^{-1}z-\frac12\log\det C(v)-\frac n2\log(2\pi).
\]
For the ARD radial kernel, the distance argument for training pair $(i,j)$ is $\rho_\eta(x_i,x_j)=a_{ij}^\top\eta$. The collection of all off-diagonal distance arguments is therefore $A_X\eta$, while the diagonal kernel entries do not depend on $\eta$. It follows that $K_\eta=K(A_X\eta)$ and $C_\eta=C(A_X\eta)$, which proves $\mathcal L_{\mathrm{ARD}}(\eta)=G(A_X\eta)$.

Now suppose $h\in\ker(A_X)$. Then, $A_X(\eta+th)=A_X\eta$ for every feasible $t$, so the training kernel matrix, regularized covariance, and MLL are all unchanged. By the rank-nullity theorem, $\dim\ker(A_X)=D-\operatorname{rank}(A_X)$, so a nonzero null direction exists whenever $D>\operatorname{rank}(A_X)$. Since $\operatorname{rank}(A_X)\le P_n=n(n-1)/2$, the stated sufficient condition follows. Because $\eta$ lies in the interior of $\mathbb R_{++}^D$, any nonzero null direction also admits sufficiently small nonzero values of $t$ for which $\eta+th$ remains feasible.

Finally, the chain rule applied to $\mathcal L_{\mathrm{ARD}}(\eta)=G(A_X\eta)$ gives $D_h\mathcal L_{\mathrm{ARD}}(\eta)=\langle\nabla G(A_X\eta),A_Xh\rangle$ and $D_h^2\mathcal L_{\mathrm{ARD}}(\eta)=(A_Xh)^\top\nabla^2G(A_X\eta)(A_Xh)$. The Cauchy-Schwarz inequality and the induced matrix $2$-norm then yield the following
\[
|D_h\mathcal L_{\mathrm{ARD}}(\eta)|\le B_1\|A_Xh\|_2,
\qquad
|D_h^2\mathcal L_{\mathrm{ARD}}(\eta)|\le B_2\|A_Xh\|_2^2,
\]
which completes the proof.
\end{proof}

The key point of this result is that the likelihood only sees the ARD lengthscales through the pairwise geometry of the observed inputs. If $A_Xh=0$, moving along $h$ changes none of the weighted distances among the training points, so the training covariance and MLL cannot distinguish those parameter settings at all. The counting condition above gives a simple regime where such directions must exist. Outside that regime, exact null directions are no longer guaranteed, but normalized directions with small $\|A_Xh\|_2$ still change the observed pairwise geometry only weakly and therefore have small local MLL sensitivity under the stated bounds. This is the finite-data limitation that motivates fitting a simpler shared lengthscale when the current design does not strongly constrain coordinatewise reweighting.

\subsection{MAP regularization and the isotropic restriction}
\label{app:isotropic-restriction-proof}

Proposition~\ref{prop:pairwise-design-main} characterizes the information about the ARD lengthscales that comes from the MLL. The models used in our main experiments are fit by MAP, so we also need to account for the lengthscale prior. Write the ARD MAP objective as $\mathcal J_{\mathrm{ARD}}(\eta)=\mathcal L_{\mathrm{ARD}}(\eta)+P_{\mathrm{ARD}}(\eta)$, where $P_{\mathrm{ARD}}$ denotes the prior contribution expressed in the inverse-squared coordinates used above.

\begin{corollary}[Prior selection along an exact MLL-null direction]
\label{cor:prior-selection-null}
If $h\in\ker(A_X)$ and $\eta+th\in\mathbb R_{++}^D$, then
\[
\mathcal J_{\mathrm{ARD}}(\eta+th)-\mathcal J_{\mathrm{ARD}}(\eta)
=
P_{\mathrm{ARD}}(\eta+th)-P_{\mathrm{ARD}}(\eta).
\]
\end{corollary}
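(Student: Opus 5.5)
The plan is to reduce Corollary~\ref{cor:prior-selection-null} to the exact invariance in Proposition~\ref{prop:pairwise-design-main}. No new estimates should be needed.

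First, I check that both $\eta$ and $\eta+th$ are feasible points of the MAP objective. The hypothesis $\eta+th\in\mathbb R_{++}^D$ is assumed, and $\eta$ is feasible by the standing convention of this appendix. At both points the regularized covariance is positive definite, so $\mathcal L_{\mathrm{ARD}}$ and $P_{\mathrm{ARD}}$ are well defined; I take the prior density to be positive on $\mathbb R_{++}^D$, so $P_{\mathrm{ARD}}$ is finite there.

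Next, I write out the difference using the definition $\mathcal J_{\mathrm{ARD}}=\mathcal L_{\mathrm{ARD}}+P_{\mathrm{ARD}}$:
\[
\mathcal J_{\mathrm{ARD}}(\eta+th)-\mathcal J_{\mathrm{ARD}}(\eta)
=\bigl[\mathcal L_{\mathrm{ARD}}(\eta+th)-\mathcal L_{\mathrm{ARD}}(\eta)\bigr]
+\bigl[P_{\mathrm{ARD}}(\eta+th)-P_{\mathrm{ARD}}(\eta)\bigr].
\]
Since $h\in\ker(A_X)$, Proposition~\ref{prop:pairwise-design-main} gives $A_X(\eta+th)=A_X\eta$. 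Hence $\mathcal L_{\mathrm{ARD}}(\eta+th)=G(A_X\eta)=\mathcal L_{\mathrm{ARD}}(\eta)$, the first bracket vanishes, and the identity follows.

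The only point that needs care is that ``$P_{\mathrm{ARD}}$ expressed in inverse-squared coordinates'' must be a single well-defined function of $\eta$. If the priors are specified on $\ell_k$ or $\log\ell_k$, any Jacobian term from the change of variables belongs to $P_{\mathrm{ARD}}$. The identity holds for any such definition, as long as it is applied consistently at both points.
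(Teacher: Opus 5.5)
Your proposal is correct and follows the paper's proof: write $\mathcal J_{\mathrm{ARD}}=\mathcal L_{\mathrm{ARD}}+P_{\mathrm{ARD}}$, use Proposition~\ref{prop:pairwise-design-main} to get $\mathcal L_{\mathrm{ARD}}(\eta+th)=\mathcal L_{\mathrm{ARD}}(\eta)$ for $h\in\ker(A_X)$, and cancel the likelihood terms. Your remark about a possible Jacobian term is harmless, since the paper defines $P_{\mathrm{ARD}}(\eta)=\sum_j\log p_D(\eta_j^{-1/2})$ with no Jacobian, and you correctly note the identity holds for any fixed choice.
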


\begin{proof}
Proposition~\ref{prop:pairwise-design-main} gives $\mathcal L_{\mathrm{ARD}}(\eta+th)=\mathcal L_{\mathrm{ARD}}(\eta)$ along any feasible exact MLL-null direction, so the likelihood terms cancel from the difference between the two MAP objectives.
\end{proof}

Thus, a MAP prior can regularize ARD directions that the likelihood does not distinguish and can select among otherwise training-equivalent fits. The important point is that this preference comes from the prior, not from additional likelihood information in the observed data. The prior can therefore stabilize the fit without changing the underlying fact that the current training design provides no MLL information along an exact null direction.

The theory is written in terms of $\eta_j=\ell_j^{-2}$ because this makes the pairwise-distance dependence linear, but the implemented prior is placed on the raw lengthscale $\ell_j$. Let $p_D(\ell)$ denote the dimension-scaled univariate prior used in the MAP models. In the current implementation,
$p_D(\ell)=\operatorname{LogNormal}(\sqrt{2}+\frac12\log D,\sqrt{3})$.
Vanilla-MAP applies this same ambient-$D$-scaled prior to each of its $D$ ARD lengthscales, whereas Iso-MAP applies it once to the shared lengthscale. In the inverse-squared coordinates, the corresponding prior contributions are
\[
P_{\mathrm{ARD}}(\eta)
=
\sum_{j=1}^D \log p_D(\eta_j^{-1/2}),
\qquad
P_{\mathrm{iso}}(\lambda)
=
\log p_D(\lambda^{-1/2}).
\]
These expressions evaluate the implemented raw-lengthscale priors at $\ell_j=\eta_j^{-1/2}$ and $\ell=\lambda^{-1/2}$. They are the original prior contributions to the MAP objectives expressed in the inverse-squared coordinates, not transformed probability densities on $\eta$ or $\lambda$.

The isotropic model further restricts the ARD parameterization to $\eta=\lambda\mathbf 1_D$. Since $(A_X\mathbf 1_D)_{ij}=\|x_i-x_j\|_2^2$, this restriction allows the model to fit one common scale for the ordinary squared Euclidean distances between the observed inputs but removes the ability to reweight coordinates separately. At the likelihood level this gives the exact identity
\[
\mathcal L_{\mathrm{iso}}(\lambda)
=
\mathcal L_{\mathrm{ARD}}(\lambda\mathbf 1_D)
=
G(\lambda A_X\mathbf 1_D).
\]
The same identity does not generally hold for the MAP objectives: $\mathcal J_{\mathrm{iso}}(\lambda)\neq\mathcal J_{\mathrm{ARD}}(\lambda\mathbf 1_D)$ because the ARD model contributes $D$ lengthscale-prior terms whereas the isotropic model contributes one. Iso-MAP should therefore be viewed as a more restrictive model with its own MAP objective, not simply as evaluating the full ARD MAP objective along the isotropic line.

Together with Proposition~\ref{prop:pairwise-design-main}, this makes the role of the prior and the shared-lengthscale restriction distinct. The prior regularizes ARD directions that are weakly or not at all informed by the training likelihood, whereas Iso-BO removes coordinatewise reweighting from the fitted model altogether. The next subsection shows why this distinction can matter even when two ARD settings are indistinguishable on the observed training points.

\subsection{Training-equivalent fits can differ at new candidate points}
\label{app:posterior-sensitivity}

Proposition~\ref{prop:pairwise-design-main} shows that different ARD lengthscale settings can produce exactly the same covariance matrix on the observed training inputs. We now show why this can still matter for BO. Two such settings are indistinguishable to the training likelihood, but a new candidate point introduces a different set of distances: those between the candidate and each observed input. Preserving the training geometry does not generally preserve this candidate-to-training geometry.

For a candidate $x$, let $B_X(x)\in\mathbb R^{n\times D}$ be the matrix whose $i$th row is $((x_i-x)^{\odot 2})^\top$. Thus, while $A_X$ describes coordinatewise squared differences among pairs of training points, $B_X(x)$ describes the corresponding differences between the candidate point $x$ and each training point.

\begin{corollary}[Training covariance invariance does not imply train-test covariance invariance]
\label{cor:training-test-invariance}
Assume $\sigma_f^2>0$ and that the radial profile $\psi$ is injective on the relevant nonnegative distance arguments (which holds for the SE and Mat\'ern~$5/2$ radial profiles used in this paper). Let $h\in\ker(A_X)$, let $t\neq0$, and suppose $\eta+th\in\mathbb R_{++}^D$. Then
\[
C_{\eta+th}=C_\eta,
\qquad
k_{\eta+th}(X,x)=k_\eta(X,x)
\quad\Longleftrightarrow\quad
B_X(x)h=0.
\]
For any fixed nonzero $h$, if the candidate domain contains a full-dimensional open set, then $B_X(x)h\neq0$ for almost every candidate $x$ in that set.
\end{corollary}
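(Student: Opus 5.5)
The equivalence splits into two parts. The first part, $C_{\eta+th}=C_\eta$, holds automatically for every candidate by Proposition~\ref{prop:pairwise-design-main}, because $h\in\ker(A_X)$. The only real content is therefore the train-test equivalence $k_{\eta+th}(X,x)=k_\eta(X,x)\iff B_X(x)h=0$. For each training index $i$, we have $\rho_\eta(x_i,x)=b_i(x)^\top\eta$, where $b_i(x)=(x_i-x)^{\odot2}$ is the $i$th row of $B_X(x)$. Hence
\[
[k_{\eta+th}(X,x)]_i=\sigma_f^2\psi\bigl(b_i(x)^\top\eta+t\,b_i(x)^\top h\bigr).
\]
For ($\Leftarrow$): if $B_X(x)h=0$, every argument is unchanged, so the covariances agree. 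For ($\Rightarrow$): equality of the $i$th entries together with $\sigma_f^2>0$ gives $\psi(b_i^\top\eta+tb_i^\top h)=\psi(b_i^\top\eta)$. Both arguments are nonnegative, since $\eta+th\in\mathbb R_{++}^D$ and $b_i\ge0$. Injectivity of $\psi$ then forces $t\,b_i^\top h=0$, and $t\neq0$ gives $b_i^\top h=0$ for every $i$, i.e.\ $B_X(x)h=0$.

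For the SE and Mat\'ern~$5/2$ parenthetical, I would check strict monotonicity. $\psi(r)=e^{-r/2}$ is strictly decreasing. For Mat\'ern~$5/2$, set $s=\sqrt{5r}$, so that $\psi=(1+s+s^2/3)e^{-s}$. Its derivative in $s$ is $-\tfrac{s}{3}(1+s)e^{-s}<0$ for $s>0$, so $\psi$ is strictly decreasing on $r\ge0$ and hence injective.

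For the almost-every claim, fix a nonzero $h$ and look at the first row only:
\[
q(x)=b_1(x)^\top h=\sum_{k=1}^D h_k(x_{1k}-x_k)^2.
\]
This is a polynomial in $x$. It is not identically zero: choose a coordinate $k$ with $h_k\neq0$; then the coefficient of $x_k^2$ in $q$ is $h_k\neq0$. The zero set of a nonzero polynomial on $\mathbb R^D$ has Lebesgue measure zero, a standard fact proved by induction on $D$ with Fubini. Hence $\{x: B_X(x)h=0\}\subseteq\{q=0\}$ is null, and $B_X(x)h\neq0$ for almost every $x$ in any full-dimensional open set, with respect to Lebesgue measure restricted to that set.

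The main obstacle is modest. It is mostly the care needed in the ($\Rightarrow$) step to ensure both arguments of $\psi$ lie in the domain where injectivity is assumed, which feasibility of $\eta+th$ handles. One should also cite or briefly justify the measure-zero fact for polynomial zero sets.
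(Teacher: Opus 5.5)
Your proposal is correct and follows essentially the same route as the paper's proof. It gets invariance of $C$ from Proposition~\ref{prop:pairwise-design-main}, applies an entrywise injectivity argument to the train-test covariance, and contains $\{x:B_X(x)h=0\}$ in the zero set of a single nonzero quadratic $q_i$. You also supply details the paper leaves implicit, all of them sound: nonnegativity of both arguments of $\psi$, strict monotonicity of the SE and Mat\'ern~$5/2$ profiles, and the $x_k^2$-coefficient argument that $q$ is not identically zero.
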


\begin{proof}
Because $h\in\ker(A_X)$, Proposition~\ref{prop:pairwise-design-main} immediately gives $C_{\eta+th}=C_\eta$. Consider now the covariance between a training point $x_i$ and a candidate $x$. Its kernel distance is $((x_i-x)^{\odot 2})^\top\eta$. After moving from $\eta$ to $\eta+th$, this distance changes by $t((x_i-x)^{\odot 2})^\top h$. Since $t\neq0$, $\sigma_f^2>0$, and $\psi$ is injective, the covariance with $x_i$ is unchanged if and only if $((x_i-x)^{\odot 2})^\top h=0$. Requiring this for every training point is exactly $B_X(x)h=0$.

It remains to show that this is exceptional for a nonzero $h$. For any training point $x_i$, define $q_i(x)=\sum_{j=1}^D h_j(x_{ij}-x_j)^2$. Since $h\neq0$, $q_i$ is a nonzero polynomial in $x$, whose zero set has Lebesgue measure zero. $\{x:B_X(x)h=0\}$ is contained in this zero set and thus also has measure zero.
\end{proof}

This result exposes the part of the ambiguity that matters for BO. An exact null direction can leave every covariance among the observed points unchanged while changing how a new candidate correlates with those observations. To see the posterior consequence directly, let $C=C_\eta=C_{\eta+th}$, $k_x=k_\eta(X,x)$, and $\delta k_x=k_{\eta+th}(X,x)-k_\eta(X,x)$. Because $C$ is unchanged,
\[
\mu_{\eta+th}(x)-\mu_\eta(x)=\delta k_x^\top C^{-1}z,
\qquad
s_{\eta+th}^2(x)-s_\eta^2(x)
=
-2\delta k_x^\top C^{-1}k_x
-\delta k_x^\top C^{-1}\delta k_x.
\]
Note that the variance identity uses the fact that, for the stationary radial kernels considered here, $k_\eta(x,x)=\sigma_f^2\psi(0)$ does not depend on the lengthscales.
Therefore, two ARD settings can be completely indistinguishable to the training likelihood while giving different posterior geometry at candidate points. These differences can change the posterior mean or variance and, in turn, posterior-based acquisition values. This does not mean that every exact-null perturbation changes every posterior quantity, since cancellations can occur for particular candidates or responses. In the next subsection, we consider what remains of this finite-data ambiguity beyond exact null directions and what it implies for isotropic prediction.

\subsection{Weak ARD information and robust isotropic prediction}
\label{app:weak-information}

Proposition~\ref{prop:pairwise-design-main} gives an exact failure mode for ARD lengthscale learning, but its rank condition is strongest early in BO. Two questions remain. Can some ARD directions still be poorly informed after $A_X$ becomes full rank? And, when an ARD direction is completely unresolved, is the isotropic prediction a principled way to avoid fitting that ambiguity? We address these questions below.

To quantify weak rather than exact information, let $I(\eta)\in\mathbb R^{D\times D}$ denote the Fisher information matrix (FIM) for the inverse-squared ARD lengthscales, with the GP mean and other non-lengthscale quantities fixed~\citep{kay1993fundamentals}. For $C_\eta=K_\eta(X,X)+\sigma_\epsilon^2I_n$,
\[
[I(\eta)]_{pq}
=
\frac{1}{2}
\operatorname{tr}\!\left(
C_\eta^{-1}
\frac{\partial C_\eta}{\partial\eta_p}
C_\eta^{-1}
\frac{\partial C_\eta}{\partial\eta_q}
\right).
\]
The next result shows that the same quantity $\|A_Xh\|_2$ used in Proposition~\ref{prop:pairwise-design-main} also controls the local Fisher information along $h$.

\begin{proposition}[Pairwise-distance sensitivity controls Fisher information]
\label{prop:fim-pairwise-bound}
Suppose $\lambda_{\min}(C_\eta)\geq\nu>0$ and $|\psi'(r)|\leq L_\psi$ over the weighted training distances $r=a_{ij}^\top\eta$. Then, for any $h\in\mathbb R^D$,
\[
h^\top I(\eta)h
\leq
\frac{\sigma_f^4L_\psi^2}{\nu^2}
\|A_Xh\|_2^2.
\]
Consequently, $\ker(A_X)\subseteq\ker(I(\eta))$, and $\lambda_{\min}(I(\eta))
\leq
\frac{\sigma_f^4L_\psi^2}{\nu^2}
\sigma_{\min}(A_X)^2$.
\end{proposition}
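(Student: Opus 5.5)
The plan is to write the Fisher quadratic form along $h$ as a squared Frobenius norm and then bound that norm. Set $\dot C_h=\sum_p h_p\,\partial C_\eta/\partial\eta_p$, the directional derivative of $C_\eta$ along $h$. Bilinearity of the FIM formula gives
\[
h^\top I(\eta)h=\tfrac12\operatorname{tr}\!\left(C_\eta^{-1}\dot C_h C_\eta^{-1}\dot C_h\right).
\]
Since $C_\eta\succ0$ is symmetric, this equals $\tfrac12\|C_\eta^{-1/2}\dot C_h C_\eta^{-1/2}\|_F^2$. Using $\|AB\|_F\le\|A\|_2\|B\|_F$ on both sides, together with $\|C_\eta^{-1/2}\|_2\le\nu^{-1/2}$, I would obtain
\[
h^\top I(\eta)h\le\frac{1}{2\nu^2}\|\dot C_h\|_F^2.
\]

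Next I would compute $\dot C_h$ entrywise, using the representation from Proposition~\ref{prop:pairwise-design-main}: $C_\eta=C(A_X\eta)$, the diagonal is independent of $\eta$, and $[C_\eta]_{ij}=\sigma_f^2\psi(a_{ij}^\top\eta)$ for $i\neq j$. Hence the diagonal of $\dot C_h$ vanishes, and each off-diagonal entry is $\sigma_f^2\psi'(a_{ij}^\top\eta)\,a_{ij}^\top h$. Each unordered pair appears twice in the symmetric matrix, so
\[
\|\dot C_h\|_F^2=2\sigma_f^4\sum_{i<j}\psi'(a_{ij}^\top\eta)^2(a_{ij}^\top h)^2\le 2\sigma_f^4L_\psi^2\|A_Xh\|_2^2.
\]
The factor $2$ cancels the $\tfrac12$ and yields exactly the stated constant $\sigma_f^4L_\psi^2/\nu^2$. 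This factor-of-two bookkeeping, and the use of $\psi'$ only at the weighted training distances where the hypothesis applies, is the only delicate point. Differentiability of $\psi$ at those arguments is implicit in the hypothesis $|\psi'|\le L_\psi$. If one prefers to avoid that, note that at $r>0$ it holds for SE and Mat\'ern~$5/2$; the case $r=0$ only arises for duplicated inputs, where $a_{ij}=0$ and the entry contributes nothing.

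For the consequences, $I(\eta)$ is positive semidefinite because it is a sum of Frobenius squares. If $A_Xh=0$, then $h^\top I(\eta)h=0$, and positive semidefiniteness forces $I(\eta)h=0$, which gives $\ker(A_X)\subseteq\ker(I(\eta))$. For the eigenvalue bound, take $h$ to be a unit right singular vector of $A_X$ achieving $\sigma_{\min}(A_X)$, so that $\|A_Xh\|_2=\sigma_{\min}(A_X)$. Here $\sigma_{\min}$ is the $D$-th singular value, taken as $0$ when $D>P_n$. The Rayleigh quotient characterization $\lambda_{\min}(I)\le h^\top I h$ then finishes the proof.
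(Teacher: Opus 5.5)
Your proposal is correct and follows essentially the same route as the paper's proof. Like the paper, you write $h^\top I(\eta)h=\tfrac12\|C_\eta^{-1/2}\dot C_\eta[h]C_\eta^{-1/2}\|_F^2$, bound it by $\tfrac{1}{2\nu^2}\|\dot C_\eta[h]\|_F^2$, and use the entrywise form $\sigma_f^2\psi'(a_{ij}^\top\eta)\,a_{ij}^\top h$ with the factor of two from symmetric entries; you then get the kernel inclusion from positive semidefiniteness and the eigenvalue bound from the Rayleigh quotient at a minimizing unit singular vector. Your extra remarks are harmless refinements the paper leaves implicit: the bilinearity justification of the directional form, the $r=0$ duplicated-input case, and $\sigma_{\min}(A_X)=0$ when $D>P_n$.
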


\begin{proof}
Consider a perturbation of the ARD parameters in direction $h$, and let
$\dot C_\eta[h]=\sum_{d=1}^D h_d\,\partial C_\eta/\partial\eta_d$
denote the resulting directional derivative of the training covariance.
The diagonal entries are zero because $k_\eta(x_i,x_i)$ does not depend on
the lengthscales. For each pair $i<j$, $[\dot C_\eta[h]]_{ij}
=
\sigma_f^2\psi'(a_{ij}^\top\eta)\,a_{ij}^\top h$.
Thus, $a_{ij}^\top h$ is exactly the first-order change in the weighted
distance for pair $(i,j)$. Summing these changes over all training pairs gives
\[
\|\dot C_\eta[h]\|_F^2
\leq
2\sigma_f^4L_\psi^2\|A_Xh\|_2^2,
\]
where the factor of two accounts for the two symmetric off-diagonal entries
associated with each unordered pair.
For a Gaussian covariance model, the Fisher information in direction $h$ is
\[
h^\top I(\eta)h
=
\frac{1}{2}
\left\|
C_\eta^{-1/2}\dot C_\eta[h]C_\eta^{-1/2}
\right\|_F^2.
\]
This is the covariance change after scaling by the covariance of the observed
data. Since $\lambda_{\min}(C_\eta)\geq\nu$,
$\|C_\eta^{-1/2}\|_2\leq\nu^{-1/2}$, and therefore
\[
h^\top I(\eta)h
\leq
\frac{1}{2\nu^2}\|\dot C_\eta[h]\|_F^2
\leq
\frac{\sigma_f^4L_\psi^2}{\nu^2}\|A_Xh\|_2^2.
\]

If $A_Xh=0$, the bound gives $h^\top I(\eta)h=0$. Since $I(\eta)$ is
positive semidefinite, this implies $I(\eta)h=0$, so
$\ker(A_X)\subseteq\ker(I(\eta))$.

Finally, let $v$ be a unit vector minimizing $\|A_Xv\|_2$. Then
$\|A_Xv\|_2=\sigma_{\min}(A_X)$, and the Rayleigh quotient gives
\[
\lambda_{\min}(I(\eta))
\leq
v^\top I(\eta)v
\leq
\frac{\sigma_f^4L_\psi^2}{\nu^2}\sigma_{\min}(A_X)^2,
\]
which completes the proof.
\end{proof}

% \begin{proof}
% For a direction $h$, let $\dot C_\eta[h]=\sum_{d=1}^D h_d\,\partial C_\eta/\partial\eta_d$. Its diagonal entries are zero, while for $i<j$, $[\dot C_\eta[h]]_{ij}
% =
% \sigma_f^2\psi'(a_{ij}^\top\eta)a_{ij}^\top h$.
% Therefore, $\|\dot C_\eta[h]\|_F^2\leq2\sigma_f^4L_\psi^2\|A_Xh\|_2^2$. The directional Fisher information satisfies
% \[
% h^\top I(\eta)h
% =
% \frac{1}{2}
% \left\|
% C_\eta^{-1/2}\dot C_\eta[h]C_\eta^{-1/2}
% \right\|_F^2
% \leq
% \frac{1}{2\nu^2}\|\dot C_\eta[h]\|_F^2,
% \]
% which gives the stated bound. If $A_Xh=0$, then $h^\top I(\eta)h=0$ and positive semidefiniteness of $I(\eta)$ gives $I(\eta)h=0$. The eigenvalue bound follows by choosing a unit right singular vector associated with $\sigma_{\min}(A_X)$.
% \end{proof}

This result extends the pairwise-distance argument beyond exact rank deficiency. Full column rank only rules out an exactly invisible ARD direction. If $\sigma_{\min}(A_X)$ is small, there is still a normalized direction with little local Fisher information. Thus, crossing the rank threshold does not mean that all coordinatewise lengthscale variation suddenly becomes well informed. 

Although this explains how weak ARD information can persist, it does not yet say why tying the lengthscales together is a reasonable response. It turns out that question can be answered cleanly in the exact-null case. Suppose two nearby ARD perturbations on opposite sides of an isotropic model give exactly the same training distribution. The data then provide no basis for choosing between their different predictions at a new candidate. We ask which prediction is most robust to that unresolved choice.

Fix $\eta_0=\lambda\mathbf 1_D$ and choose a nonzero vector $h$ satisfying $A_Xh=0$ and $\mathbf 1_D^\top h=0$. The second condition holds the average inverse-squared scale fixed while changing the relative coordinate weights. A nonzero such direction exists whenever $D>\operatorname{rank}(A_X)+1$. For sufficiently small $t>0$, let $\eta_\pm=\eta_0\pm th$. Proposition~\ref{prop:pairwise-design-main} gives $C_{\eta_+}=C_{\eta_-}=C_{\eta_0}=:C_0$.

For a fixed candidate $x$, write $k_\pm=k_{\eta_\pm}(X,x)$ and $k_0=k_{\eta_0}(X,x)$. With $Z=Y-m_X\sim\mathcal N(0,C_0)$, the corresponding posterior means are $\mu_\pm(Z;x)=m(x)+k_\pm^\top C_0^{-1}Z$ and $\mu_0(Z;x)=m(x)+k_0^\top C_0^{-1}Z$. Define
\[
R_t^\star(x)
=
\inf_g\max_{s\in\{+,-\}}
\mathbb E\!\left[
\bigl(g(Z)-\mu_s(Z;x)\bigr)^2
\right]
\]
over all square-integrable predictors $g(Z)$, and let
\[
R_t^{\mathrm{iso}}(x)
=
\max_{s\in\{+,-\}}
\mathbb E\!\left[
\bigl(\mu_0(Z;x)-\mu_s(Z;x)\bigr)^2
\right].
\]
These risks isolate the prediction error caused by not knowing which covariance geometry is correct.

\begin{proposition}[Local minimax prediction under an unresolved ARD contrast]
\label{prop:local-minimax-iso}
The minimax predictor is the midpoint of the two posterior means,
\[
g_t^\star(Z)
=
\frac{\mu_+(Z;x)+\mu_-(Z;x)}{2},
\]
with risk
\[
R_t^\star(x)
=
\frac{1}{4}
(k_+-k_-)^\top C_0^{-1}(k_+-k_-).
\]
If $k_{\eta_0+sh}(X,x)$ is differentiable at $s=0$ and $D_hk_{\eta_0}(X,x)\neq0$, define
\[
q_x(h)
=
D_hk_{\eta_0}(X,x)^\top
C_0^{-1}
D_hk_{\eta_0}(X,x).
\]
Then
\[
R_t^\star(x)
=
t^2q_x(h)+o(t^2),
\qquad
R_t^{\mathrm{iso}}(x)
=
t^2q_x(h)+o(t^2),
\]
and therefore
\[
\frac{R_t^{\mathrm{iso}}(x)}{R_t^\star(x)}
\longrightarrow1
\qquad\text{as }t\rightarrow0.
\]
\end{proposition}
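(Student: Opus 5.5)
The plan is to reduce everything to a finite-dimensional Hilbert-space geometry. Since every posterior mean is linear in $Z$ with a common offset $m(x)$, write $\mu_s(Z;x)-m(x)=\langle u_s,Z\rangle$ with $u_s=C_0^{-1}k_s$. Because $Z\sim\mathcal N(0,C_0)$, the covariance of $Z$ is identical under both hypotheses by Proposition~\ref{prop:pairwise-design-main}. So for any square-integrable $g$ the two risks $\mathbb E[(g(Z)-\mu_\pm)^2]$ are squared distances in $L^2(\mathcal N(0,C_0))$ from $g$ to the two points $\mu_+,\mu_-$. The minimax problem over all $g$ is then the Chebyshev-center problem for two points in a Hilbert space.

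\textbf{Minimax predictor.} For any $g$, the triangle inequality gives $\|\mu_+-\mu_-\|\le\|g-\mu_+\|+\|g-\mu_-\|\le 2\max_s\|g-\mu_s\|$. Hence the max-risk is at least $\tfrac14\|\mu_+-\mu_-\|^2$. The midpoint $g_t^\star$ attains this bound with equality for both $s$, so it is minimax. The risk then follows directly:
\[
\|\mu_+-\mu_-\|^2=\mathbb E\bigl[((k_+-k_-)^\top C_0^{-1}Z)^2\bigr]=(k_+-k_-)^\top C_0^{-1}C_0C_0^{-1}(k_+-k_-),
\]
which is the stated formula.

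\textbf{Expansions.} Let $d=D_hk_{\eta_0}(X,x)$. Differentiability of $s\mapsto k_{\eta_0+sh}(X,x)$ at $0$ gives $k_\pm=k_0\pm td+o(t)$. Therefore $k_+-k_-=2td+o(t)$, and
\[
R_t^\star=\tfrac14(2td+o(t))^\top C_0^{-1}(2td+o(t))=t^2q_x(h)+o(t^2).
\]
For the isotropic predictor, the same computation gives
\[
\mathbb E[(\mu_0-\mu_\pm)^2]=(k_\pm-k_0)^\top C_0^{-1}(k_\pm-k_0)=t^2q_x(h)+o(t^2)
\]
for each sign. The maximum of two quantities of this form is again $t^2q_x(h)+o(t^2)$.

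\textbf{Ratio.} Since $C_0\succ0$ and $d\neq0$, we have $q_x(h)>0$. Dividing the two expansions by $t^2$ and letting $t\to0$ gives a ratio tending to $1$.

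The main subtlety is the cross term. Only first-order differentiability is assumed, so we cannot claim $k_++k_-=2k_0+O(t^2)$. That is why the isotropic risk is shown to match the minimax risk only to leading order, not exactly. The step needing care is that the $o(t)$ remainders produce only $o(t^2)$ in the quadratic forms. This holds because $C_0^{-1}$ is a fixed bounded matrix, so the cross terms are at most $O(t)\cdot o(t)$. Feasibility of $\eta_\pm$ for small $t$ follows from $\eta_0$ being interior to $\mathbb R_{++}^D$.
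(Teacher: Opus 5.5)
Your proof is correct and follows the paper's argument. Both reduce the two hypotheses to the common law $Z\sim\mathcal N(0,C_0)$, lower-bound the worst-case risk by $\tfrac14\mathbb E[(\mu_+-\mu_-)^2]$, note that the midpoint attains this bound, and then expand $k_\pm=k_0\pm td+o(t)$ in both quadratic forms.

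The one difference is how the lower bound is obtained. You use the triangle inequality in $L^2(\mathcal N(0,C_0))$. The paper instead bounds the maximum by the average and uses the identity $\tfrac12(R_+(g)+R_-(g))=\mathbb E[(g-\mu_{\mathrm{mid}})^2]+\tfrac14\mathbb E[(\mu_+-\mu_-)^2]$. That identity also shows directly that the midpoint is the a.s.\ unique minimax predictor, which is what the statement's ``the minimax predictor'' implies.
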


\begin{proof}
Because $\eta_+$ and $\eta_-$ give the same training covariance, both risks are evaluated under the same distribution $Z\sim\mathcal N(0,C_0)$. Define
\[
R_\pm(g)
=
\mathbb E\!\left[(g(Z)-\mu_\pm(Z;x))^2\right],
\]
so that $R_t(g)=\max\{R_+(g),R_-(g)\}$. For any $g$, $R_t(g)
\geq
\frac{R_+(g)+R_-(g)}{2}$.
Let $\mu_{\mathrm{mid}}=(\mu_++\mu_-)/2$. Expanding the average on the right gives
\[
\frac{R_+(g)+R_-(g)}{2}
=
\mathbb E[(g-\mu_{\mathrm{mid}})^2]
+
\frac{1}{4}\mathbb E[(\mu_+-\mu_-)^2].
\]
The first term is nonnegative, so every predictor satisfies
\[
R_t(g)
\geq
\frac{1}{4}\mathbb E[(\mu_+-\mu_-)^2].
\]
For $g=\mu_{\mathrm{mid}}$, the errors relative to $\mu_+$ and $\mu_-$ have the same magnitude, and both risks equal this lower bound. Hence the midpoint is minimax and
\[
R_t^\star(x)
=
\frac{1}{4}\mathbb E[(\mu_+-\mu_-)^2]
=
\frac{1}{4}
(k_+-k_-)^\top C_0^{-1}(k_+-k_-).
\]

Now let $k(s)=k_{\eta_0+sh}(X,x)$ and $d=D_hk_{\eta_0}(X,x)$. Differentiability gives $k(\pm t)=k_0\pm td+o(t)$, so $k_+-k_-=2td+o(t)$. Substituting these expansions into $R_t^\star(x)$ and $R_t^{\mathrm{iso}}(x)$ gives $t^2d^\top C_0^{-1}d+o(t^2)$ in both cases. Since $d\neq0$ and $C_0\succ0$, the ratio converges to one.
\end{proof}

% \begin{proof}
% Let $\mu_{\mathrm{mid}}=(\mu_++\mu_-)/2$. For any $g$,
% \[
% \frac{1}{2}
% \mathbb E\!\left[(g-\mu_+)^2+(g-\mu_-)^2\right]
% =
% \mathbb E[(g-\mu_{\mathrm{mid}})^2]
% +
% \frac{1}{4}\mathbb E[(\mu_+-\mu_-)^2].
% \]
% The worst-case risk is at least this average. Choosing $g=\mu_{\mathrm{mid}}$ makes the two risks equal and attains the lower bound, giving the stated expression for $R_t^\star(x)$.

% Now let $k(s)=k_{\eta_0+sh}(X,x)$ and $d=D_hk_{\eta_0}(X,x)$. Differentiability gives $k(\pm t)=k_0\pm td+o(t)$, so $k_+-k_-=2td+o(t)$. Substituting these expansions into $R_t^\star(x)$ and $R_t^{\mathrm{iso}}(x)$ gives $t^2d^\top C_0^{-1}d+o(t^2)$ in both cases. Since $d\neq0$ and $C_0\succ0$, the ratio converges to one.
% \end{proof}

The two results sharpen different parts of the finite-data picture. Near-null directions can remain weakly informed even after the simple rank guarantee has disappeared. When the ambiguity is exact, the isotropic prediction is also locally robust in a precise sense: to first order, it incurs the same worst-case excess prediction risk as the best predictor that could be chosen without knowing which unresolved ARD perturbation is correct. This does not make isotropy optimal for general anisotropy; rather, it gives a concrete reason to avoid fitting coordinatewise structure when the current data do not resolve it. The predictive-density and BO experiments test how far this tradeoff remains useful beyond this local setting.

\section{Additional surrogate-fitting and predictive diagnostics}
\label{app:additional-gp-diagnostics}

The main text uses two surrogate-level diagnostics to connect the finite-data theory to the BO results: the controlled lengthscale-fitting experiment in Figure~\ref{fig:Fitted lscale} and the held-out predictive-density experiment in Figure~\ref{fig:NLPD_GPprior}. This appendix provides supporting checks on both. We first examine the sensitivity of the ARD MAP fits to the numerical optimizer, then extend the predictive-density comparison to data collected along BO trajectories and to deterministic non-GP objectives. We finally repeat the lengthscale-fitting diagnostic with different fitted kernels and data sources. These experiments are intended to characterize the fitting behavior observed in practice; optimizer sensitivity is supporting evidence, not the source of the MLL geometry established in Proposition~\ref{prop:pairwise-design-main}. When an ARD fit is represented by a single lengthscale curve, we plot the arithmetic mean across coordinates as a scalar summary of the fitted ARD vector.

\subsection{Optimizer sensitivity in the ARD MAP fits}
\label{app:optimizer-sensitivity-ard-map}

The controlled experiment in Figure~\ref{fig:Fitted lscale} compares L-BFGS-B with plain first-order gradient descent (GD) applied to the same MAP objective. GD uses a fixed learning rate and is run for at most 1000 steps, whereas L-BFGS-B is a limited-memory quasi-Newton method with bound constraints~\citep{zhu1997algorithm}. Figure~\ref{fig:Fitted lscale} shows that the resulting ARD lengthscale fits can differ substantially: L-BFGS-B often moves farther from the prior-mode region than GD. We next check whether this larger movement also corresponds to a higher attained MAP objective.

Figure~\ref{fig:MLL Comparison} reports the unnormalized log posterior per training point, i.e., the MLL plus the log-prior contribution, for L-BFGS-B and GD with learning rates $0.01$, $0.1$, and $1$. Across the GP-prior, Schwefel, and Styblinski-Tang settings, L-BFGS-B generally attains the highest training MAP objective. Thus, the different lengthscale fits are not simply different parameter values with indistinguishable attained objectives; in these diagnostics, the numerical fitting procedure can lead to meaningfully different ARD MAP solutions.

\begin{figure}[t]
\centering
\includegraphics[width=0.9\textwidth]{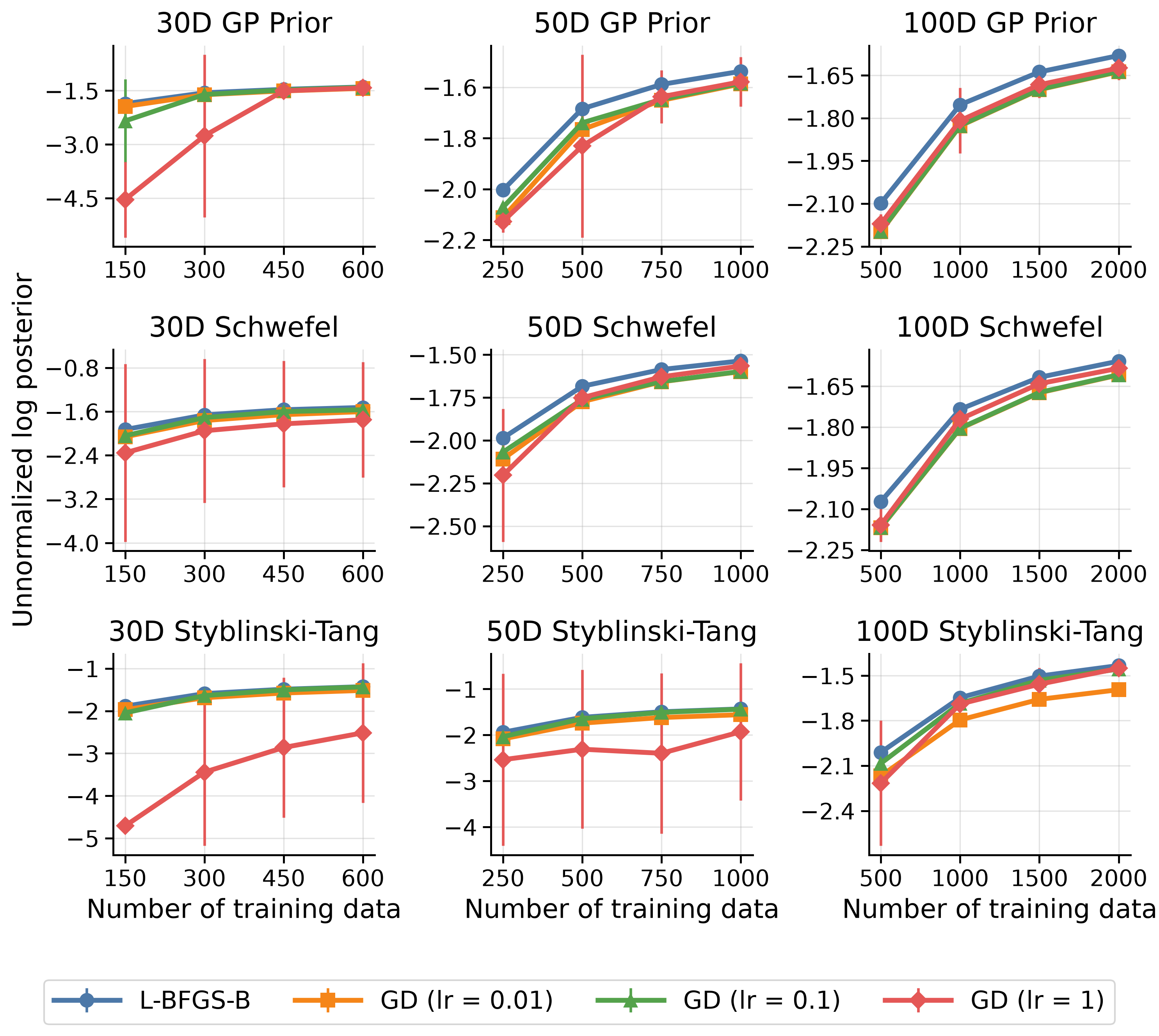}
\caption{Training MAP objective values for SE-kernel ARD GP fits using L-BFGS-B and GD. GD uses learning rates $0.01$, $0.1$, and $1$. The plotted quantity is the average unnormalized log posterior per training point, i.e., MLL plus the log-prior contribution; higher is better. \textbf{Top row:} isotropic GP-prior data with ground-truth lengthscale $\ell=1$; \textbf{middle row:} Schwefel; \textbf{bottom row:} Styblinski--Tang. Training inputs are Sobol points. Curves show the mean and $95\%$ confidence interval over 10 independent replicates. L-BFGS-B generally attains higher training MAP objective values than the tested GD fits. GD is run for at most 1000 steps.}
\label{fig:MLL Comparison}
\end{figure}

Figure~\ref{fig:Ablation learning rate} shows the corresponding learning-rate ablation for the fitted ARD lengthscales. Changing the GD learning rate affects the fitted values, most noticeably for the largest learning rate, but the broad behavior remains similar: the GD fits stay relatively close to the prior-mode region compared with the larger movements produced by L-BFGS-B in Figure~\ref{fig:Fitted lscale}. This does not imply that GD could not reach a different solution under another optimization schedule or computational budget. It shows that the optimizer dependence seen in the main diagnostic is not explained by one particular choice of GD learning rate.

\begin{figure}[t]
\centering
\includegraphics[width=0.9\textwidth]{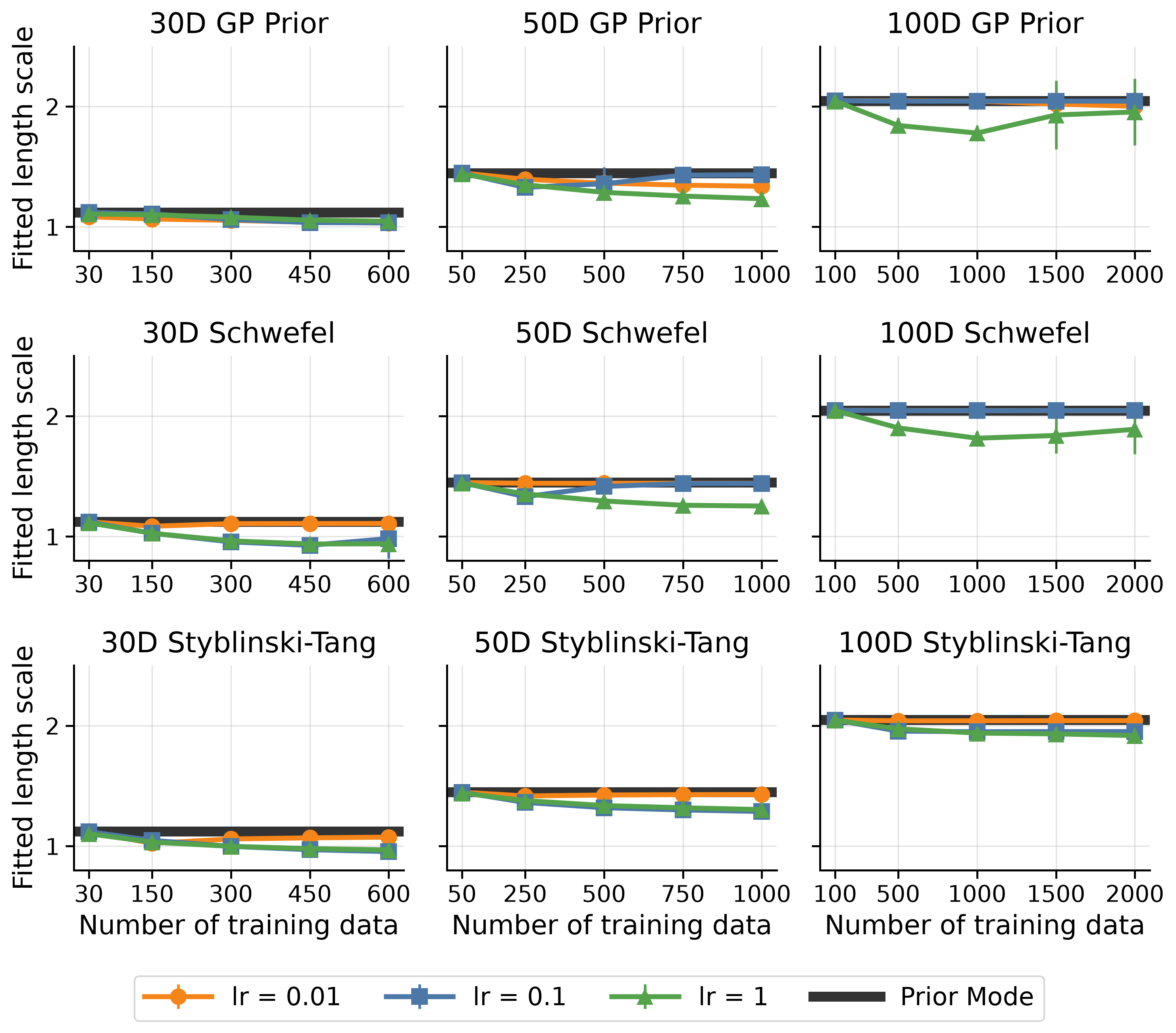}
\caption{Learning-rate ablation for SE-kernel ARD GP fits using GD. The figure reports the average fitted ARD lengthscale across coordinates for learning rates $0.01$, $0.1$, and $1$; the black line shows the prior mode. \textbf{Top row:} isotropic GP-prior data with ground-truth $\ell=1$; \textbf{middle row:} Schwefel; \textbf{bottom row:} Styblinski--Tang. Training inputs are Sobol points. Curves show the mean and $95\%$ confidence interval over 10 independent replicates. GD is run for at most 1000 steps.}
\label{fig:Ablation learning rate}
\end{figure}

\subsection{Predictive density along BO trajectories}
\label{app:BO trace NLPD}

The predictive-density experiment in Figure~\ref{fig:NLPD_GPprior} uses Sobol training inputs. BO instead produces an adaptive, nonuniform sequence of observations, so we repeat the comparison on completed Vanilla-MAP trajectories. The trajectories optimize anisotropic SE GP-prior objectives with $\gamma=0.3$ in 30D, 50D, and 100D, using total budgets of 300, 500, and 1000 evaluations, respectively.

For each training size in Figure~\ref{fig:NLPD_BOtrace}, the isotropic and ARD models are fit to the first $n$ observations in the trajectory and evaluated on observations collected later in the same trajectory. This chronological split avoids using later BO observations to predict earlier ones. It also gives both fitted models exactly the same training and evaluation locations, isolating the surrogate comparison from differences in the points that Iso-MAP and Vanilla-MAP would themselves select.

The isotropic fits have lower NLPD at every reported dimension and training size in Figure~\ref{fig:NLPD_BOtrace}. The difference narrows as more observations are added, particularly in 100D, but remains visible over the reported range. The predictive-density advantage observed under Sobol sampling therefore also appears on the adaptive designs encountered during BO.

\begin{figure}[t]
\centering
\includegraphics[width=0.9\textwidth]{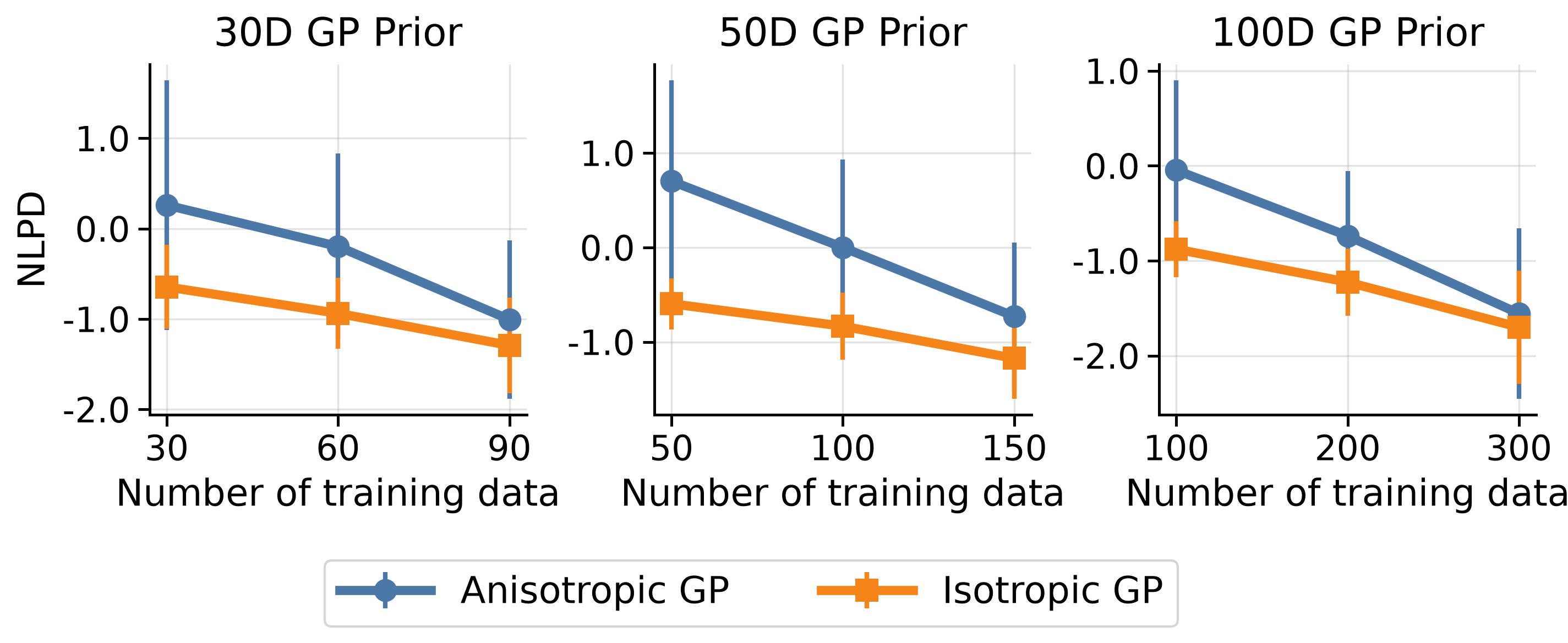}
\caption{Held-out NLPD for isotropic and ARD GP fits along Vanilla-MAP trajectories on anisotropic SE GP-prior objectives with $\gamma=0.3$. At each reported training size, both models are fit to the first $n$ observations and evaluated on later observations from the same completed trajectory. MAP estimation uses L-BFGS-B. Curves show the mean and $95\%$ confidence interval over 20 independent BO trajectories; lower NLPD is better.}
\label{fig:NLPD_BOtrace}
\end{figure}

\subsection{Predictive density on deterministic synthetic objectives}
\label{app:synthetic-nlpd-diagnostic}

We also test whether the predictive-density result extends beyond objectives sampled from a GP prior. Figure~\ref{fig:NLPD_Synthetic} considers Schwefel and Styblinski--Tang in 30D, 50D, and 100D. For each training size, isotropic and ARD GPs are fit by MAP with L-BFGS-B using the same Sobol training inputs, and predictive density is evaluated on held-out standardized responses.

The isotropic fits give lower NLPD in every panel, with the largest differences generally occurring at the smallest training sizes. Thus, the effect seen in Figure~\ref{fig:NLPD_GPprior} is not specific to GP-prior-generated objectives. Schwefel and Styblinski--Tang are coordinate-symmetric, so we use this result as a complementary non-GP predictive check; the anisotropic GP-prior experiment in Figure~\ref{fig:NLPD_GPprior} remains the more direct test when coordinatewise smoothness is heterogeneous.

\begin{figure}[t]
\centering
\includegraphics[width=0.9\textwidth]{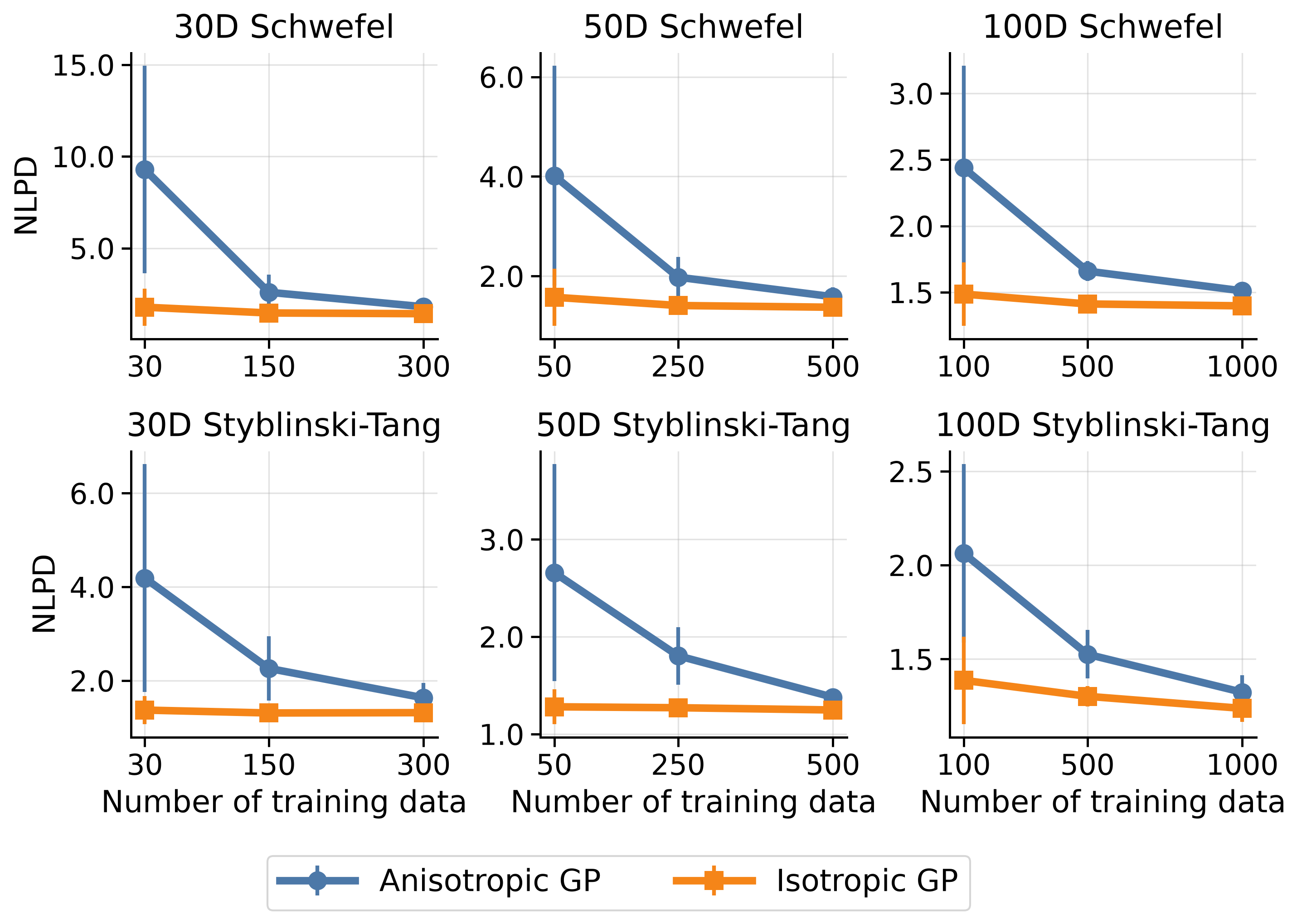}
\caption{Held-out NLPD for isotropic and ARD GP fits on Schwefel and Styblinski--Tang in 30D, 50D, and 100D. MAP estimation uses L-BFGS-B and training inputs are Sobol points. Curves show the median and $95\%$ confidence interval over 10 independent replicates; lower NLPD is better. The isotropic fits have lower NLPD throughout the reported low-data regimes.}
\label{fig:NLPD_Synthetic}
\end{figure}

\subsection{Additional lengthscale-fitting diagnostics}
\label{app:additional-lengthscale-diagnostics}

The main lengthscale diagnostic in Figure~\ref{fig:Fitted lscale} uses data from an isotropic SE GP prior and fits an SE kernel. Figures~\ref{fig:Iso_Fitted_Lscale_Matern}--\ref{fig:Fitted_Lscale_Synthetic} repeat the same fitting comparison under different kernels and data sources.

Figure~\ref{fig:Iso_Fitted_Lscale_Matern} retains the isotropic GP-prior data but fits a Mat\'ern~$5/2$ kernel. The qualitative behavior from the main SE diagnostic remains visible. In 50D and 100D, the L-BFGS-B ARD fit moves to substantially larger average lengthscales, while the isotropic L-BFGS-B fit remains much closer to the known shared scale. The GD fits remain nearer the prior-mode region. The optimizer-sensitive ARD behavior is therefore not specific to fitting the same SE kernel family used to generate the data.

\begin{figure}[t]
\centering
\includegraphics[width=0.9\textwidth]{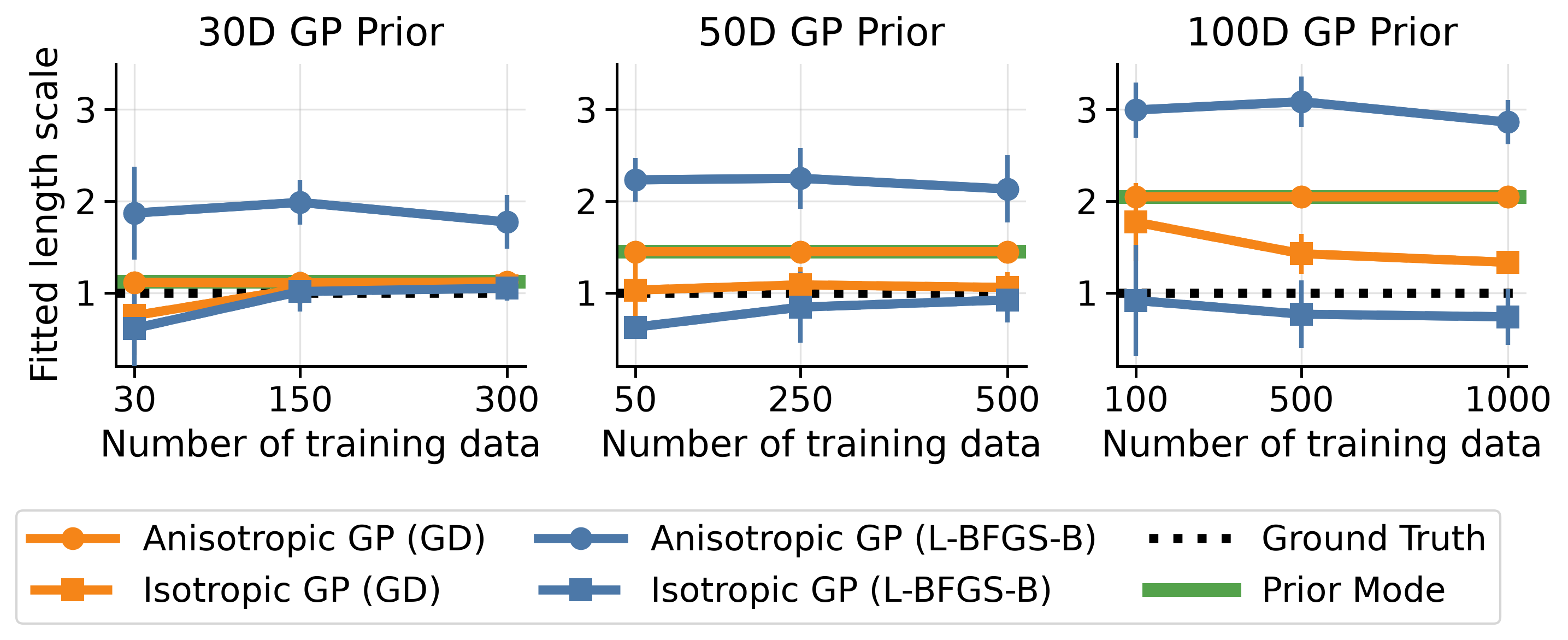}
\caption{Controlled lengthscale fitting on isotropic GP-prior data using Mat\'ern~$5/2$ fitted kernels. The data-generating shared lengthscale is $\ell=1$. MAP estimation uses L-BFGS-B or GD. Dashed curves denote isotropic GPs and solid curves denote ARD GPs; the black dotted line shows the ground truth and the green line shows the prior mode. Training inputs are Sobol points. Results show the mean and $95\%$ confidence interval over 10 independent GP-prior realizations. For ARD fits, the plotted value is the average fitted lengthscale across coordinates.}
\label{fig:Iso_Fitted_Lscale_Matern}
\end{figure}

Figures~\ref{fig:ARD_Fitted_Lscale_SE} and~\ref{fig:ARD_Fitted_Lscale_Matern} instead use data from anisotropic SE GP priors with $\gamma=0.5$, so the generating process has different lengthscales across coordinates. We fit these data with either an SE or Mat\'ern~$5/2$ kernel. In both cases, the average fitted ARD lengthscale remains strongly optimizer-dependent, particularly in 50D and 100D: L-BFGS-B moves to much larger average lengthscales than GD. The same pattern therefore persists when the data-generating process itself is anisotropic.

The black dotted line in these figures is the replicate-averaged mean of the generating coordinatewise lengthscales. It is included only as a scalar reference; there is no single ground-truth lengthscale for these anisotropic objectives, and the average fitted ARD lengthscale does not measure recovery of the full coordinatewise vector.

\begin{figure}[t]
\centering
\includegraphics[width=0.9\textwidth]{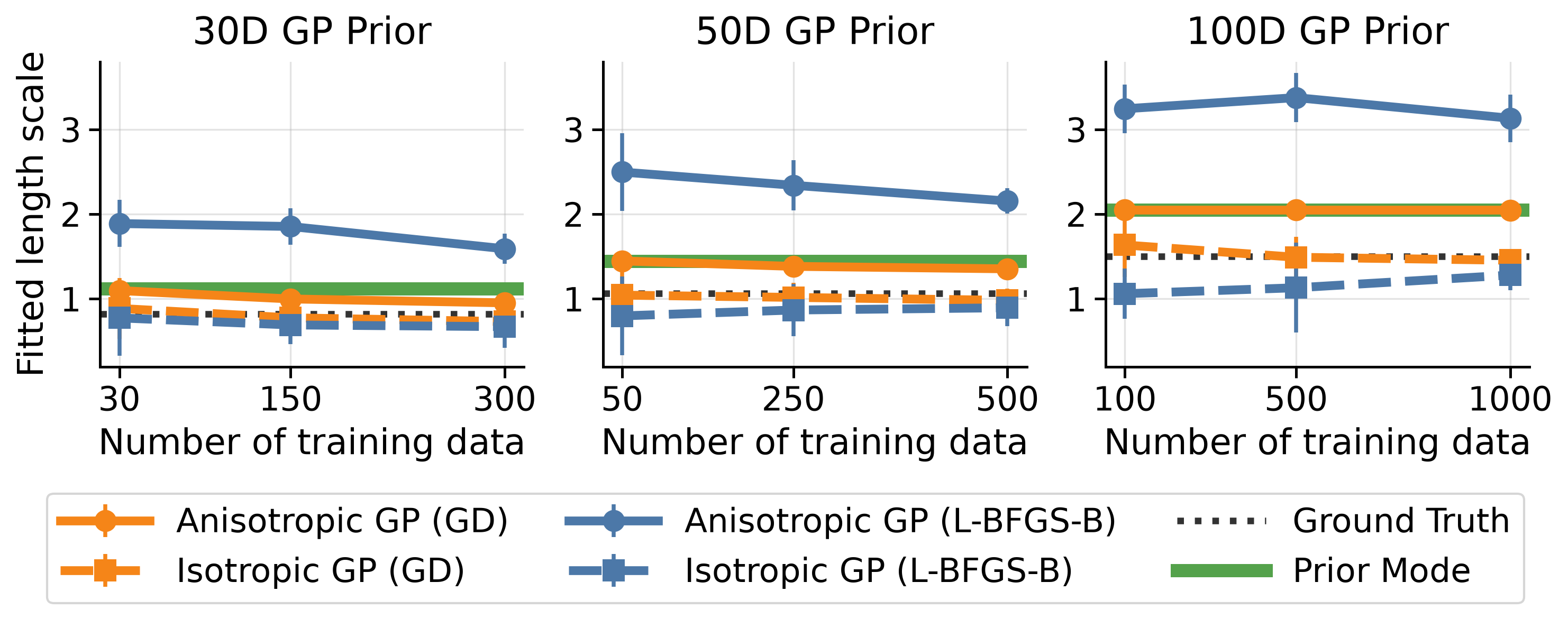}
\caption{Lengthscale fitting on anisotropic GP-prior data using SE fitted kernels. The data-generating coordinatewise lengthscales are sampled from Eq.~\ref{eq:length scale distribution} with $\gamma=0.5$. MAP estimation uses L-BFGS-B or GD. Dashed curves denote isotropic GPs and solid curves denote ARD GPs. The black dotted line shows the replicate-averaged mean of the generating coordinatewise lengthscales and the green line shows the prior mode. Results show the mean and $95\%$ confidence interval over 10 independent GP-prior realizations. For ARD fits, the plotted value is the average fitted lengthscale across coordinates. }
\label{fig:ARD_Fitted_Lscale_SE}
\end{figure}

\begin{figure}[t]
\centering
\includegraphics[width=0.9\textwidth]{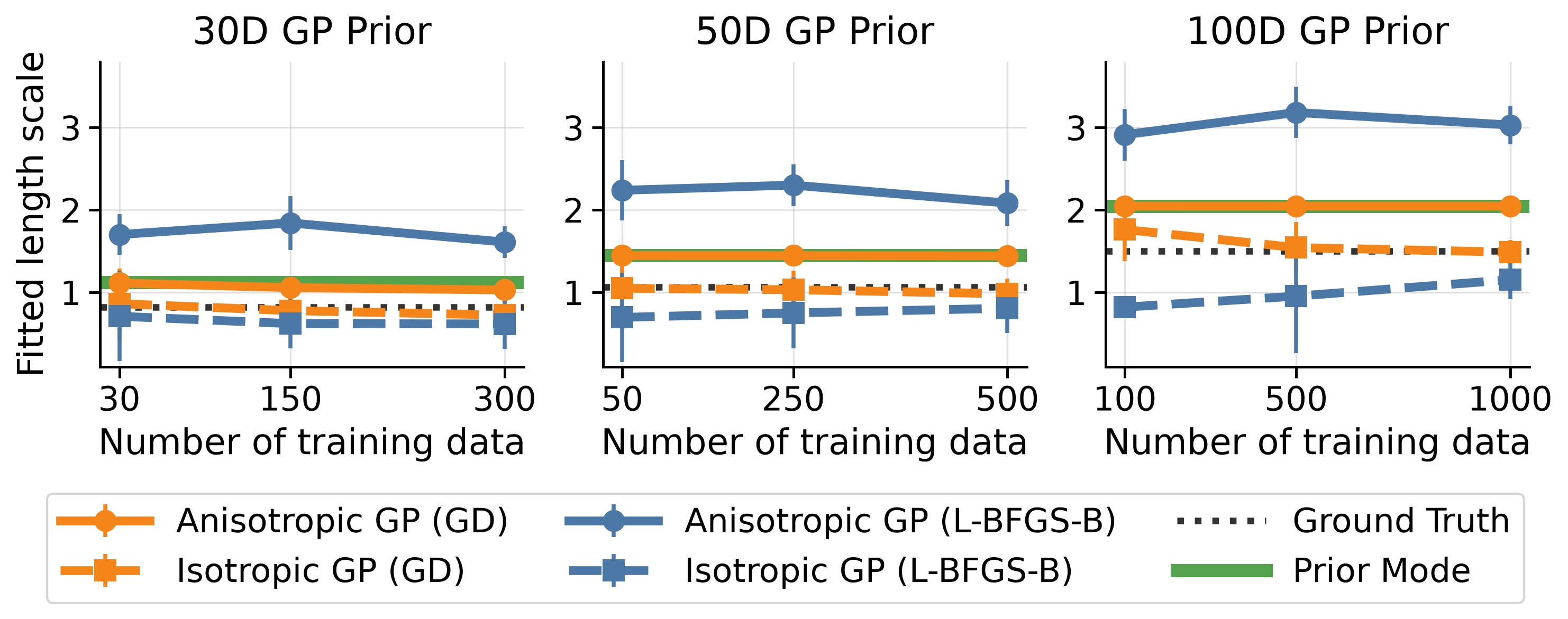}
\caption{Lengthscale fitting on the same anisotropic GP-prior setting as Figure~\ref{fig:ARD_Fitted_Lscale_SE}, now using Mat\'ern~$5/2$ fitted kernels. The data-generating coordinatewise lengthscales are sampled from Eq.~\ref{eq:length scale distribution} with $\gamma=0.5$. The black dotted line shows their replicate-averaged mean and the green line shows the prior mode. Results show the mean and $95\%$ confidence interval over 10 independent GP-prior realizations. For ARD fits, the plotted value is the average fitted lengthscale across coordinates. GD is run for at most 1000 steps.}
\label{fig:ARD_Fitted_Lscale_Matern}
\end{figure}

Finally, Figure~\ref{fig:Fitted_Lscale_Synthetic} applies the fitting diagnostic to deterministic Schwefel and Styblinski-Tang objectives. These functions do not have a GP-generating lengthscale, so the curves are not estimates of a known true parameter. They instead show how the fitted surrogate changes as additional observations are supplied. The optimizer dependence seen in the GP-prior experiments remains clear: L-BFGS-B produces substantially larger average ARD lengthscales than GD in many of the higher-dimensional settings, while GD remains closer to the prior-mode region.

\begin{figure}[t]
\centering
\includegraphics[width=0.9\textwidth]{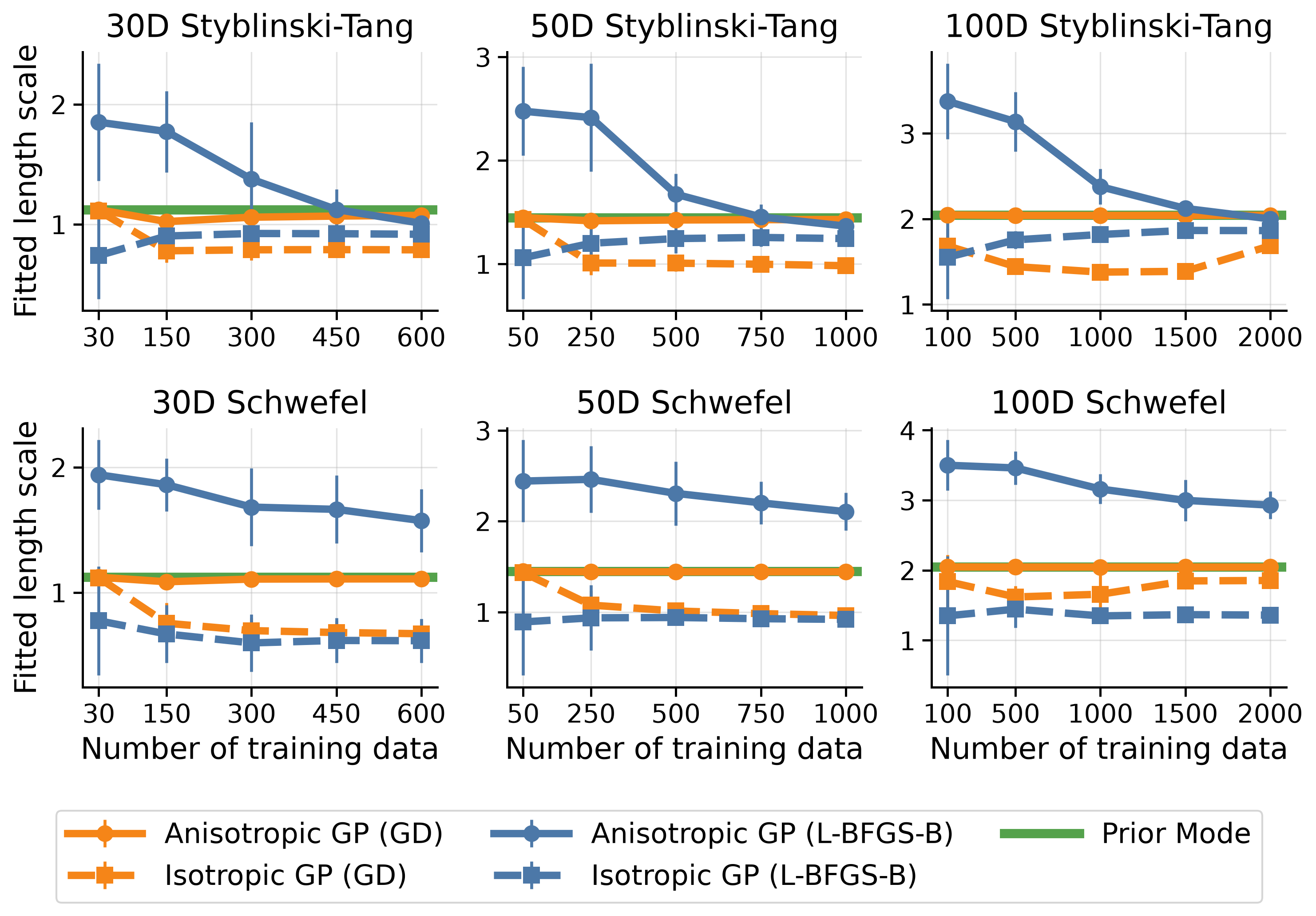}
\caption{Fitted lengthscales on Styblinski--Tang (\textbf{top row}) and Schwefel (\textbf{bottom row}) using SE fitted kernels. MAP estimation uses L-BFGS-B or GD with Sobol training inputs. The green line shows the prior mode. Results show the mean and $95\%$ confidence interval over 10 independent replicates; for ARD fits, the plotted value is the average fitted lengthscale across coordinates. These deterministic objectives do not have a GP-generating lengthscale. GD is run for at most 1000 steps.}
\label{fig:Fitted_Lscale_Synthetic}
\end{figure}

These checks support two points used in the main paper. First, the fitted ARD model can be sensitive to the numerical hyperparameter fit across kernels and objective classes. Second, the difference between isotropic and ARD fitting has predictive consequences: the isotropic models achieve lower held-out NLPD not only under the controlled Sobol designs in the main text, but also along BO trajectories and on deterministic objectives. The BO experiments then test whether this finite-data predictive advantage is useful for optimization when balanced against the loss of coordinatewise flexibility.

\section{Experimental and implementation details}
\label{app:experimental-details}

This appendix collects the experimental details used throughout the paper. We first state the common reporting conventions and benchmark definitions, then describe the anisotropic GP-prior objectives, GP fitting, BO settings, and baseline implementations. Additional settings that apply only to a particular experiment are given with that experiment in the corresponding appendix section.

\subsection{Objective and reporting conventions}
\label{app:objective-plotting-conventions}

All BO results use a minimization convention. Schwefel, Styblinski--Tang, Powell, SVM, and MOPTA08 are evaluated directly as minimization problems. For benchmarks whose native objective is a reward, we report negative reward, so lower values are better throughout the BO traces and final-value comparisons.

Unless stated otherwise, the sequential BO experiments use 10 independent runs with different random seeds and initial designs. The main BO figures report the median best-observed value, with shaded regions showing the interquartile range. Diagnostic figures use the summary statistics stated in their captions. GP inputs are represented in normalized coordinates for fitting and acquisition optimization and are mapped to the native benchmark domain by the corresponding objective wrapper.

For every GP fit, the observed responses are standardized before fitting. The standardization statistics are computed from the training responses used in that fit. When NLPD is evaluated on held-out responses, those responses are transformed using the same training set statistics; no held-out responses are used to determine the standardization.

\subsection{Benchmark definitions}
\label{app:benchmark-definitions}

\paragraph{Synthetic functions.}
For the $D$-dimensional Schwefel function,
\[
f(x)=418.9829D-\sum_{i=1}^{D}x_i\sin\!\left(\sqrt{|x_i|}\right),
\]
we use the domain $[-500,500]^D$. Its global minimum is $f(x^\star)=0$ at $x^\star=(420.9687,\ldots,420.9687)$. The Styblinski--Tang function is
\[
f(x)=\frac{1}{2}\sum_{i=1}^{D}\left(x_i^4-16x_i^2+5x_i\right),
\]
and is evaluated on $[-5,5]^D$. Its global minimum is approximately $-39.166D$ at $x^\star=(-2.9035,\ldots,-2.9035)$.
For Powell, the standard block-of-four form used in the manuscript is
\[
f(x)=\sum_{i=1}^{\lfloor D/4\rfloor}
\left[
(x_{4i-3}+10x_{4i-2})^2
+5(x_{4i-1}-x_{4i})^2
+(x_{4i-2}-2x_{4i-1})^4
+10(x_{4i-3}-x_{4i})^4
\right],
\]
with domain $[-4,5]^D$ and minimum value $0$. When $D$ is not divisible by $4$, the remaining coordinates are treated as inactive and do not affect the objective; $x^\star=0$ is therefore a global minimizer in all dimensions considered here.

The main synthetic experiments use $D=100$ and a budget of 1000 evaluations. Appendix~\ref{app:extended-bo-evidence} also reports 50D experiments with a budget of 500 evaluations.

\paragraph{Rover.}
Rover is a path-planning benchmark used in HDBO studies~\citep{wang2018batched,eriksson2019scalable,ngo2024high}. The main 100D version parameterizes the path with 50 two-dimensional B-spline control points. Inputs are optimized in normalized coordinates and mapped to the control-point range used by the benchmark wrapper. We also report a 60D variant using 30 two-dimensional control points in Appendix~\ref{app:extended-bo-evidence}.

\paragraph{MuJoCo controller-tuning benchmarks.}
HalfCheetah-v2 and Humanoid-v2 optimize the parameters of linear control policies in MuJoCo-v2 environments following the benchmark implementation used in prior HDBO work~\citep{wang2020learning}. The reported search spaces have dimensions 102 and 6392, respectively. We use the implementation available at
\url{https://github.com/DonneyF/mujoco-v2-for-global-optimization}
and evaluate each policy using 10 rollouts rather than the default 5 to reduce variability from randomized initial states. Appendix~\ref{app:extended-bo-evidence} additionally considers the 102D Walker2d-v2 controller-tuning problem under the same protocol.

\paragraph{Action-sequence benchmarks.}
Walker2d-v5 and HumanoidStandup-v5 directly optimize action sequences. Walker2d-v5 uses a horizon of 50 time steps and 6 action variables per step, giving $D=300$. HumanoidStandup-v5 uses 59 time steps and 17 action variables per step, giving $D=1003$~\citep{xu2024standard,towers2024gymnasium}. For both tasks, inputs are optimized in $[0,1]^D$ and mapped to the environment action range by the wrapper, and the initial simulator state is fixed throughout optimization. Both problems are simulated in the Mujoco-v5 environment.

\paragraph{SVM.}
The SVM benchmark is a 388D hyperparameter-optimization problem consisting of 385 kernel lengthscale parameters and 3 regularization parameters~\citep{eriksson2021high}. We use the implementation from
\url{https://github.com/XZT008/Standard-GP-is-all-you-need-for-HDBO/blob/main/benchmark/svm_benchmark.py}.

\paragraph{MOPTA08.}
MOPTA08 is a 124D vehicle-design problem with 68 constraints~\citep{jones2008large}. Following prior HDBO use of this benchmark~\citep{eriksson2021high,xu2024standard}, the experiment in Appendix~\ref{app:mopta08-extended} converts the constrained problem to the penalized scalar objective
\[
f_{\mathrm{pen}}(x)
=
f_{\mathrm{raw}}(x)
+
10\sum_{j=1}^{68}\max\{c_j(x),0\},
\]
where $c_j(x)>0$ denotes violation of constraint $j$.

\paragraph{Molecular optimization.}
The batch experiments in Appendix~\ref{app:batch BO} consider four GuacaMol goal-directed molecular-design tasks~\citep{brown2019guacamol} in the 256D latent space of a pretrained SELFIES-VAE~\citep{maus2022local}, following the setup used by \citet{doumont2025we}. Fexofenadine MPO (\texttt{fexo}) and Ranolazine MPO (\texttt{rano}) combine similarity to a reference molecule with physicochemical objectives, while Median Molecules 1 and 2 (\texttt{med1}, \texttt{med2}) reward structural compromise between two reference compounds. Each latent coordinate is restricted to $[-8,8]$. The batch sizes and evaluation budgets are given in Appendix~\ref{app:batch BO}.

\subsection{Anisotropic GP-prior objectives}
\label{app:gp-prior-objectives}

The controlled GP-prior experiments use random functions generated from anisotropic SE GP priors. Each coordinate has its own generating lengthscale, so these objectives are deliberately outside the isotropic model class. Following prior work~\citep{muller2021local,hennig2012entropy}, we scale the lengthscales with the expected Euclidean distance $\Delta(D)=\mathbb E[\|x-x'\|_2]$ between two independent uniform points. Using the upper bound from \citet{anderssen1976concerning},
\[
\Delta(D)
\leq
\sqrt{\frac{D}{6}}
\sqrt{
\frac{1+2\sqrt{1-\frac{3}{5D}}}{3}
},
\]
we denote the right-hand side by $\Delta(D)_{\mathrm{sub}}$ and rescale it so that $\Delta(2)_{\mathrm{sub}}=0.1$. The coordinatewise generating lengthscales are then sampled independently as
\begin{equation}
\ell_k\sim
\mathcal U\!\left(
2\Delta(D)_{\mathrm{sub}}(1-\gamma),
2\Delta(D)_{\mathrm{sub}}(1+\gamma)
\right),
\label{eq:length scale distribution}
\end{equation}
where $\gamma$ controls the amount of anisotropy (with larger values of $\gamma$ producing a wider spread of coordinatewise lengthscales). 

Each random GP-prior objective is approximated with $M=4096$ random Fourier features~\citep{rahimi2007random,wilson2020efficiently}. The main BO experiments use 100D objectives with $\gamma\in\{0.3,0.5,0.7\}$ and a budget of 1000 evaluations. The corresponding 50D experiments use a budget of 500 evaluations. Appendix~\ref{app:extended-bo-evidence} additionally considers the more strongly anisotropic cases $\gamma=0.9$ and $0.99$, as well as objectives with inactive dimensions. The predictive-density experiment in Figure~\ref{fig:NLPD_GPprior} uses $\gamma=0.3$, while the anisotropic lengthscale diagnostics in Figures~\ref{fig:ARD_Fitted_Lscale_SE} and~\ref{fig:ARD_Fitted_Lscale_Matern} use $\gamma=0.5$.

\subsection{GP models and hyperparameter fitting}
\label{app:gp-model-fitting}

The GP-based methods are implemented with BoTorch~0.16.1~\citep{balandat2020botorch} and GPyTorch~1.14.3~\citep{gardner2018gpytorch}. The central comparison is between Vanilla-MAP and Iso-MAP. Vanilla-MAP uses one ARD lengthscale per input coordinate, whereas Iso-MAP replaces these $D$ parameters with one shared lengthscale. Apart from this parameterization change, the two methods use the same GP-fitting and BO machinery.

For Vanilla-MAP and Iso-MAP, the kernel signal variance is fixed at one and the free lengthscale and noise parameters are fit by MAP. Fixing the signal variance is paired with the output standardization described above. Vanilla-MAP uses BoTorch's \texttt{get\_covar\_module\_with\_dim\_scaled\_prior} with \texttt{ard\_num\_dims=D}. Iso-MAP uses the corresponding custom isotropic covariance module with \texttt{ard\_num\_dims=None}, while retaining the ambient dimension $D$ when constructing the prior. In both cases the raw lengthscale prior is
\[
\ell\sim
\operatorname{LogNormal}
\!\left(
\sqrt{2}+\frac{1}{2}\log D,\sqrt{3}
\right),
\]
following the dimension-scaled prior of \citet{hvarfner2024vanilla}. Vanilla-MAP applies this univariate prior to each of its $D$ ARD lengthscales; Iso-MAP applies the same ambient-$D$-scaled prior once to the shared lengthscale. Appendix~\ref{app:isotropic-restriction-proof} explains how this implemented prior relates to the inverse-squared parameterization used in the theory.

We also report MLE-based variants. MSR~\citep{papenmeier2025understanding} uses an ARD GP with dimension-aware lengthscale initialization, while Iso-MLE uses the corresponding initialization strategy with one shared lengthscale. The current main experiments use an SE kernel for Vanilla-MAP, Iso-MAP, and Iso-MLE, whereas MSR uses its Mat\'ern~$5/2$ kernel configuration. This distinction is summarized in Table~\ref{tab:gp-variant-summary}. The Iso-MLE--MSR results therefore provide useful contextual evidence, but they are not a kernel-matched isolation of the ARD-to-isotropic change. The controlled lengthscale comparison in the main paper is Vanilla-MAP versus Iso-MAP; additional matched kernel results are reported in Appendix~\ref{app:Matern}.

\begin{table}[t]
\centering
\caption{GP variants used in the main BO experiments. Vanilla-MAP and Iso-MAP form the primary matched comparison: they use the same kernel family and fitting procedure and differ in the lengthscale parameterization. The current MSR and Iso-MLE main runs use different kernel families and are therefore treated as a secondary comparison.}
\small
\setlength{\tabcolsep}{7pt}
\begin{tabular}{llll}
\hline
Method & Lengthscale model & Hyperparameter fit & Main kernel \\
\hline
Vanilla-MAP & ARD ($D$ parameters) & MAP & SE \\
Iso-MAP & Shared (1 parameter) & MAP & SE \\
MSR & ARD ($D$ parameters) & MLE & Mat\'ern~$5/2$ \\
Iso-MLE & Shared (1 parameter) & MLE & SE \\
\hline
\end{tabular}
\label{tab:gp-variant-summary}
\end{table}

\subsection{BO and acquisition-optimization settings}
\label{app:shared-bo-settings}

The sequential Vanilla-MAP and Iso-MAP runs begin with 10 Sobol points and select subsequent evaluations using LogEI~\citep{ament2023unexpected}. Acquisition optimization uses BoTorch's \texttt{optimize\_acqf} with five restarts and a maximum of 300 optimizer iterations. We use RAASP-style acquisition initialization~\citep{papenmeier2025understanding}, activated through \texttt{sample\_around\_best=True}. These settings are shared between Vanilla-MAP and Iso-MAP so that their direct comparison isolates the GP lengthscale parameterization.

GP hyperparameters are fit with \texttt{fit\_gpytorch\_mll}. In the reported sequential experiments, Vanilla-MAP, Iso-MAP, MSR, and Iso-MLE refit their GP hyperparameters every 5 BO evaluations in 50D and every 10 BO evaluations in all higher-dimensional settings. In the batch experiments of Appendix~\ref{app:batch BO}, the GP is refit at every BO iteration and the acquisition function is qLogEI with 256 Monte Carlo samples.

The evaluation budget is 1000 for the main 100D GP-prior and synthetic experiments and for the main real-world benchmarks. The 50D GP-prior and synthetic experiments use 500 evaluations. SAASBO is run with smaller budgets in the experiments where its computational cost makes the full budget impractical; the exact budget is stated in the corresponding figure caption. Batch and molecular-optimization budgets are stated in Appendix~\ref{app:batch BO}.

\subsection{Baseline implementations}
\label{app:baselines}

The additional methods provide HDBO context for the matched Vanilla-MAP versus Iso-MAP comparison. We use their existing implementations where possible and state the manuscript-level changes below.

\paragraph{MSR.}
MSR~\citep{papenmeier2025understanding} is implemented from the authors' public code at
\url{https://github.com/lpapenme/understanding-hdbo-code}.
It uses MLE, dimension-aware lengthscale initialization, an ARD Mat\'ern~$5/2$ GP, LogEI, and RAASP-style acquisition initialization.

\paragraph{Iso-MLE.}
Iso-MLE uses the same MLE and dimension-aware initialization principle as MSR but replaces the $D$ ARD lengthscales with one shared lengthscale through the isotropic covariance module. As noted above, the current main Iso-MLE runs use an SE kernel, so this comparison with MSR is not kernel-matched.

\paragraph{TuRBO.}
TuRBO~\citep{eriksson2019scalable} is based on the BoTorch TuRBO tutorial implementation,
\url{https://botorch.org/docs/tutorials/turbo_1/}.
The reported runs use LogEI for candidate selection. RAASP-style acquisition initialization is not used for this baseline.

\paragraph{BAxUS.}
BAxUS~\citep{papenmeier2022increasing} is based on the BoTorch BAxUS tutorial,
\url{https://botorch.org/docs/tutorials/baxus/},
with LogEI used for the reported experiments. BAxUS initializes and optimizes within a lower-dimensional subspace, so its initial best-observed value need not match methods initialized directly in the full input space.

\paragraph{Linear BO.}
Linear BO~\citep{doumont2025we} uses the public implementation at
\url{https://github.com/colmont/linear-bo}.

\paragraph{SAASBO.}
SAASBO~\citep{eriksson2021high} uses the BoTorch tutorial implementation,
\url{https://botorch.org/docs/tutorials/saasbo},
with the tutorial hyperparameter settings. Its substantially larger computational cost motivates the reduced evaluation budgets used in several experiments.

\paragraph{CMA-ES.}
CMA-ES~\citep{hansen2006cma} is implemented using \texttt{pycma}, with initial sampling standard deviation $0.5$. CMA-ES begins from one starting point rather than the Sobol initial design used by the matched GP methods, so the initial best-observed values in the traces can differ.

\paragraph{Sobol.}
The Sobol baseline uses \texttt{torch.quasirandom.SobolEngine} to sample the normalized search domain without fitting a surrogate.

Note that we also propose \textbf{BO-FIM} in Appendix \ref{app:BO-FIM}, which implements a switching rule to decide between using the isotropic or ARD GP at each BO iteration. The switching rule depends on the Fisher information matrix of the GP lengthscales, which are estimated from the observation data.

\subsection{Code and compute}
\label{app:code-compute-note}

The experiment code included with the submission contains the benchmark wrappers, model implementations, and BO loops used for the reported experiments. We plan to release the complete codebase publicly upon acceptance.

Experiments were run on institutional computing clusters. The isolated GP-fitting timing experiment reported in Appendix~\ref{app:extended-bo-evidence} uses 4-core CPU jobs on nodes with Intel Xeon CPU Max 9470 processors and 512~GB DDR5 RAM. The end-to-end GPU runtime comparison in the same appendix uses an NVIDIA L40 GPU with 45~GB of GPU memory. Because the timing studies compare methods under common hardware, their reported wall-clock values should be interpreted for those specific implementations and compute environments rather than as hardware-independent complexity measurements.

\section{Additional sequential BO evidence and analysis}
\label{app:extended-bo-evidence}

This appendix collects the additional sequential BO results that support and qualify the main experiments. We first examine the final-value distributions and paired Iso-MAP versus Vanilla-MAP comparisons, then stress-test the isotropic assumption under stronger anisotropy and inactive dimensions. We also report results in additional dimensions and benchmark settings, examine sensitivity to the kernel family, compare where the two methods sample, and summarize the computational cost of fitting the different surrogate parameterizations.

\subsection{Final-value distributions and paired Iso-MAP--Vanilla-MAP analysis}
\label{app:final-value-distributions}
\label{app:paired-analysis}

The main BO figures emphasize optimization trajectories through the median best-observed value. Figures~\ref{fig:synthetic_violin}--\ref{fig:realwworld_violin} show the corresponding final best-observed values across the independent runs, which are meant to make the final run-to-run variation explicit and complement the paired analysis below.

\begin{figure}[t]
\centering
\includegraphics[width=\textwidth]{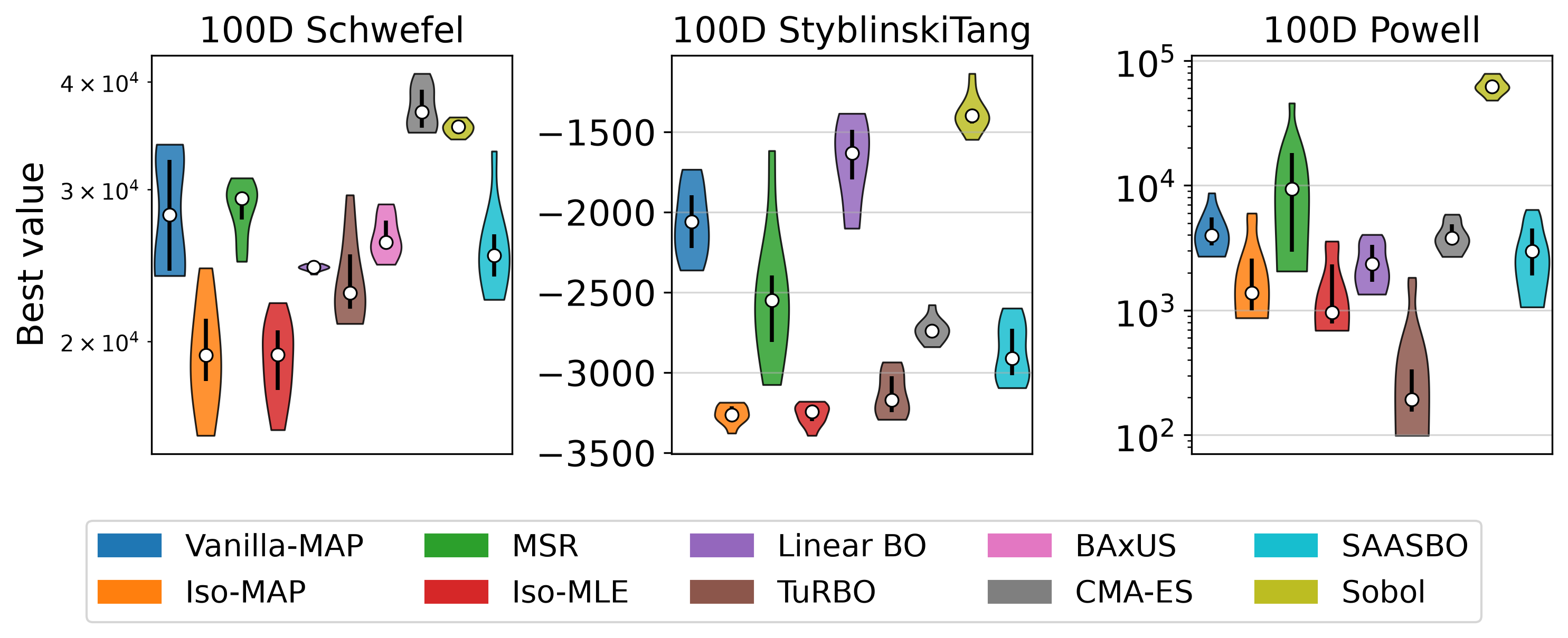}
\caption{Final best-observed value distributions for the 100D Schwefel, Styblinski--Tang, and Powell experiments. Each violin summarizes the final best-observed value over 10 independent runs. Lower values are better. This figure complements the optimization trajectories in Figure~\ref{fig:synthetic} by showing the run-to-run variation at the final budget.}
\label{fig:synthetic_violin}
\end{figure}

\begin{figure}[t]
\centering
\includegraphics[width=\textwidth]{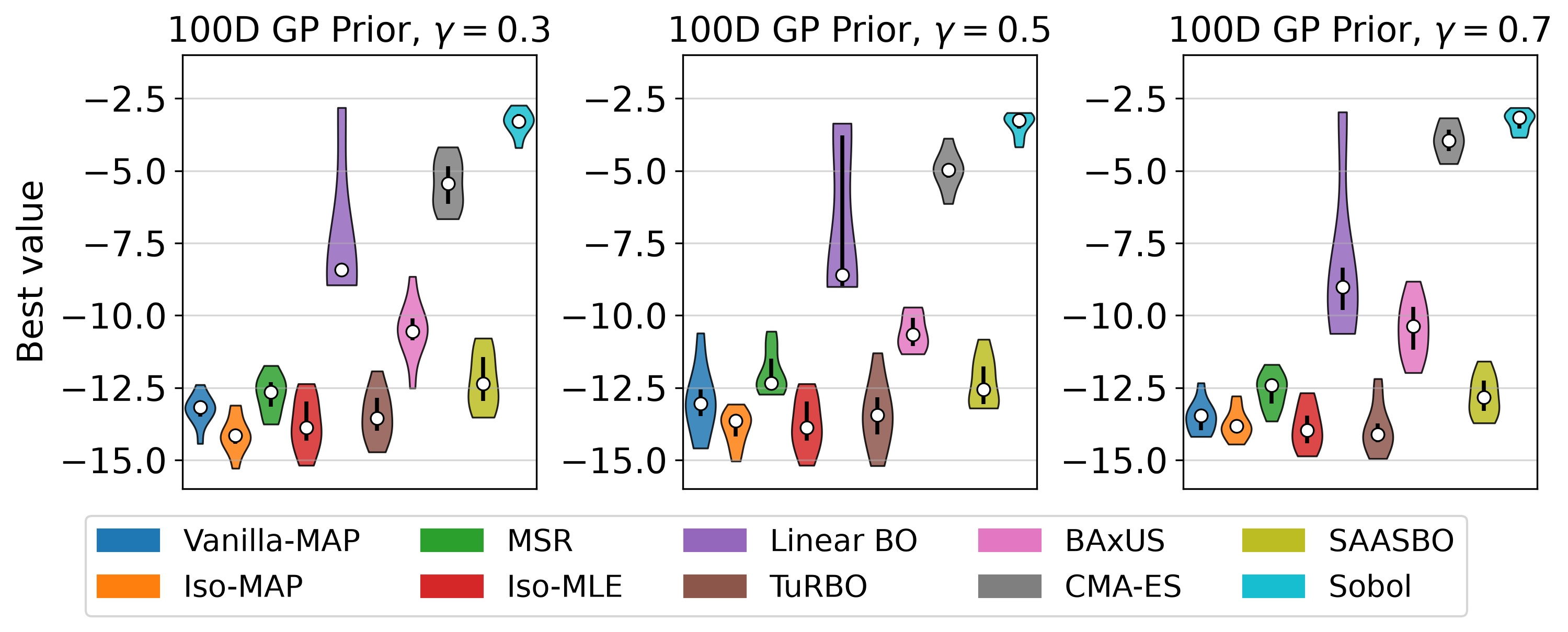}
\caption{Final best-observed value distributions for the 100D anisotropic GP-prior experiments with $\gamma\in\{0.3,0.5,0.7\}$. Each violin summarizes the final best-observed value over 10 independent runs. Lower values are better. The objectives and baseline implementations are the same as in Figure~\ref{fig:GPprior}.}
\label{fig:GPprior_violin}
\end{figure}

\begin{figure}[t]
\centering
\includegraphics[width=\textwidth]{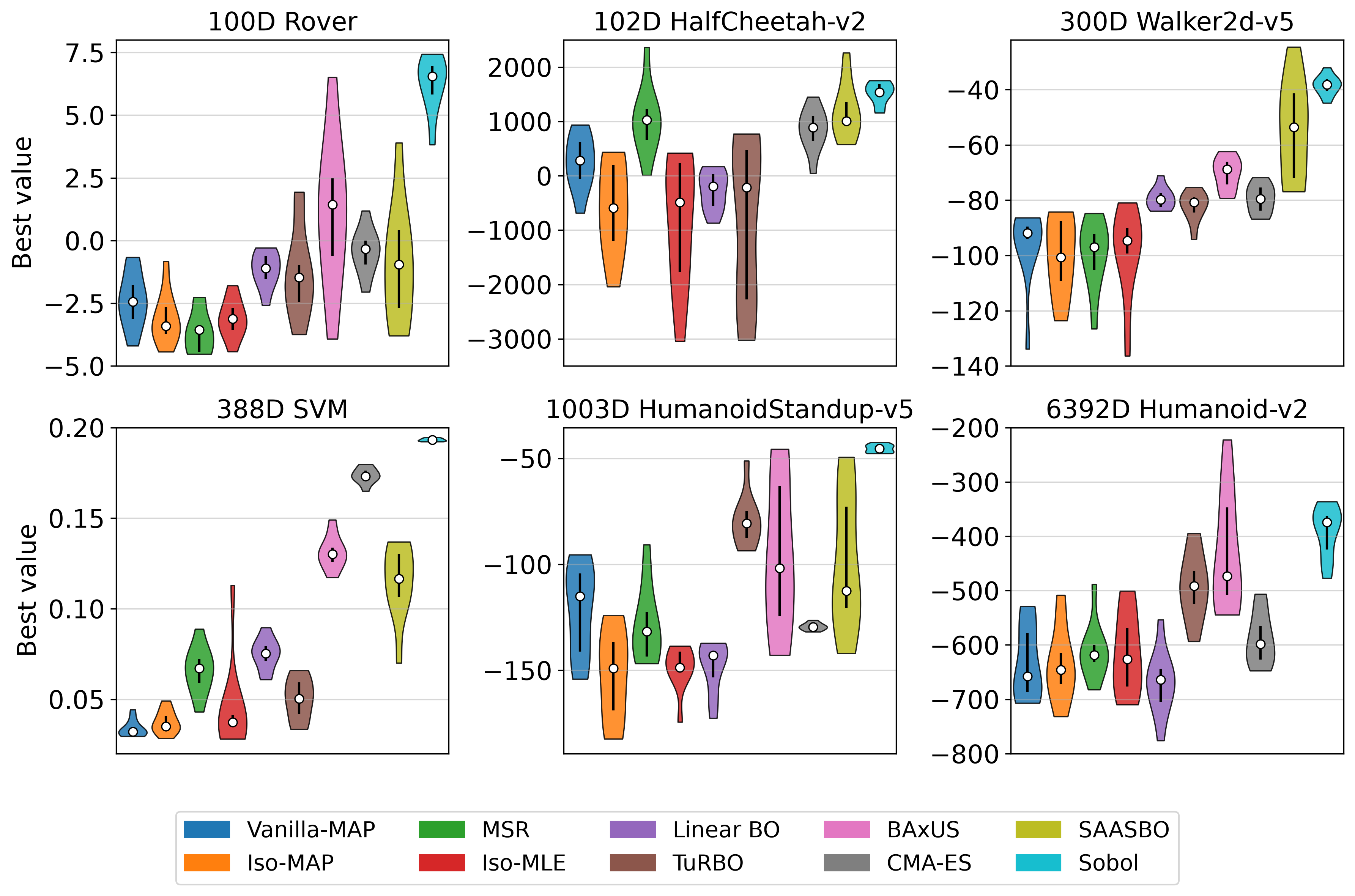}
\caption{Final best-observed value distributions for the real-world benchmarks. Each violin summarizes the final value over 10 independent runs. The final budget is 1000 evaluations for the main methods; SAASBO uses 500 evaluations on Rover, HalfCheetah-v2, Walker2d-v5, and SVM and 300 evaluations on HumanoidStandup-v5, and is not run on Humanoid-v2. Lower values are better; reward-maximization tasks are reported as negative reward.}
\label{fig:realwworld_violin}
\end{figure}

Because Iso-MAP and Vanilla-MAP use matched initial designs in each replicate, we compare their final values using paired differences. Let $\Delta=f_{\mathrm{Iso}}^*-f_{\mathrm{Vanilla}}^*$ denote the difference in final best-observed value for a given replicate, so $\Delta<0$ favors Iso-MAP. Table~\ref{tab:paired-analysis-all} reports the median paired difference together with a $95\%$ bootstrap percentile confidence interval for the median. For each problem, the interval is computed by resampling the 10 paired differences with replacement 10,000 times, calculating the median for each bootstrap sample, and taking the $2.5$th and $97.5$th percentiles of the resulting bootstrap distribution. We also report the number of replicates in which Iso-MAP finishes lower and a one-sided Wilcoxon signed-rank test for a negative paired shift. Note that the reported Wilcoxon $p$-values are unadjusted for multiple comparisons and are used as benchmark-wise paired evidence.

The synthetic results are the clearest: Iso-MAP has a negative median paired difference in all three cases, wins at least 9 of 10 replicates, and gives $p<0.05$ in each one-sided test. For the anisotropic GP-prior objectives, the paired test is significant for $\gamma=0.3$ and $0.5$, but not for $\gamma=0.7$. Among the real-world benchmarks, the test provides evidence of lower final values for Rover, HalfCheetah-v2, and HumanoidStandup-v5. The differences on Walker2d-v5, SVM, and Humanoid-v2 are not significant at the $0.05$ level. 
% These results are useful for interpreting the trajectory plots: visible median separation should not be treated as a definitive improvement when the paired run-level evidence is weak.

\begin{table}[t]
\centering
\caption{Paired final-value comparison between Iso-MAP and Vanilla-MAP. Here $\Delta=f_{\mathrm{Iso}}^*-f_{\mathrm{Vanilla}}^*$, so negative values favor Iso-MAP. The $95\%$ interval is the bootstrap percentile confidence interval for the median paired difference. ``Iso wins'' is the number of paired replicates for which Iso-MAP has a lower final best-observed value. The final column gives the one-sided Wilcoxon signed-rank $p$-value for a negative paired shift.}
\small
\setlength{\tabcolsep}{4.5pt}
\renewcommand{\arraystretch}{1.08}
\begin{tabular}{l r c c r}
\hline
Problem & Median $\Delta$ & $95\%$ paired CI & Iso-MAP wins & $p$ \\
\hline
\multicolumn{5}{l}{\emph{Synthetic objectives}} \\
100D Schwefel          & -9451.71 & [-14169.33, -4001.42] & 9/10  & 0.002 \\
100D Styblinski--Tang  & -1150.74 & [-1339.72, -1049.55]  & 10/10 & 0.001 \\
100D Powell            & -3898.00 & [-5283.70, -2611.16]  & 9/10  & 0.002 \\
\hline
\multicolumn{5}{l}{\emph{Anisotropic GP-prior objectives}} \\
100D, $\gamma=0.3$     & -0.88    & [-1.33, -0.09]        & 8/10  & 0.010 \\
100D, $\gamma=0.5$     & -0.56    & [-1.20, -0.14]        & 8/10  & 0.007 \\
100D, $\gamma=0.7$     & -0.31    & [-0.77, 0.26]         & 8/10  & 0.120 \\
\hline
\multicolumn{5}{l}{\emph{Real-world benchmarks}} \\
100D Rover                 & -0.49   & [-2.00, 0.04]          & 8/10 & 0.040 \\
102D HalfCheetah-v2        & -516.17 & [-1413.75, -244.53]    & 9/10 & 0.002 \\
300D Walker2d-v5           & -1.01   & [-13.39, 5.87]         & 6/10 & 0.250 \\
388D SVM                   & 0.003   & [-0.0002, 0.006]       & 2/10 & 0.980 \\
1003D HumanoidStandup-v5   & -36.40  & [-48.05, -9.67]       & 8/10 & 0.005 \\
6392D Humanoid-v2          & -39.43  & [-95.53, 85.06]       & 6/10 & 0.460 \\
\hline
\end{tabular}
\label{tab:paired-analysis-all}
\label{tab:paired analysis synthetic}
\label{tab:paired analysis gp prior}
\label{tab:paired analysis realworld}
\end{table}

\subsection{Stress tests: strong anisotropy and inactive dimensions}
\label{app:stress-tests}
\label{app: highly anisotropic GP prior}

The main GP-prior experiments use $\gamma\leq0.7$. We first push the same construction to $\gamma=0.9$ and $0.99$ to test where the isotropic restriction becomes too costly. Figure~\ref{fig:GP prior large gamma} shows results in 50D and 100D. At $\gamma=0.9$, Iso-MAP remains close to Vanilla-MAP over the reported budgets. At $\gamma=0.99$, however, the matched comparison reverses: Vanilla-MAP reaches lower final median values than Iso-MAP in both dimensions. Iso-MAP still makes rapid early progress, but its single shared scale eventually limits its ability to represent the much stronger coordinatewise variation.

This is an important boundary case for the paper in that Iso-BO is intended as a finite-data bias, not as a claim that isotropy is always preferable. When the objective is sufficiently anisotropic and the available data begin to support learning that structure, the flexibility removed by isotropic tying can become valuable.

\begin{figure}[t]
\centering
\includegraphics[width=\textwidth]{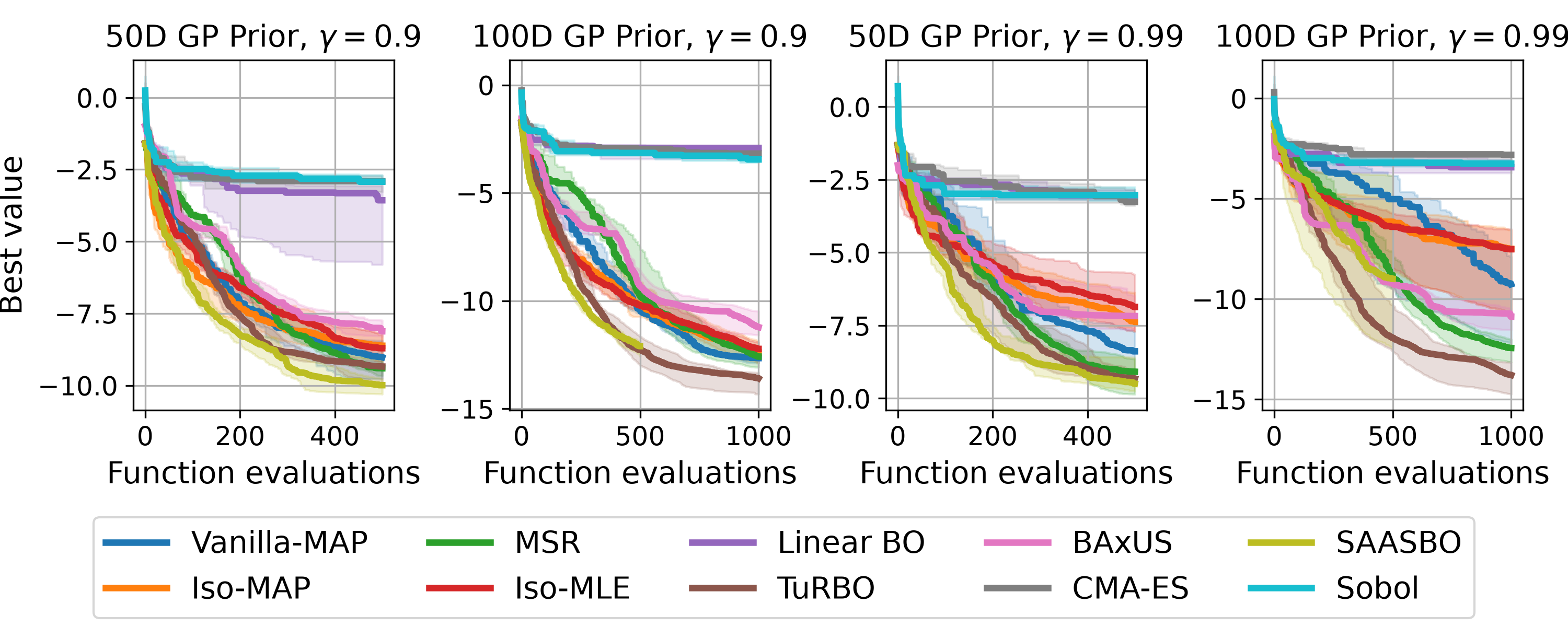}
\caption{Best-observed values on highly anisotropic GP-prior objectives in 50D and 100D with $\gamma\in\{0.9,0.99\}$. The budgets are 500 evaluations in 50D and 1000 in 100D. Iso-MAP remains close to matched Vanilla-MAP at $\gamma=0.9$, while Vanilla-MAP reaches lower final median values at $\gamma=0.99$.}
\label{fig:GP prior large gamma}
\end{figure}

We next consider a different type of structural mismatch: inactive dimensions. The objectives in Figure~\ref{fig:GP prior sparse} have 100 input dimensions, but only 50 affect the function value. An ARD model can in principle represent this structure through very long lengthscales on inactive coordinates, whereas the isotropic model cannot treat those coordinates separately. Despite this mismatch, Iso-MAP remains competitive with Vanilla-MAP across the three tested anisotropy levels and reaches a lower median trace for $\gamma=0.5$ under the 500-evaluation budget. These experiments show that the finite-data advantage of tying lengthscales can remain useful even when the objective contains axis-aligned sparsity, although Iso-BO does not explicitly model that sparsity.

\begin{figure}[t]
\centering
\includegraphics[width=\textwidth]{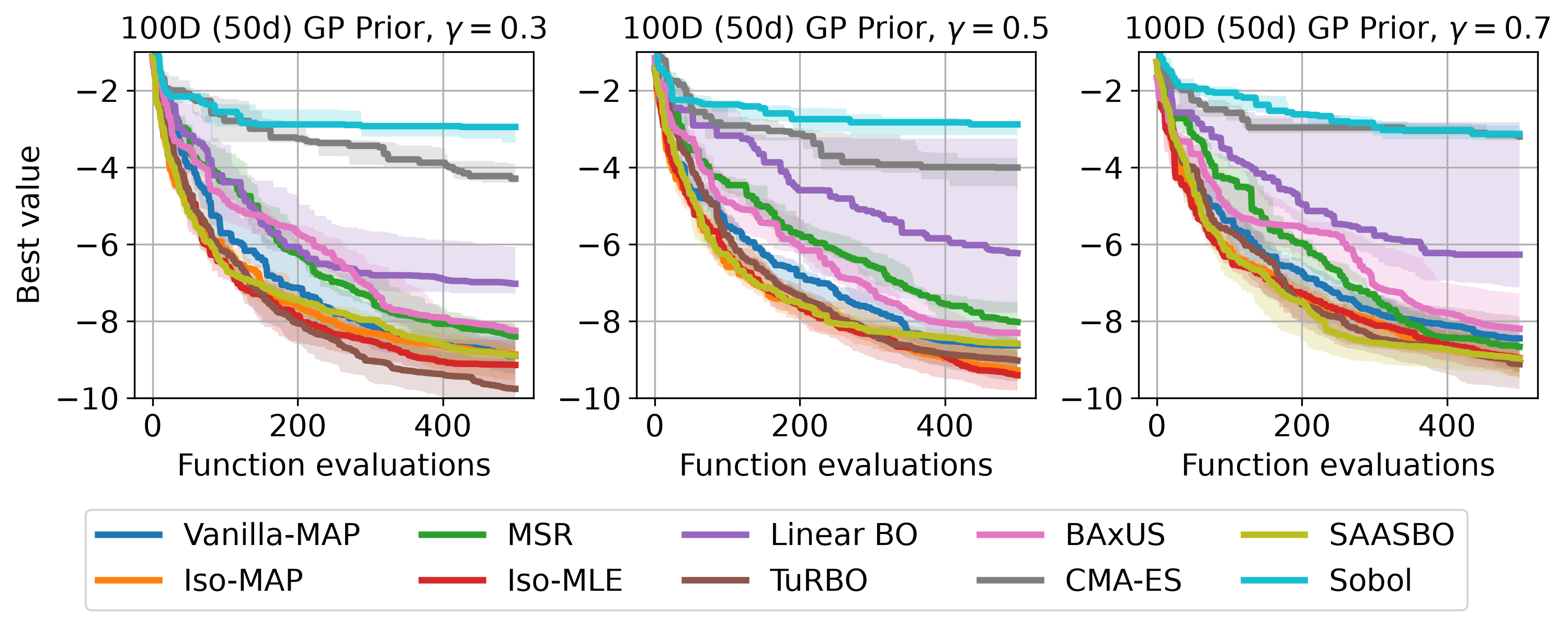}
\caption{Best-observed values on 100D anisotropic GP-prior objectives with 50 inactive dimensions. The budget is 500 evaluations. Iso-MAP remains competitive with matched Vanilla-MAP for $\gamma=0.3$ and $0.7$ and reaches a lower median trace for $\gamma=0.5$.}
\label{fig:GP prior sparse}
\end{figure}

\subsection{Additional benchmark coverage}
\label{app:additional-benchmark}
\label{app:additional benchmark}

\paragraph{50D GP-prior and synthetic objectives.}
The main text emphasizes 100D versions of the controlled GP-prior and synthetic benchmarks. Figures~\ref{fig:GPprior_50D} and~\ref{fig:Synthetic_50D} report their 50D counterparts with a budget of 500 evaluations. On the anisotropic GP-prior objectives, Iso-MAP reaches lower final median values than Vanilla-MAP for $\gamma=0.3$ and $0.5$ and remains close at $\gamma=0.7$. The 50D synthetic results show the same broad separation as the 100D experiments, with Iso-MAP finishing below Vanilla-MAP on Schwefel, Styblinski--Tang, and Powell.

\label{sec:50D Synthetic and GP Prior}

\begin{figure}[t]
\centering
\includegraphics[width=\textwidth]{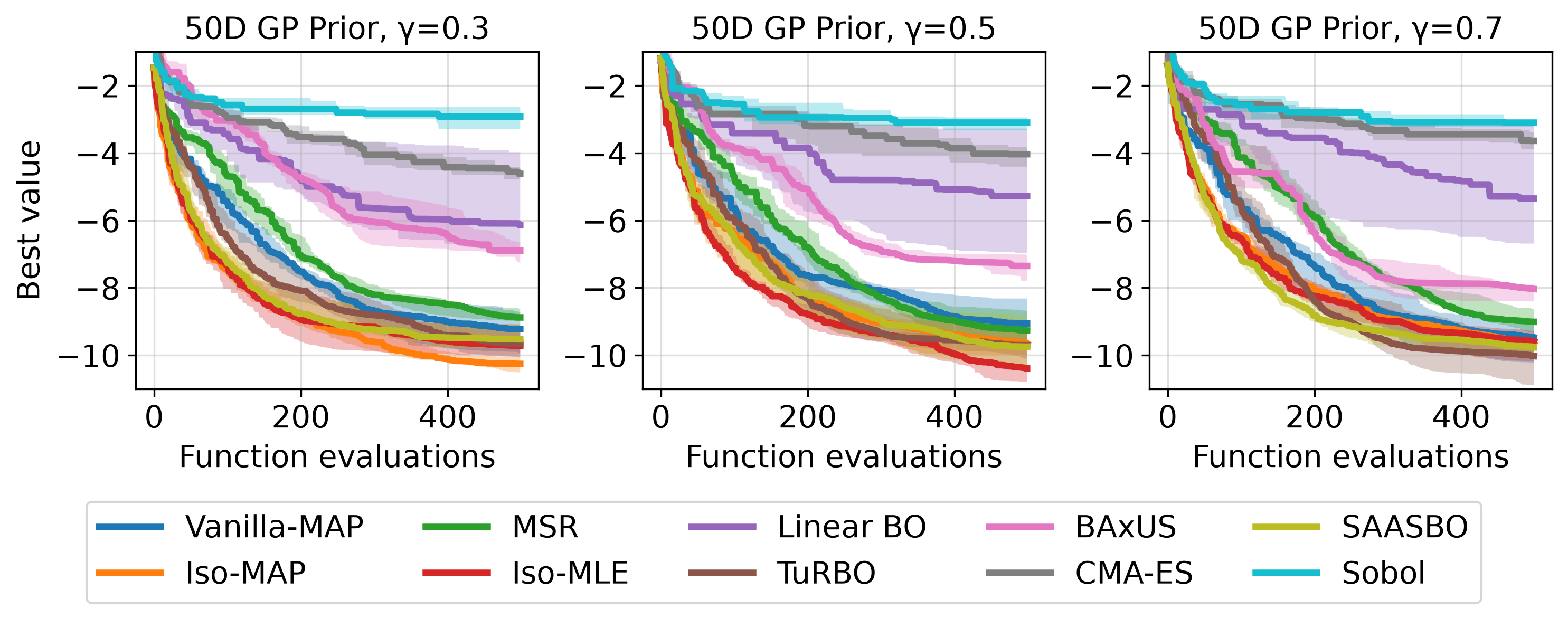}
\caption{Best-observed values on anisotropic GP-prior objectives in 50D with $\gamma\in\{0.3,0.5,0.7\}$. The budget is 500 evaluations. Iso-MAP reaches lower final median values than matched Vanilla-MAP for $\gamma=0.3$ and $0.5$ and remains close for $\gamma=0.7$.}
\label{fig:GPprior_50D}
\end{figure}

\begin{figure}[t]
\centering
\includegraphics[width=\textwidth]{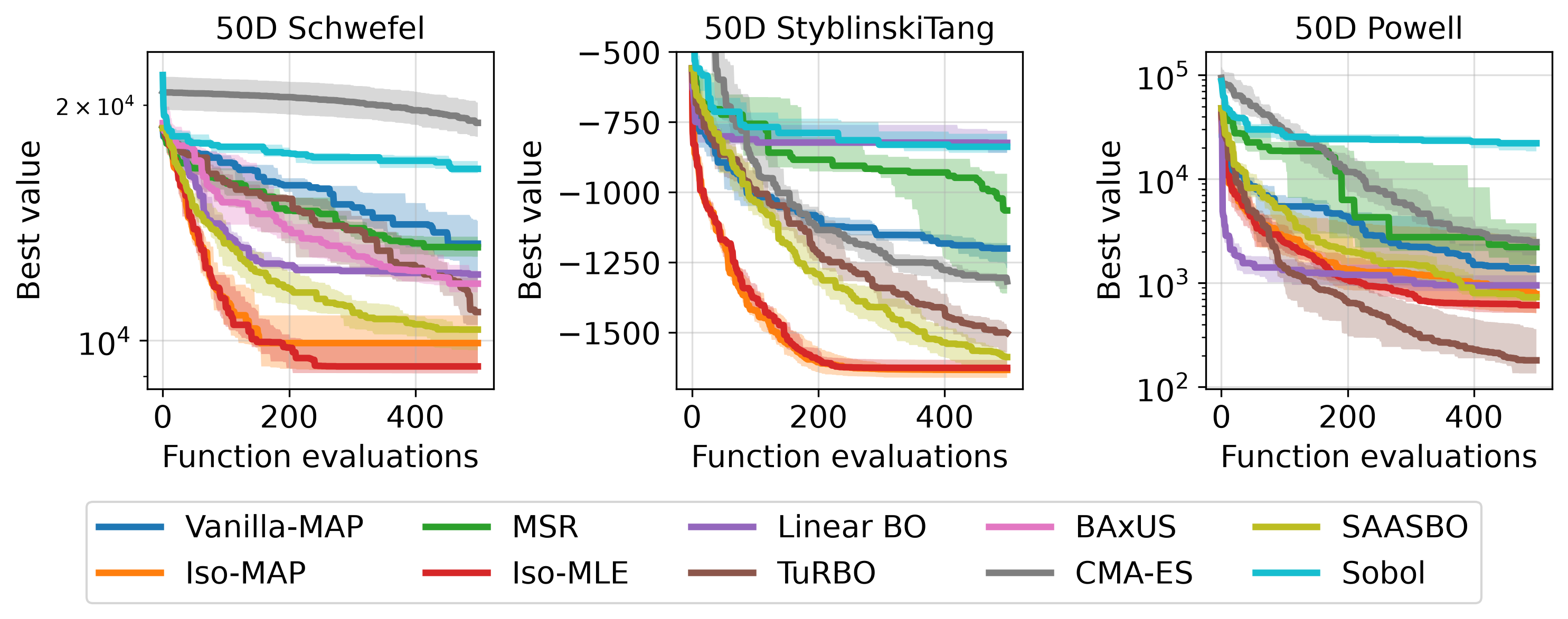}
\caption{Best-observed values on the 50D Schwefel, Styblinski--Tang, and Powell benchmarks with a budget of 500 evaluations. Iso-MAP finishes below matched Vanilla-MAP in all three reported settings.}
\label{fig:Synthetic_50D}
\end{figure}

\paragraph{Additional real-world benchmarks.}
Figure~\ref{fig:Original RealWorld} adds the 60D Rover and 102D Walker2d-v2 benchmarks described in Appendix~\ref{app:benchmark-definitions}. The 60D Rover result is broadly similar for Iso-MAP and Vanilla-MAP. The separation is larger on Walker2d-v2, where Iso-MAP reaches substantially lower median objective values than Vanilla-MAP. These results provide additional coverage across two benchmark parameterizations without changing the central experimental comparison.

\begin{figure}[t]
\centering
\includegraphics[width=\textwidth]{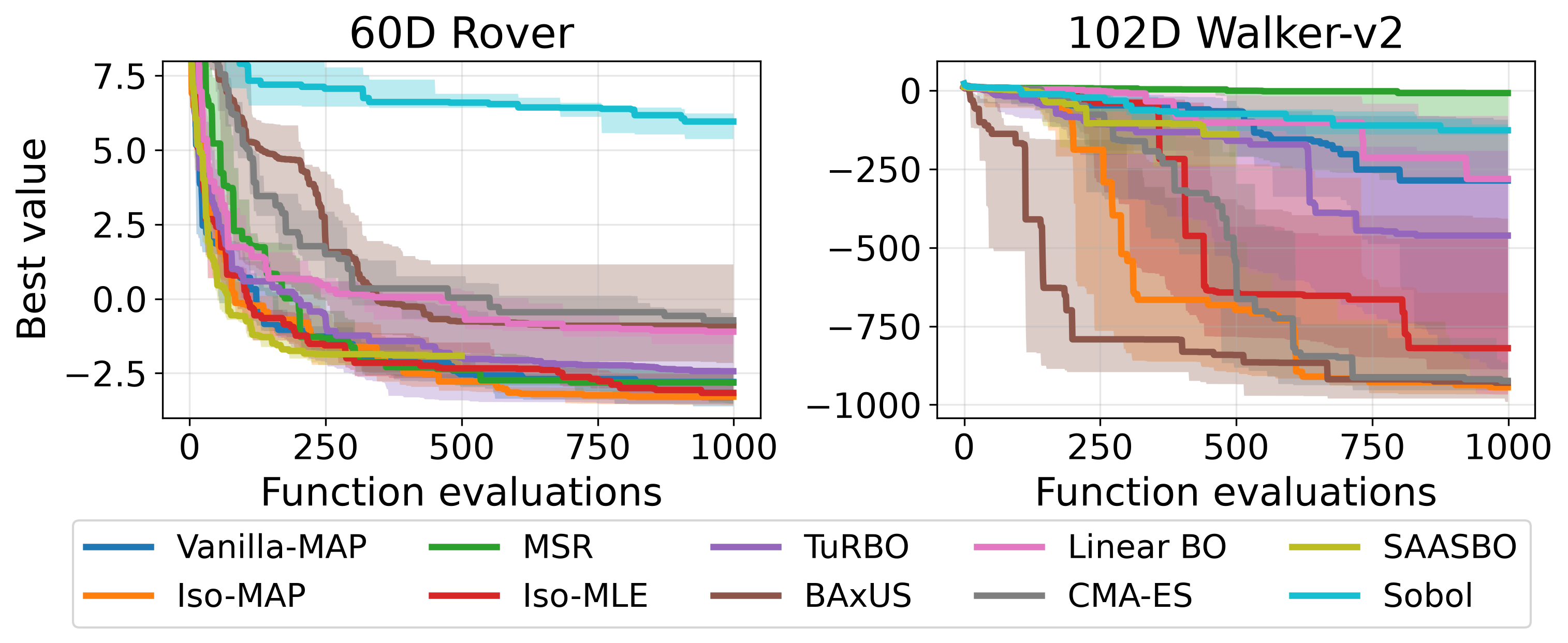}
\caption{Best-observed values on the 60D Rover and 102D Walker2d-v2 benchmarks. The budget is 1000 evaluations for the main methods and 500 for SAASBO. Iso-MAP remains close to Vanilla-MAP on 60D Rover and reaches substantially lower median values on Walker2d-v2.}
\label{fig:Original RealWorld}
\end{figure}

\paragraph{MOPTA08.}
\label{app:mopta08-extended}
Figure~\ref{fig:mopta} adds the 124D MOPTA08 engineering benchmark with a larger budget of 3000 evaluations. This experiment uses a targeted comparison among Iso-MAP, Linear BO, and CMA-ES. Iso-MAP reaches lower best-observed values than the two included comparators. Because Vanilla-MAP is not included in this experiment, we use MOPTA08 only as additional benchmark coverage; it does not test whether an isotropic model remains preferable to ARD at larger budgets.

\begin{figure}[t]
\centering
\includegraphics[width=0.45\textwidth]{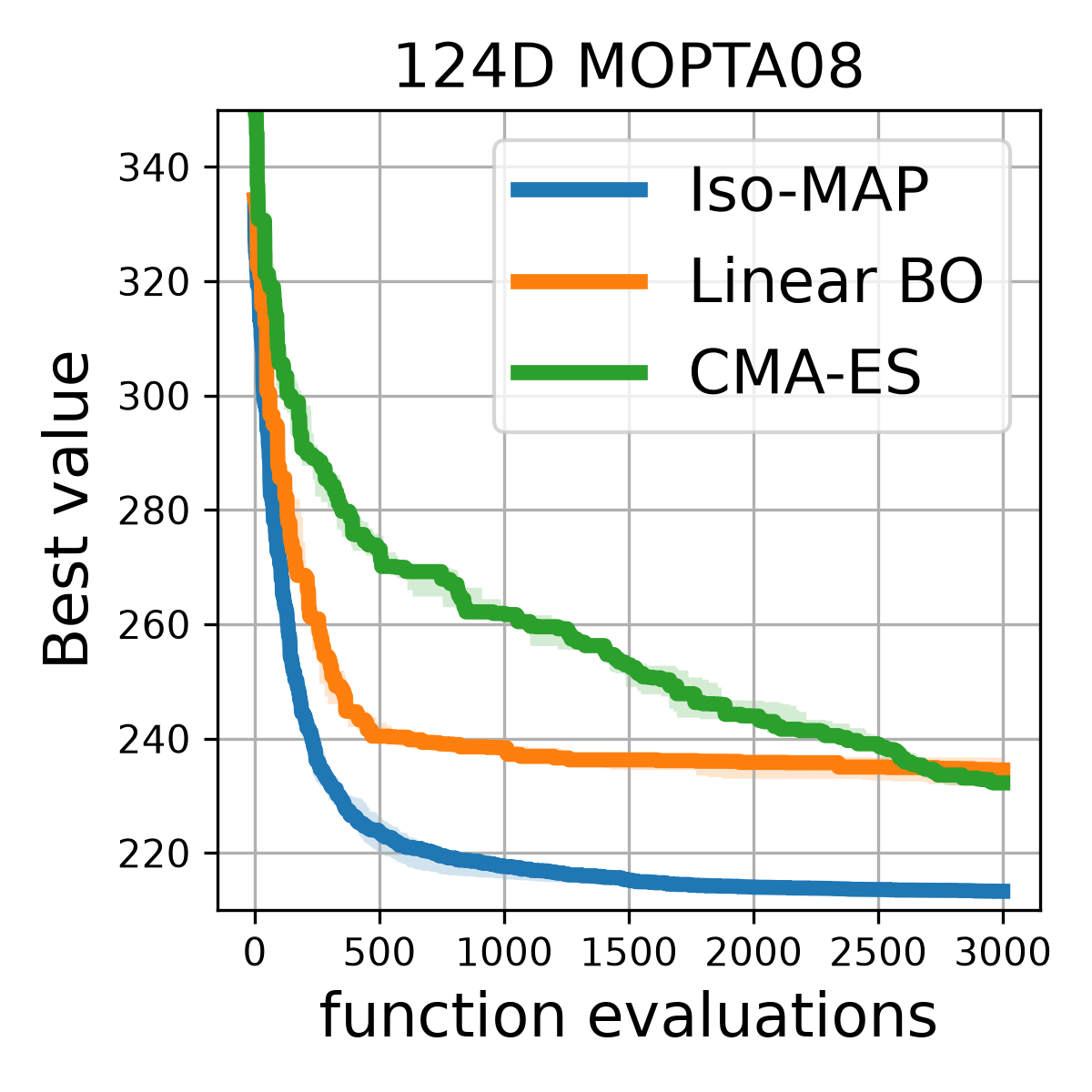}
\caption{Best-observed values on the 124D MOPTA08 vehicle-design benchmark with 68 constraints and $N=3000$ evaluations. The constrained benchmark is converted to an unconstrained objective using the soft penalty specified in Appendix~\ref{app:benchmark-definitions}. The comparison is restricted to Iso-MAP, Linear BO, and CMA-ES; lower values are better.}
\label{fig:mopta}
\end{figure}

\subsection{Sensitivity to the kernel family}
\label{app:Matern}

The main Iso-MAP versus Vanilla-MAP experiments use the SE kernel. To check whether the observed behavior is tied to this kernel choice, we rerun Vanilla-MAP, Iso-MAP, and Iso-MLE with Mat\'ern~$5/2$ kernels and compare them with their SE counterparts. Figures~\ref{fig:synthetic_matern}--\ref{fig:realworld_matern} summarize the results.
On the synthetic and GP-prior objectives, the SE and Mat\'ern~$5/2$ traces for Iso-MAP are generally close, whereas Vanilla-MAP shows larger changes between kernel families in several settings. The same comparison is more mixed on the real-world problems: neither kernel dominates uniformly, but the Iso-MAP traces again tend to be similar across the two choices. We therefore view the main isotropic-versus-ARD result as not being specific to the SE kernel, while avoiding a stronger claim that kernel choice is unimportant in general.

\begin{figure}[t]
\centering
\includegraphics[width=\textwidth]{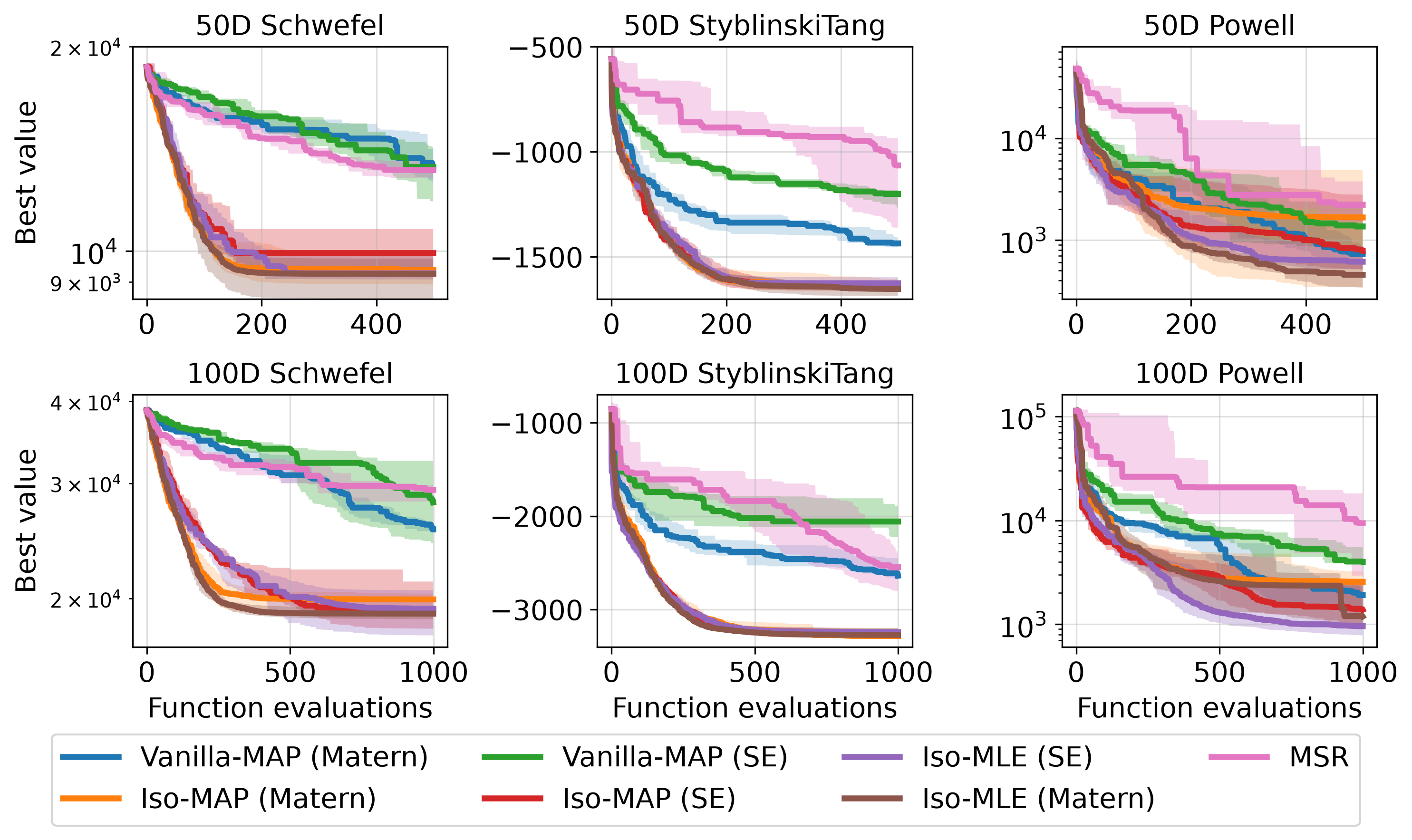}
\caption{SE versus Mat\'ern~$5/2$ kernels on the 50D and 100D Schwefel, Styblinski--Tang, and Powell benchmarks. Budgets are 500 evaluations in 50D and 1000 in 100D. The two kernel choices give similar Iso-MAP traces across these settings, while Vanilla-MAP is more sensitive to the kernel choice in several panels.}
\label{fig:synthetic_matern}
\end{figure}

\begin{figure}[t]
\centering
\includegraphics[width=\textwidth]{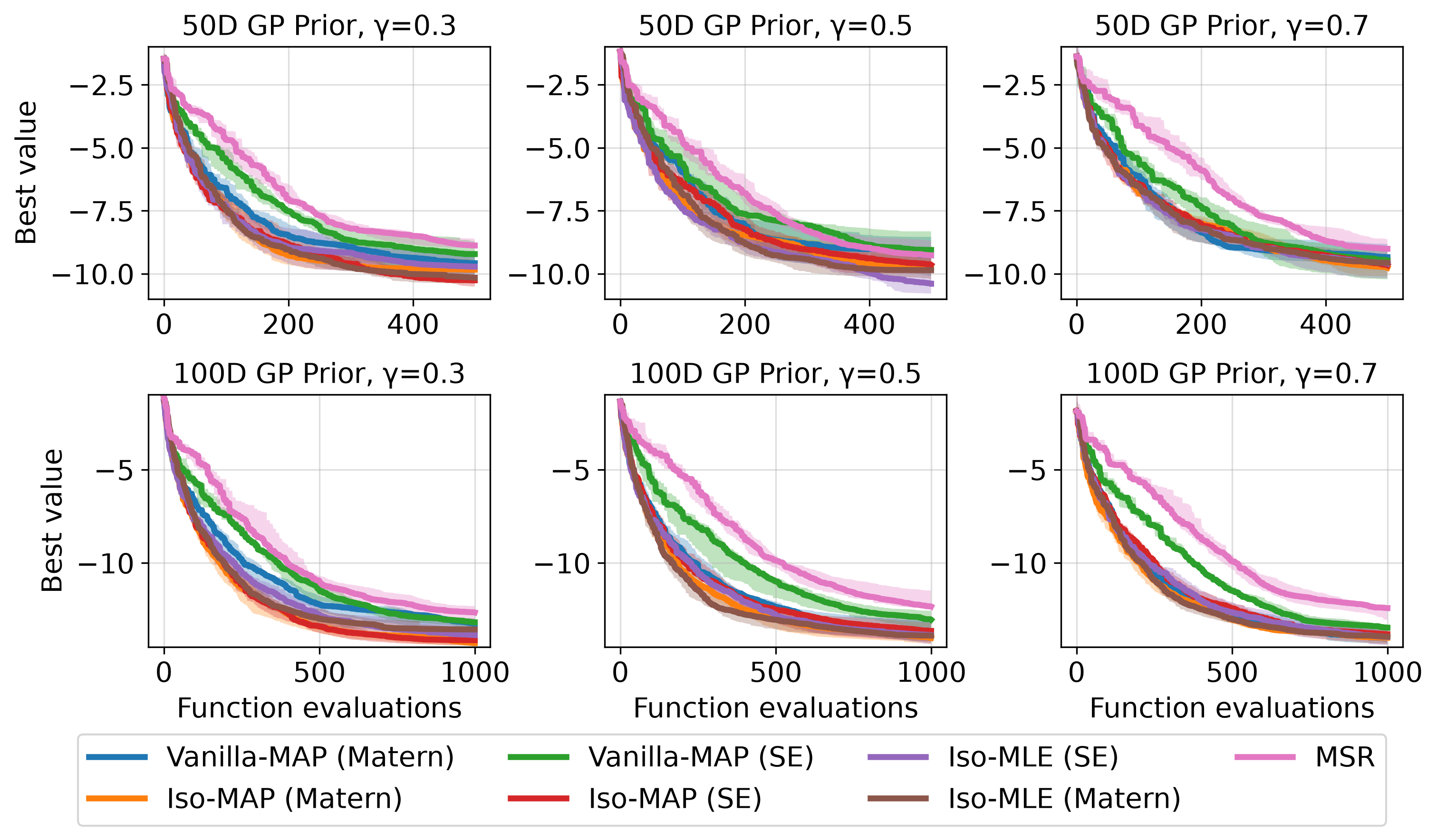}
\caption{SE versus Mat\'ern~$5/2$ kernels on anisotropic GP-prior objectives in 50D and 100D with $\gamma\in\{0.3,0.5,0.7\}$. Budgets are 500 evaluations in 50D and 1000 in 100D. Iso-MAP shows similar behavior under the two kernel families across the reported settings.}
\label{fig:GPprior_Matern}
\end{figure}

\begin{figure}[t]
\centering
\includegraphics[width=\textwidth]{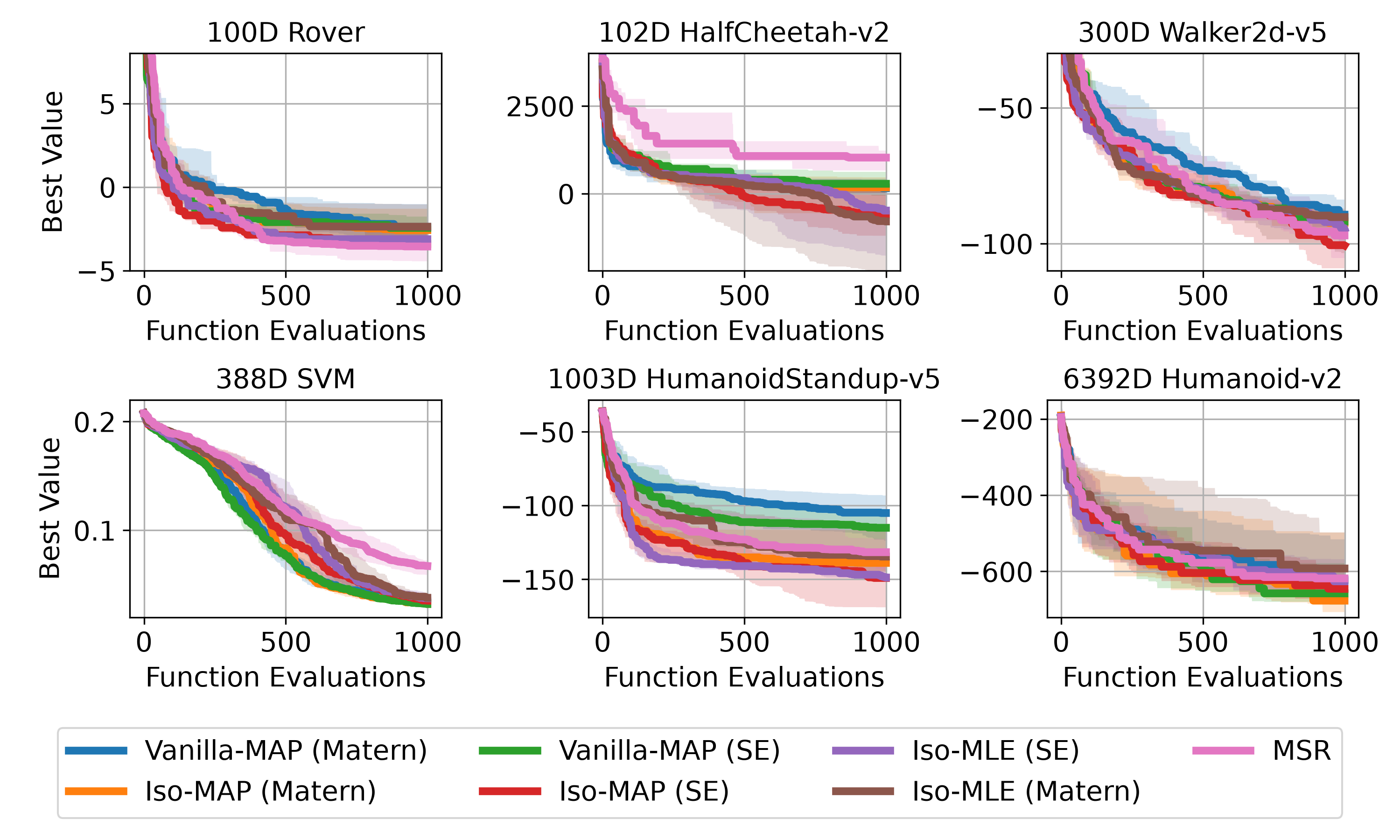}
\caption{SE versus Mat\'ern~$5/2$ kernels on the real-world benchmarks. The budget is 1000 evaluations for each task. Kernel choice affects performance differently across problems; the Iso-MAP traces are comparatively similar under the two kernel families in most of the reported settings.}
\label{fig:realworld_matern}
\end{figure}

\subsection{Query spread and exploration behavior}
\label{app:query-spread-exploration}

The BO results above compare objective values. We also examine how the surrogate parameterization changes the locations that BO samples. Following \citet{papenmeier2025exploring}, we compute observation traveling salesman distance (OTSD), normalized OTSD, and observation entropy (OE). OTSD measures the length of the shortest path through the observed query locations, normalized OTSD adjusts this quantity for dimension and sample size, and OE estimates the differential entropy of the query locations using the Kozachenko--Leonenko estimator~\citep{kozachenko1987sample}. Larger values indicate a more spatially dispersed set of observations.

Figures~\ref{fig:query-spread-50d} and~\ref{fig:query-spread-100d} report these quantities for the SE-kernel Vanilla-MAP and Iso-MAP traces on anisotropic GP-prior objectives with $\gamma=0.3$. Iso-MAP has lower OTSD, normalized OTSD, and OE in both dimensions, indicating that its queries are more spatially concentrated in these runs. These metrics are descriptive: they show a change in sampling geometry, not that more concentrated sampling is intrinsically better.

\begin{figure}[t]
\centering
\includegraphics[width=0.9\textwidth]{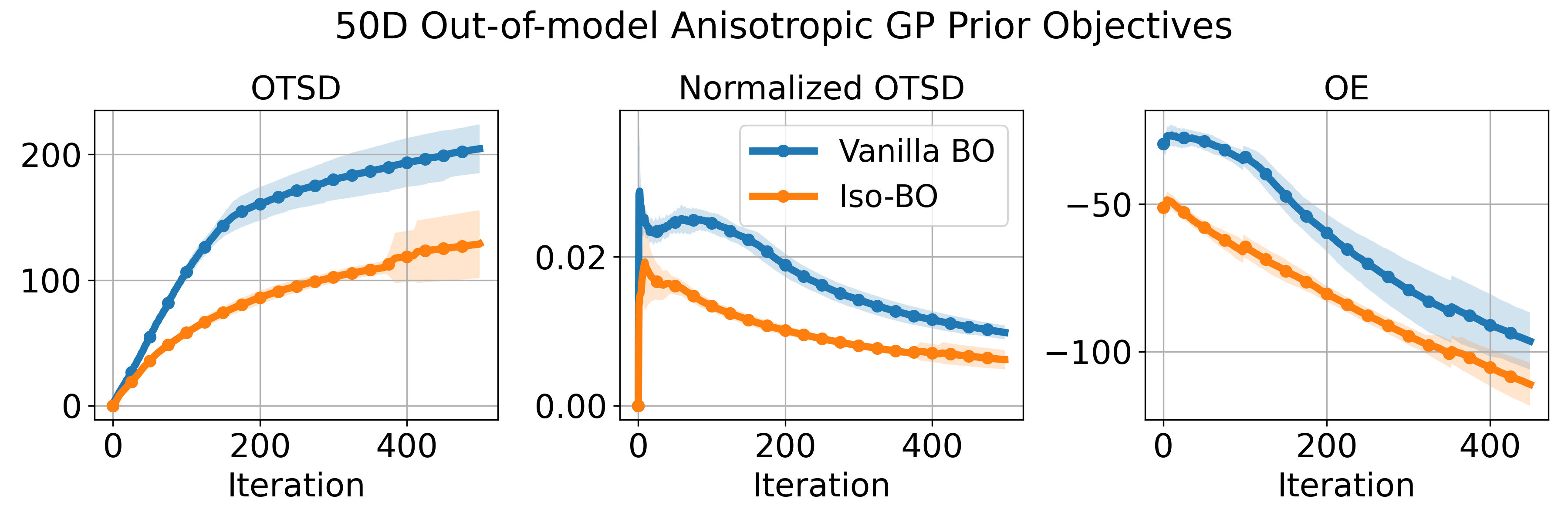}
\caption{Query-spread metrics for SE-kernel Vanilla-MAP and Iso-MAP on the 50D anisotropic GP-prior objectives with $\gamma=0.3$. \textbf{Left:} OTSD; \textbf{middle:} normalized OTSD; \textbf{right:} observation entropy (OE). Curves show the mean and $\pm1$ standard deviation over 10 runs. Larger values indicate more spatially dispersed query locations.}
\label{fig:query-spread-50d}
\end{figure}

\begin{figure}[t]
\centering
\includegraphics[width=0.9\textwidth]{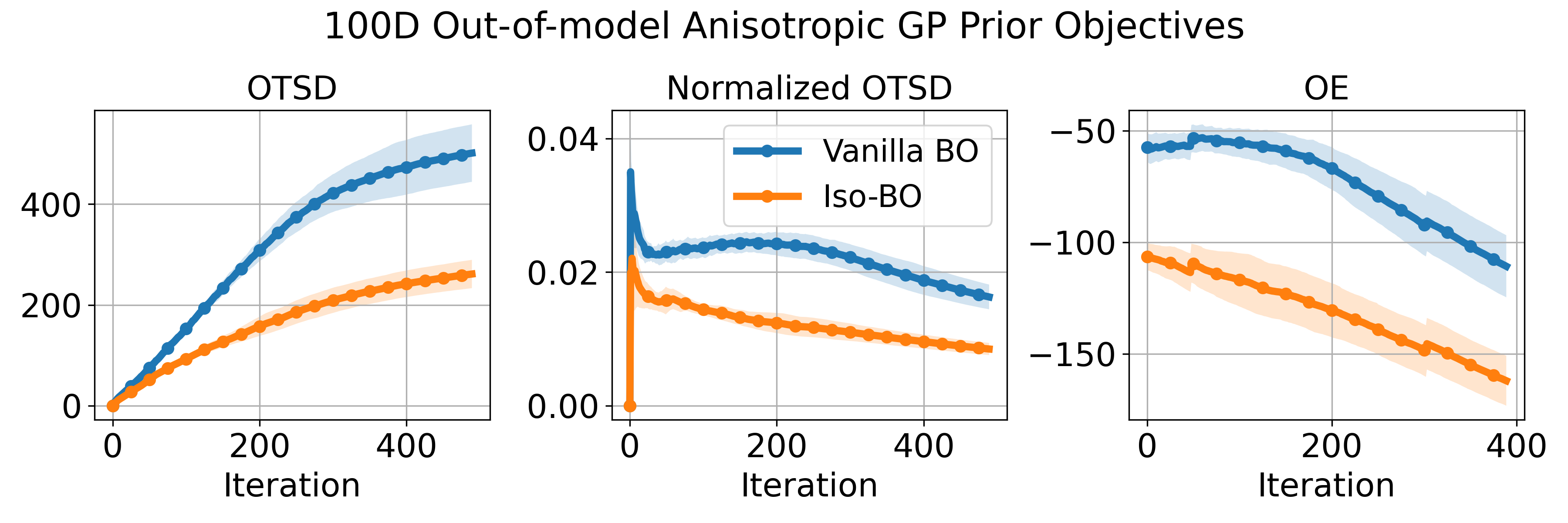}
\caption{Query-spread metrics for SE-kernel Vanilla-MAP and Iso-MAP on the 100D anisotropic GP-prior objectives with $\gamma=0.3$. \textbf{Left:} OTSD; \textbf{middle:} normalized OTSD; \textbf{right:} observation entropy (OE). Curves show the mean and $\pm1$ standard deviation over 10 runs. Larger values indicate more spatially dispersed query locations.}
\label{fig:query-spread-100d}
\end{figure}

The fitted lengthscales provide a simple geometric explanation for this difference. For an SE ARD kernel, candidate points with correlation at least $\rho_0\in(0,1)$ to an incumbent $x_{\mathrm{inc}}$ satisfy
\[
\sum_{k=1}^{D}
\frac{(x_k-x_{\mathrm{inc},k})^2}{\ell_k^2}
\leq
-2\log\rho_0.
\]
With one shared lengthscale, this region is the Euclidean ball
$\|x-x_{\mathrm{inc}}\|_2\leq\ell\sqrt{-2\log\rho_0}$.
With ARD, long fitted lengthscales can instead create elongated directions in which a candidate moves far in Euclidean distance while remaining close in kernel distance. This geometry is relevant to BO because, in the local-uninformative setting analyzed by \citet{hvarfner2024vanilla}, EI favors candidates that remain sufficiently correlated with the incumbent. The lower query-spread metrics are therefore consistent with Iso-MAP removing independently stretched ARD directions, although the metrics alone do not establish a causal explanation for the performance differences.

\subsection{Computational cost}
\label{app:computational-cost}
\label{app:fitting-time-diagnostic}

Iso-BO reduces the lengthscale optimization problem from $D$ ARD parameters to one shared parameter. This does not change the leading $\mathcal O(n^3)$ Cholesky cost of exact GP inference, but it can substantially reduce the cost of repeated hyperparameter optimization.
Figure~\ref{fig:Fitted_Lscale_CPU} isolates this effect. Objective functions are sampled from isotropic SE GP priors, and isotropic and ARD models are fit to the same Sobol datasets using Mat\'ern~$5/2$ kernels and MAP estimation with L-BFGS-B. The isotropic fits are consistently faster, with the difference increasing as the input dimension and training-set size grow. This experiment measures the GP-fitting step only; it is not an end-to-end BO timing comparison.

\begin{figure}[t]
\centering
\includegraphics[width=\textwidth]{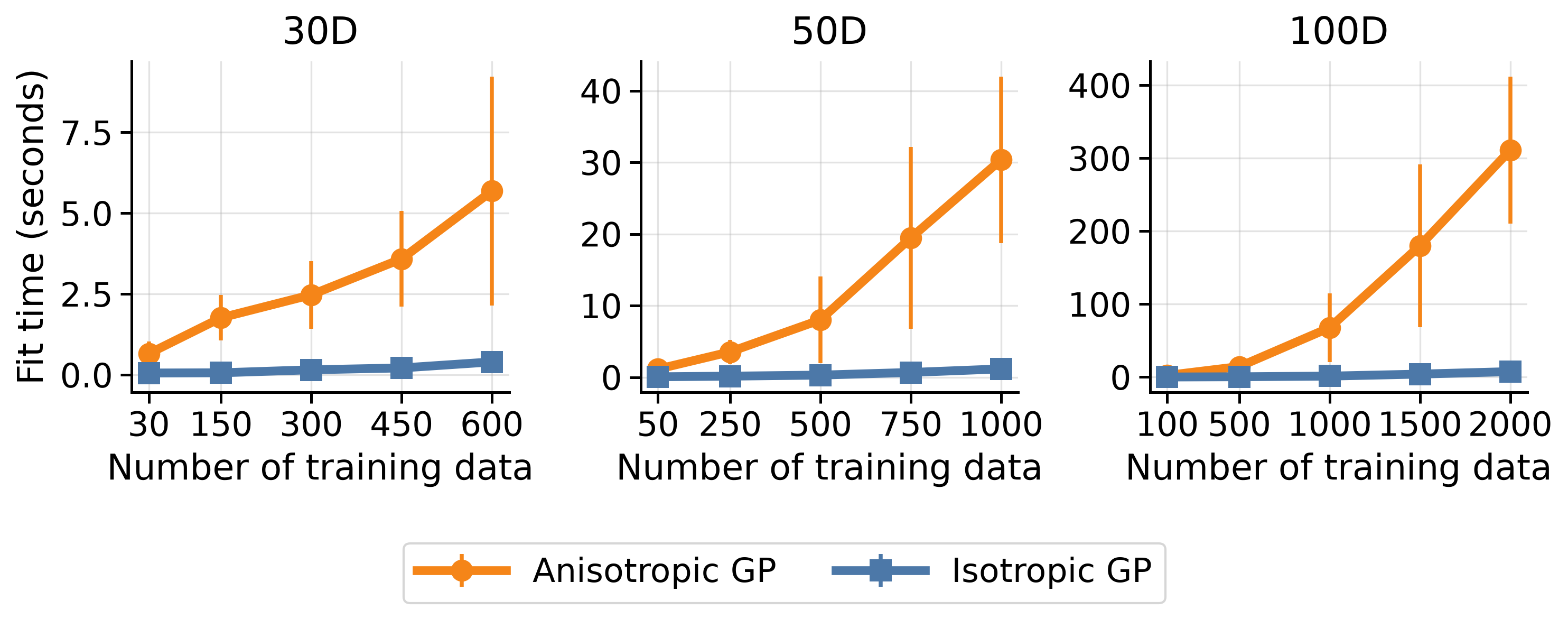}
\caption{Wall-clock MAP hyperparameter fitting time for isotropic and ARD GPs. Objective functions are sampled from isotropic SE GP-prior realizations, while both fitted models use Mat\'ern~$5/2$ kernels and L-BFGS-B. Training inputs are Sobol points. Curves show the mean and $95\%$ confidence interval over 20 independent replicates. Experiments use 4-core CPU jobs on nodes with Intel Xeon CPU Max 9470 processors and 512~GB DDR5 RAM. Both models retain the same leading $\mathcal O(n^3)$ exact-GP inference cost.}
\label{fig:Fitted_Lscale_CPU}
\end{figure}

Table~\ref{tab:GPU run time} gives a complementary end-to-end timing comparison on the 100D anisotropic GP-prior objective with $\gamma=0.3$. For this timing experiment, GP hyperparameters are refit at every BO iteration, and all methods are run on the same NVIDIA L40 GPU with 45~GB of memory. In the primary matched comparison, Iso-MAP requires 1783.0 seconds versus 3255.2 seconds for Vanilla-MAP. The remaining entries provide broader computational context; in particular, SAASBO requires substantially more runtime despite being run for only 500 evaluations. The Iso-MLE and MSR timings should not be interpreted as a clean parameterization-only comparison because those implementations use different kernel families, as described in Appendix~\ref{app:gp-model-fitting}.

\begin{table}[t]
\centering
\small
\setlength{\tabcolsep}{10pt}
\caption{End-to-end GPU runtime on the 100D anisotropic GP-prior objective with $\gamma=0.3$. GP hyperparameters are refit at every BO iteration for this timing experiment. All runs use an NVIDIA L40 GPU with 45~GB of memory. SAASBO uses 500 evaluations; all other methods use 1000.}
\begin{tabular}{l r r}
\hline
Method & Evaluations & GPU time (s) \\
\hline
Vanilla-MAP & 1000 & 3255.2 \\
Iso-MAP     & 1000 & 1783.0 \\
MSR         & 1000 & 7334.5 \\
Iso-MLE     & 1000 & 1431.1 \\
Linear BO   & 1000 & 1982.0 \\
TuRBO       & 1000 & 3556.6 \\
BAxUS       & 1000 & 1467.8 \\
SAASBO      & 500  & 76369.4 \\
\hline
\end{tabular}
\label{tab:GPU run time}
\end{table}

\section{Batch BO and latent space molecular optimization}
\label{app:batch BO}

The main experiments consider sequential BO, where the surrogate is updated after selecting one new point. We next ask whether the same lengthscale parameterization matters when several points are selected from each fitted surrogate (i.e., batch or parallel BO). We compare Vanilla-MAP and Iso-MAP with batch sizes $q=10$ and $q=50$. Both methods use the same GP models and fitting settings described in Appendix~\ref{app:gp-model-fitting}, with qLogEI~\citep{ament2023unexpected} as the batch acquisition function using 256 Monte Carlo samples. GP hyperparameters are refit at every batch iteration. Evaluation budgets below count individual objective evaluations, so increasing $q$ reduces the number of surrogate updates within a fixed budget.
All batch experiments use 10 independent runs. Solid curves show the median best-observed value across runs, and shaded regions show the IQR.

\subsection{Batch BO on GP-prior and real-world benchmarks}
\label{app:batch-main-results}

We first repeat the anisotropic GP-prior experiments with $\gamma\in\{0.3,0.5,0.7\}$ in 50D and 100D. The total budgets are 500 and 1000 evaluations, respectively. Figures~\ref{fig:GPprior_Batch_10} and~\ref{fig:GPprior_Batch_50} show a similar median pattern under batch selection. With $q=10$, Iso-MAP reaches lower median best-observed values for most of the run in all six settings, with the smallest final separation for the 50D $\gamma=0.7$ objective. With $q=50$, Iso-MAP reaches lower median values across all six reported settings.

These experiments show that the finite-budget behavior observed in the sequential GP-prior experiments also appears when each fitted surrogate selects multiple candidates at once.

\begin{figure}[t]
\centering
\includegraphics[width=0.9\textwidth]{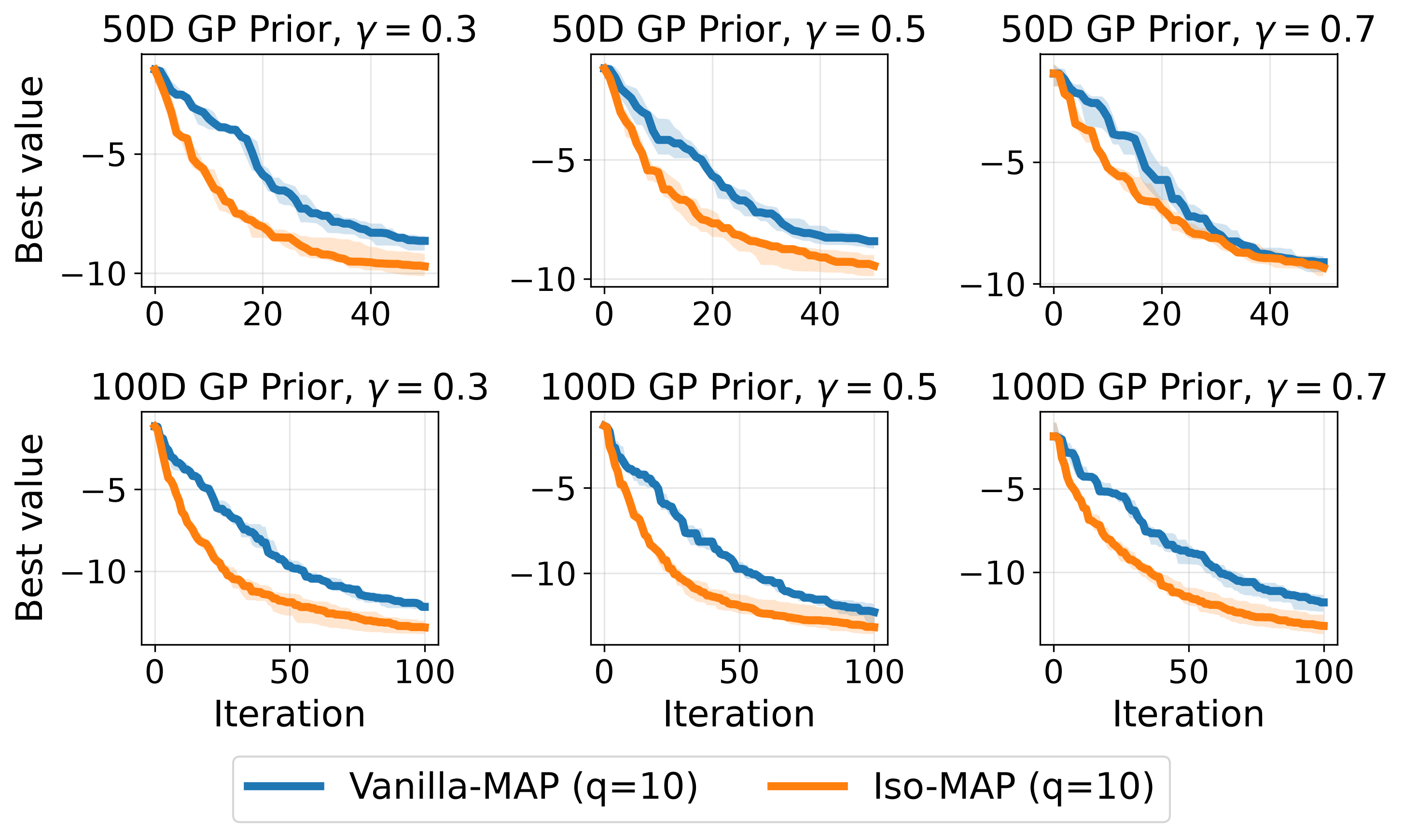}
\caption{Batch BO on anisotropic GP-prior objectives in 50D and 100D with $\gamma\in\{0.3,0.5,0.7\}$. Each BO iteration selects $q=10$ points with qLogEI. The total budgets are 500 evaluations in 50D and 1000 in 100D. Lower best-observed values are better. Iso-MAP reaches lower median values over most of the budget in all six settings, with the smallest final separation in the 50D $\gamma=0.7$ case.}
\label{fig:GPprior_Batch_10}
\end{figure}

\begin{figure}[t]
\centering
\includegraphics[width=0.9\textwidth]{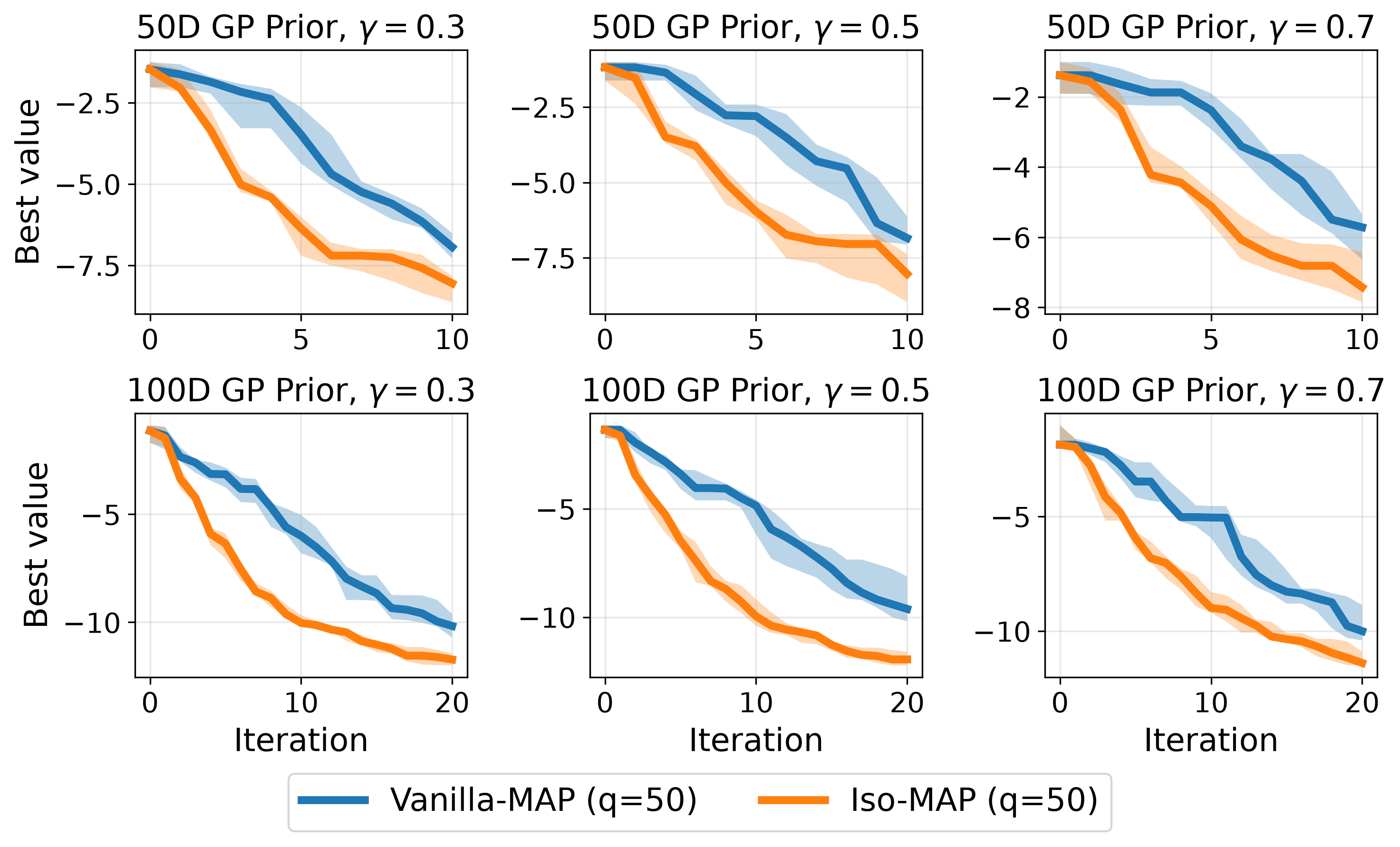}
\caption{Batch BO on the same anisotropic GP-prior objectives as Figure~\ref{fig:GPprior_Batch_10}, now selecting $q=50$ points per iteration. The total budgets are 500 evaluations in 50D and 1000 in 100D. Lower best-observed values are better. Iso-MAP reaches lower median values across all six reported settings.}
\label{fig:GPprior_Batch_50}
\end{figure}

We next consider the real-world benchmark suite shown in Figures~\ref{fig:realworld_Batch_10} and~\ref{fig:realworld_Batch_50}: 100D Rover, 102D Walker2d-v2, 102D HalfCheetah-v2, 300D Walker2d-v5, 388D SVM, and 1003D HumanoidStandup-v5. Each problem uses a total budget of 1000 evaluations. We omit the 6392D Humanoid-v2 benchmark because the Monte Carlo acquisition calculation can exceed the available memory in this implementation.

The results are more task-dependent than for the controlled GP-prior objectives. With $q=10$, Iso-MAP reaches lower final median values on Rover, HalfCheetah-v2, Walker2d-v5, and HumanoidStandup-v5, while Vanilla-MAP finishes lower on Walker2d-v2 and SVM. With $q=50$, Iso-MAP reaches lower median values on Walker2d-v2, HalfCheetah-v2, Walker2d-v5, and SVM, while the two methods finish at similar levels on Rover and HumanoidStandup-v5. These experiments do not show a uniform advantage on every task, but they show that the benefit of isotropic fitting is not limited to sequential candidate selection.

\begin{figure}[t]
\centering
\includegraphics[width=0.9\textwidth]{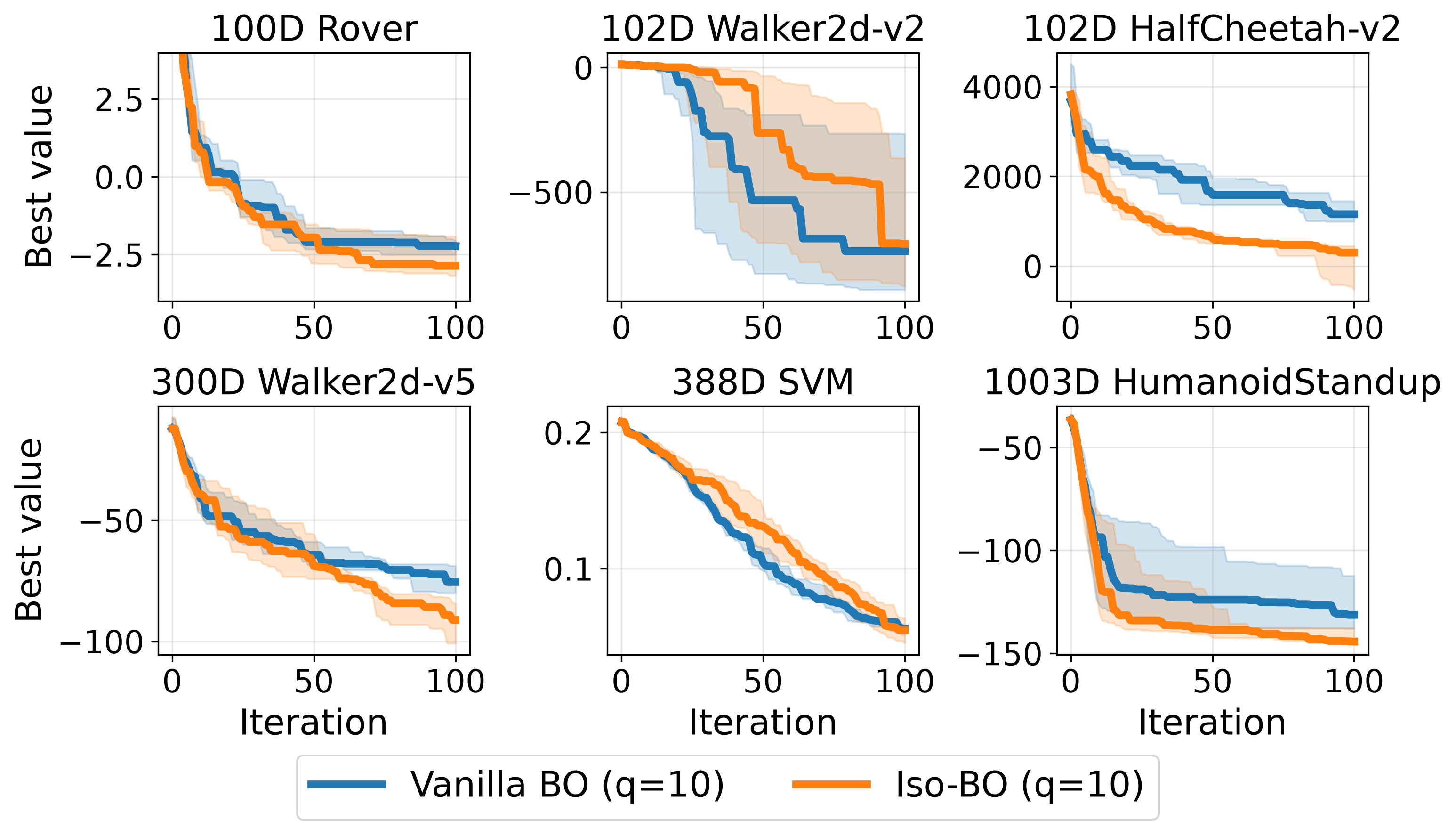}
\caption{Batch BO on real-world benchmarks with $q=10$ and a total budget of 1000 evaluations per problem. Lower best-observed values are better; reward-maximization tasks are reported as negative reward. Iso-MAP reaches lower final median values on Rover, HalfCheetah-v2, Walker2d-v5, and HumanoidStandup-v5, while Vanilla-MAP finishes lower on Walker2d-v2 and SVM.}
\label{fig:realworld_Batch_10}
\end{figure}

\begin{figure}[t]
\centering
\includegraphics[width=0.9\textwidth]{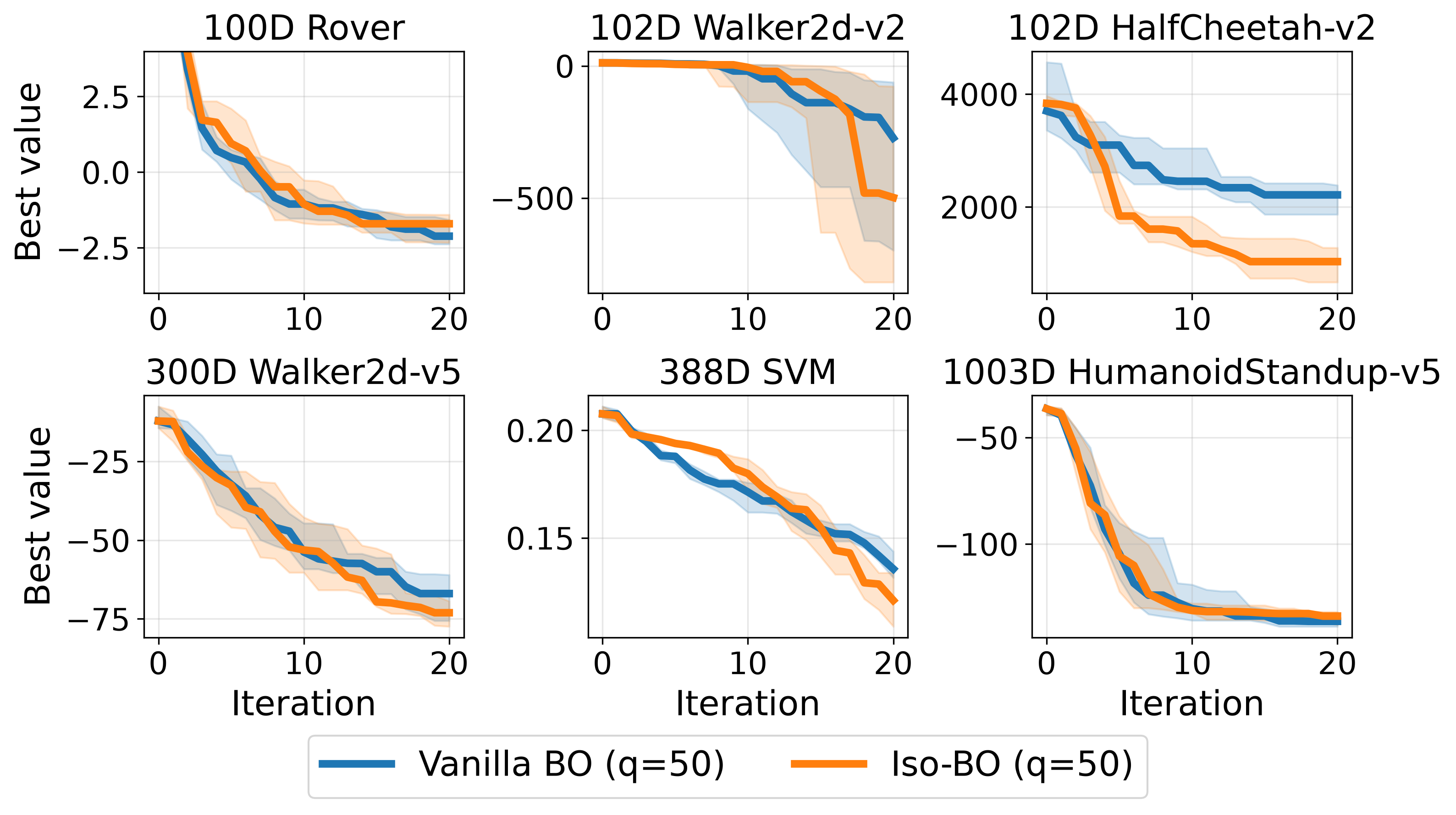}
\caption{Batch BO on the same real-world benchmarks as Figure~\ref{fig:realworld_Batch_10}, now with $q=50$. The total budget is 1000 evaluations per problem. Iso-MAP reaches lower final median values on Walker2d-v2, HalfCheetah-v2, Walker2d-v5, and SVM; the final median values are similar for the two methods on Rover and HumanoidStandup-v5.}
\label{fig:realworld_Batch_50}
\end{figure}

\subsection{Latent space molecular optimization}
\label{app:molecular-batch}

We also test the two GP parameterizations on structured molecular design problems. Following prior HDBO work~\citep{doumont2025we}, we optimize four GuacaMol goal-directed tasks~\citep{brown2019guacamol} in the 256-dimensional latent space of a pretrained SELFIES-VAE~\citep{maus2022local}. Fexofenadine MPO (\texttt{fexo}) and Ranolazine MPO (\texttt{rano}) are multi-parameter optimization tasks that combine similarity to a reference molecule with physicochemical objectives. Median Molecules 1 and 2 (\texttt{med1} and \texttt{med2}) instead reward molecules that balance structural similarity to two reference compounds. The component scores within each task are combined using a geometric mean. Following \citet{doumont2025we}, each latent coordinate is restricted to $[-8,8]$. We use a total budget of 3000 molecular evaluations and consider $q=10$ and $q=50$. Following the paper-wide minimization convention, the plotted objective is the negative task score, so lower values are better.

Figures~\ref{fig:molecular_Batch_10} and~\ref{fig:molecular_Batch_50} show that the effect of isotropic fitting varies across the four objectives. With $q=10$, Iso-MAP reaches a slightly lower final median value on \texttt{fexo} and a clearly lower value on \texttt{med2}; Vanilla-MAP finishes lower on \texttt{rano}, while the two methods converge to similar values on \texttt{med1}. With $q=50$, Iso-MAP reaches lower median values on \texttt{fexo}, \texttt{med1}, and \texttt{med2}, whereas the two methods are nearly indistinguishable on \texttt{rano}. The strongest and most consistent separation occurs on \texttt{med2}, where Iso-MAP remains below Vanilla-MAP for both batch sizes.

These molecular tasks provide a qualitatively different test from the continuous engineering and control benchmarks: BO operates on learned latent coordinates rather than directly interpretable physical variables. The results show that a shared lengthscale can remain useful in this setting, although the tradeoff between pooled estimation and coordinatewise flexibility is clearly task-dependent.

\begin{figure}[t]
\centering
\includegraphics[width=\textwidth]{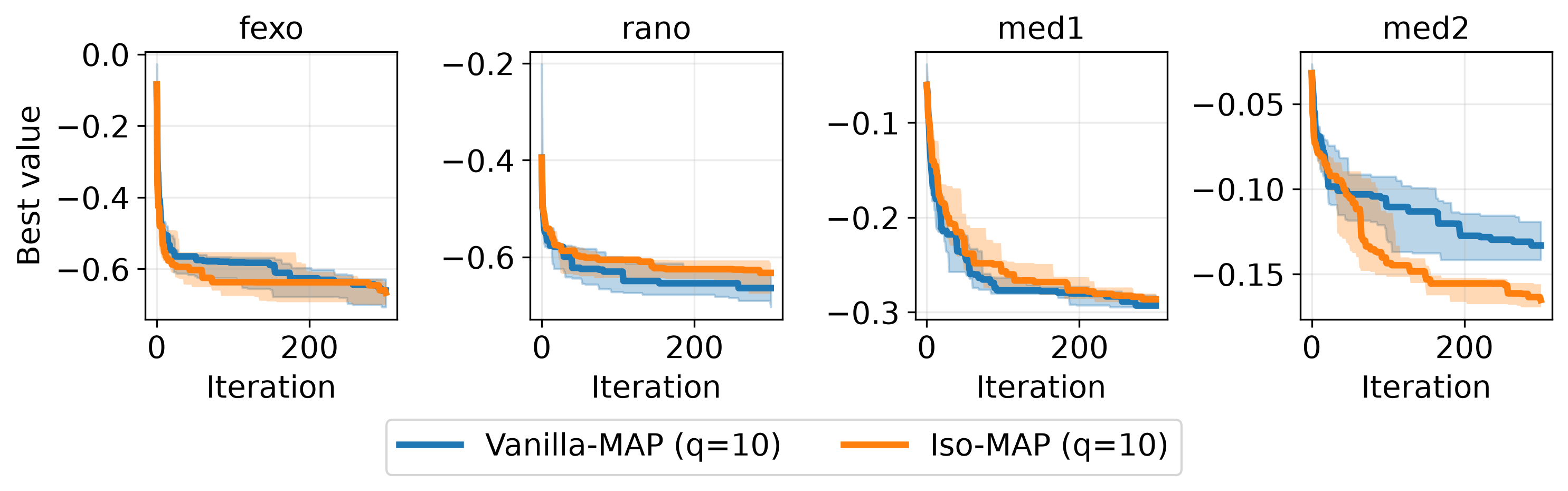}
\caption{Batch BO on four 256D GuacaMol latent-space molecular optimization tasks with $q=10$ and a total budget of 3000 evaluations. The plotted objective is the negative GuacaMol task score, so lower values are better. Iso-MAP reaches lower final median values on \texttt{fexo} and \texttt{med2}, Vanilla-MAP finishes lower on \texttt{rano}, and the two methods reach similar final values on \texttt{med1}.}
\label{fig:molecular_Batch_10}
\end{figure}

\begin{figure}[t]
\centering
\includegraphics[width=\textwidth]{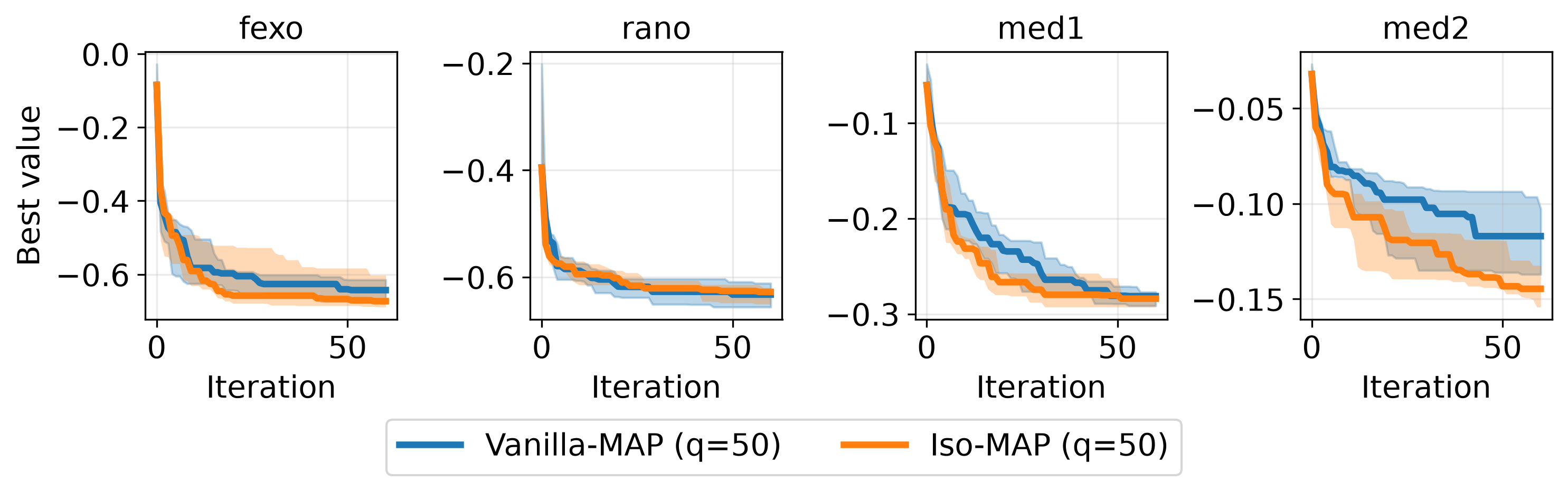}
\caption{Batch BO on the same 256D molecular optimization tasks as Figure~\ref{fig:molecular_Batch_10}, now with $q=50$. The total budget is 3000 evaluations. Iso-MAP reaches lower final median values on \texttt{fexo}, \texttt{med1}, and \texttt{med2}; the two methods finish at similar values on \texttt{rano}.}
\label{fig:molecular_Batch_50}
\end{figure}

Overall, the batch experiments give a consistent extension of the sequential results without changing their scope. Isotropic fitting remains effective when multiple candidates are selected from each surrogate fit and on high-dimensional molecular latent spaces, but the size of the benefit depends on the objective. This is consistent with the paper's finite-data interpretation: tying lengthscales reduces the amount of coordinatewise structure that must be learned from the available observations, while retaining ARD can be useful when that additional structure is informative.

\section{Information-adaptive surrogate flexibility with BO-FIM}
\label{app:BO-FIM}

The results in this paper suggest that the appropriate amount of surrogate flexibility may change as BO collects data. A shared lengthscale can be useful when coordinatewise ARD structure is poorly informed, while additional flexibility can become valuable once the data support it. We explore this idea with \emph{BO-FIM}, which uses the Fisher information matrix (FIM) of the ARD lengthscales to choose between isotropic and ARD GP models during the BO run.

\subsection{Fisher information for ARD lengthscales}
\label{app:fim-information}

Appendix~\ref{app:weak-information} shows how the pairwise-distance geometry of the observed design controls local Fisher information about ARD lengthscales. BO-FIM uses the same quantity as a practical diagnostic, evaluating the FIM at the fitted ARD model after each observation.

The implementation computes the FIM with respect to the raw lengthscales $\ell$, whereas the theory in Appendix~\ref{app:weak-information} uses $\eta=\ell^{-2}$. These parameterizations are related by a smooth one-to-one transformation for positive lengthscales, so exact rank deficiency is unchanged. The numerical eigenvalues do depend on the parameterization, which is one reason we treat the switching thresholds below as heuristic. The reported BO-FIM experiments use the SE kernel.

\subsection{BO-FIM switching rule}
\label{app:bo-fim-rule}

At each BO iteration, BO-FIM first fits the ARD GP and computes its FIM. If the chosen criterion indicates little local information about one or more ARD directions, an isotropic GP is fit to the same data and used for acquisition optimization. Otherwise, the fitted ARD GP is retained. This decision is recomputed after every new observation.

We consider two criteria. \emph{Eigen Product} uses the product of the FIM eigenvalues, equivalently its determinant, while \emph{Eigen Minimum} uses the smallest eigenvalue. The isotropic model is selected when the eigenvalue product falls below $0.1$ or when the minimum eigenvalue falls below $0.01$, respectively. The product is evaluated in log space for numerical stability. Each threshold is kept fixed across the four experiments below.

The pairwise-distance theory provides a useful reference point. With the 10-point initialization, there are only $45$ training-pair distance summaries, so the ARD FIM is rank deficient in both 50D and 100D. As observations accumulate, this rank restriction eventually disappears and the realized design determines how much information is available about different ARD directions. BO-FIM is intended to respond to this change rather than commit to one lengthscale parameterization for the full run.

The thresholds are heuristic rather than generally calibrated. FIM eigenvalues depend on the chosen parameterization and scale, and their product also depends strongly on dimension. Our goal here is not to propose a universal switching threshold, but to test whether local information about ARD structure can be used to adapt surrogate flexibility.

Both branches use the same response standardization, SE kernel, dimension-scaled lengthscale-prior construction, LogEI acquisition, and acquisition optimization settings as the corresponding MAP models. BO-FIM refits the ARD model at every iteration to update the FIM and also fits the isotropic model when it is selected. The fixed Iso-MAP and Vanilla-MAP runs use their standard refitting schedules of every 5 iterations in 50D and every 10 iterations in 100D. We therefore view BO-FIM as a proof of concept for information-adaptive modeling rather than a controlled comparison of refitting schedules.

\subsection{BO-FIM on strongly anisotropic objectives}
\label{app:bo-fim-results}

The strongly anisotropic GP-prior objectives from Appendix~\ref{app:stress-tests} provide a useful setting for testing this idea. At $\gamma=0.9$, fixed Iso-MAP remains effective, whereas at $\gamma=0.99$ the additional coordinatewise flexibility of ARD becomes more useful. We evaluate both BO-FIM criteria at $\gamma\in\{0.9,0.99\}$ in 50D and 100D, using budgets of 500 and 1000 evaluations, respectively. We compare against the fixed Vanilla-MAP and Iso-MAP models using 10 independent runs for each setting. Figure~\ref{fig:BO_FIM} reports the median best-observed value with the interquartile range.

Figure~\ref{fig:BO_FIM} shows the intended behavior. At $\gamma=0.9$, the adaptive variants remain competitive with the fixed models. At $\gamma=0.99$, where fixed isotropy becomes restrictive, both BO-FIM variants improve substantially over Iso-MAP. In 50D, both adaptive variants reach lower final median values than the two fixed models. In 100D, Eigen Product gives the lowest final median value, while Eigen Minimum recovers much of the gap between Iso-MAP and Vanilla-MAP.

\begin{figure}[t]
\centering
\includegraphics[width=\textwidth]{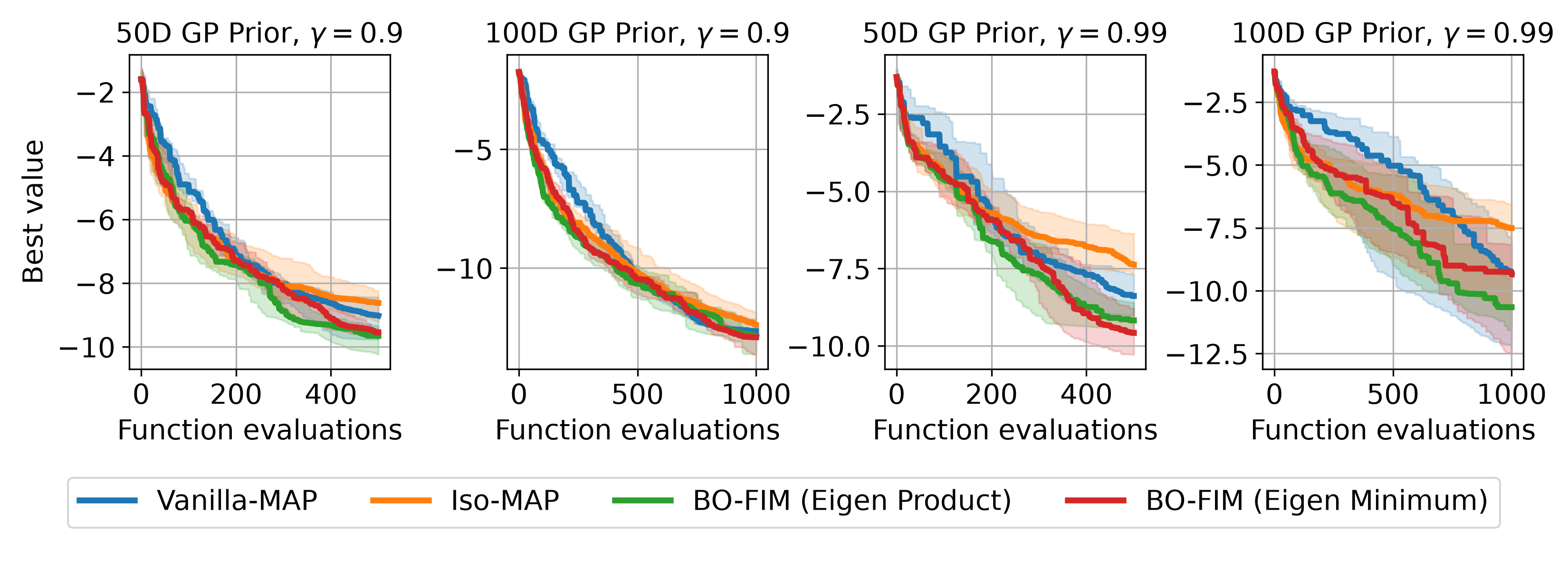}
\caption{BO-FIM on strongly anisotropic GP-prior objectives in 50D and 100D with $\gamma\in\{0.9,0.99\}$. ``Eigen Product'' uses the product of the FIM eigenvalues with threshold $0.1$, while ``Eigen Minimum'' uses the minimum eigenvalue with threshold $0.01$. Budgets are 500 evaluations in 50D and 1000 in 100D. Curves show the median over 10 independent runs and shaded regions show the interquartile range. The adaptive variants remain competitive at $\gamma=0.9$ and recover substantial performance relative to fixed Iso-MAP at $\gamma=0.99$.}
\label{fig:BO_FIM}
\end{figure}

These results provide a simple proof of concept for information-adaptive surrogate flexibility. Fixed isotropy can be useful when the data do not support coordinatewise structure, but that restriction can become costly when anisotropy becomes learnable. BO-FIM shows that local information from the current design can be used to move between these two regimes. Developing a better-scaled and more generally calibrated switching rule is an interesting and natural direction for future work.

\end{document}